\definecolor{bleudefrance}{rgb}{0.19, 0.55, 0.91}
\definecolor{ao(english)}{rgb}{0.0, 0.5, 0.0}
\newcommand{\addcite}[0]{\ifthenelse{\boolean{showcomments}}
{\textcolor{purple}{(add cite(s)) }}{}}%
\newcommand{\addref}[0]{\ifthenelse{\boolean{showcomments}}
{\textcolor{purple}{(add ref) }}{}}%
\newcommand{\enrique}[1]{  \ifthenelse{\boolean{showcomments}}
{\todo[inline,color=bleudefrance]{Enrique: #1}}{}}
\newcommand{\rene}[1]{  \ifthenelse{\boolean{showcomments}}
{\todo[inline,color=cyan]{Ren\'e: #1}}{}}
\newcommand{\emmargin}[1]{\ifthenelse{\boolean{showcomments}}{\marginpar{\color{bleudefrance}\tiny EM: #1}}{}}
\newcommand{\hmmargin}[1]{\ifthenelse{\boolean{showcomments}}{\marginpar{\color{orange}\tiny HM: #1}}{}}
\newcommand{\hancheng}[1]{  \ifthenelse{\boolean{showcomments}}
{\todo[inline,color=orange]{Hancheng: #1}}{}}
\newcommand{\hl}[1]{\ifthenelse{\boolean{showcomments}}
{\textcolor{red}{#1}}{#1}}
\newcommand{\tr}{\mathrm{tr}}
\newcommand{\sign}{\mathrm{sign}}
\newcommand{\ddt}{\frac{\mathrm{d}}{\mathrm{d} t}}
\newcommand{\ts}{\textstyle}
\newcommand{\myparagraph}[1]{\noindent\textbf{#1}.}
\newtheorem{theorem}{Theorem}
\newtheorem{lemma}{Lemma}
\newtheorem{proposition}{Proposition}
\newtheorem{assumption}{Assumption}
\newtheorem{remark}{Remark}
\newtheorem*{claim}{Claim}
\title{Gradient Flow Converges to Neural Collapse Solutions}
\title{Neural Collapse under Gradient Flow on Shallow ReLU Networks for Orthogonally Separable Data}
\author{Hancheng Min\footnotemark[1]\thanks{Corrrespondance to Hancheng Min}\qquad Zhihui Zhu\footnotemark[2]\qquad Ren\'e Vidal\footnotemark[3]  \\ 
\!\!\footnotemark[1] \ Institute of Natural Sciences \& School of Mathematical Sciences, Shanghai Jiao Tong University \\
\footnotemark[2]\ \ Computer Science and Engineering,  Ohio State University \\
\footnotemark[3]\ \ Electrical and Systems Engineering \& 
Department of Radiology, University of Pennsylvania\\
\texttt{hanchmin@sjtu.edu.cn, zhu.3400@osu.edu, vidalr@upenn.edu}
}
\begin{document}

\maketitle

\begin{abstract}
Among many mysteries behind the success of deep networks lies the exceptional discriminative power of their learned representations as manifested by the intriguing Neural Collapse (NC) phenomenon, where simple feature structures emerge at the last layer of a trained neural network. Prior works on the theoretical understandings of NC have focused on analyzing the optimization landscape of matrix-factorization-like problems by considering the last-layer features as unconstrained free optimization variables and showing that their global minima exhibit NC. In this paper, we show that gradient flow on a two-layer ReLU network for classifying orthogonally separable data provably exhibits NC, thereby advancing prior results in two ways: First, we relax the assumption of unconstrained features, showing the effect of data structure and nonlinear activations on NC characterizations. Second, we~reveal~the~role of the implicit bias of the training dynamics in facilitating the emergence of NC.  
\end{abstract}

\begin{wrapfigure}{r}{0.26\textwidth}
  \centering
  \vspace{-3mm}
  \includegraphics[width=\linewidth]{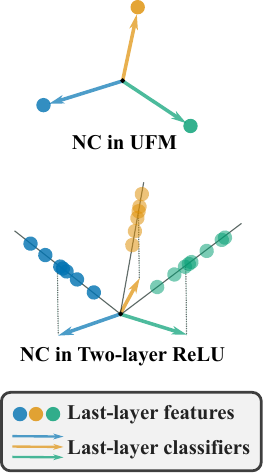}
  \caption{Visualization of the NC phenomenon.}
  \label{fig:nc}
  \vspace{-14mm}
\end{wrapfigure}

\section{Introduction}
Among many mysteries behind the success of deep learning lies the exceptional discriminative power of neural networks as manifested by the intriguing \emph{Neural Collapse} (NC) phenomenon~\citep{papyan2020prevalence}, where simple feature structures emerge in the last layer of a trained network. The NC phenomenon is typically characterized by the following three properties (see the top plot in Figure \ref{fig:nc}):
\begin{enumerate}[leftmargin=*, itemsep=0cm]
    \item \emph{Intra-class variability collapse}: The last-layer feature vectors of the data from the same class collapse into a singleton;
    \item \emph{Maximal separation of the class means}: The \emph{class means}, i.e., the mean feature vectors for each class, are maximally separated;
    \item \emph{Self-duality}: The classifier weights align with the class means.
\end{enumerate}
Prior works on the theoretical understandings of NC have focused on analyzing the optimization landscape of matrix factorization-like problems by considering the last-layer features as unconstrained free optimization variables~\citep{mixon2022neural,ji2021unconstrained,zhu2021geometric,fang2021exploring,zhou2022optimization,tirer2022extended,sukenik2023deep,jiang2024generalized}, showing their global minima exhibit NC. Extensions of this so-called \emph{Unconstrained Feature Model} (UFM) include adding nonlinearity and additional hidden layers~\citep{tirer2022extended,sukenik2023deep,rangamani2022neural,wang2023understanding}, studying local convergence of gradient-based optimization algorithms around global minima~\citep{mixon2022neural,xu2023dynamics,wang2022linear}. Until recently,~\citet{jacot2024wide} show convergence towards NC in wide networks. Therefore, the question of how training a neural network leads to NC has remained underexplored. 

At the center of this problem lies the fact that practical neural networks are typically overparameterized, i.e., their number of parameters is several orders of magnitude larger than the number of available training examples. As a consequence, there are infinitely many parameter choices that can perfectly fit the data, and most critically, they do not necessarily correspond to a trained network that exhibits NC. However, NC is observed even for networks trained without explicit regularization~\cite{ji2021unconstrained}, such as weight decay (often associated with the emergence of NC). This suggests a close relationship between the NC phenomenon and the \emph{implicit bias}~\cite{vardi2023implicit} of training algorithms.

\myparagraph{Paper contributions} In this paper, we investigate this relationship between NC and implicit bias by analyzing the dynamics of gradient flow (GF) on a two-layer ReLU network for classification problems, focusing on orthogonally separable data; i.e., any pair of input data with the same or different labels is positively or negatively correlated, respectively. 
We make the following contributions: 
\begin{enumerate}[leftmargin=*, itemsep=0cm]
    \item In Section~\ref{sec_main_nc} we present Theorem \ref{thm_nc_conv}, which shows that GF with small initialization provably converges to solutions that exhibit NC and provides precise NC characterizations for the trained network, as illustrated in the bottom plot of Figure \ref{fig:nc}. Compared to prior works for the unconstrained feature model, our results highlight the role of input data and ReLU nonlinearity in determining the NC characteristics. The former causes \emph{intra-class directional collapse} instead of collapse to a singleton, i.e., the feature vectors of the data from the same class collapse into a one-dimensional subspace. The latter leads to \emph{orthogonal class means}, instead of maximally separated class means, and a \emph{projected self-duality} where the classifier weights align with the projected class means (the projection is obtained by subtracting the global mean of all features).
    \item In Section \ref{ssec_pf_nc_bin}, through our proof of Theorem \ref{thm_nc_conv} for the case of binary classification, we explain how the implicit bias of GF facilitates the emergence of NC as training proceeds. Using results on the implicit bias of GF~\cite{maennel2018gradient,phuong2021inductive,boursier2022gradient,tsoy2024simplicity,min2023early,ji2019gradient}, we show that in the early phase of training the neurons' directional alignment~\cite{maennel2018gradient,phuong2021inductive,boursier2022gradient,tsoy2024simplicity,min2023early} with the training data makes inter-class features mutually orthogonal. Then, during the late phase of training, such inter-class separation subsequently promotes intra-class directional collapse due to the asymptotic max-margin bias~\cite{ji2019gradient} of GF. 
    \item In Section~\ref{ssec_pf_nc_multi}, in our proof sketch of Theorem \ref{thm_nc_conv} for the case of multi-class classification, we extend the aforementioned results on implicit bias for binary problems to multi-class problems. We make technical contributions in addressing new challenges that arise in the dynamic analysis due to the multi-dimensional network output and the cross-entropy loss.
\end{enumerate}
In summary, our work bridges the theoretical analysis of NC and implicit bias of GF 
by drawing explicit connections between the two. Moreover, we further advance the theoretical understandings for both by highlighting the role of input data and nonlinear activations in NC characterization and by addressing the challenges in analyzing the implicit bias of GF in multi-class problems.  

\myparagraph{Notations} We denote the Euclidean norm of a vector $\x$ by $\|\x\|$ and its $i$-th entry by $[\x]_i$, denote the inner product between vectors $\x$ and $\y$ by $\lan \x,\y\ran=\x^\top \y$ and write $\x\geq\bm{0}$ or $\x>\bm{0}$ if all the entries of $\x$ are non-negative or positive, respectively.
For an $n\by m$ matrix $\bm{A}$, we let $\|\bm{A}\|_F$ denote the Frobenius norm of $\bm{A}$. For a scalar-valued or matrix-valued function of time, $\bm{F}(t)$, we let $\frac{d}{dt}\bm{F}(t)$ denote its time derivative. We define $\one$ to be the vector of all ones, whose dimension will be clear from the context. We let $\bI_n$ be the identity matrix of order $n$ and sometimes drop the subscript if its order is clear from the context. We let $[N]:=\{1,\cdots,N\}$ and let $\mathbb{S}^{D-1}$ be the unit-sphere in $\mathbb{R}^D$. 

\section{Preliminaries}\label{sec_prem}
\myparagraph{Orthogonally separable data} We consider a classification problem on a dataset $\{\x_i,\y_i\}_{i=1}^n$ of size $n$, where each \emph{data point} $\x_i\in\mathbb{R}^D$ is associated with its \emph{label} $\y_i\in\mathbb{R}^{d_y}$, and the number of unique elements in $\{\y_i\}_{i=1}^n$ determines the number of classes $K$. Throughout this paper, we assume:
\begin{assumption}[Orthogonal separability]\label{assump_data}
    Any pair of data with the same (different) label(s) are positively (negatively) correlated, i.e., $\exists$ $0<\!\mu_s\!\leq 1$ and $0<\!\mu_d\!\leq \frac{1}{\sqrt{K-1}}$\footnote{No dataset can satisfy orthogonal separability with $\mu_d>\frac{1}{\sqrt{K-1}}$.} such that $\forall 1\leq i,j\leq n$,
    \be
        \ts \lan \frac{\x_i}{\|\x_i\|},\frac{\x_j}{\|\x_j\|}\ran \geq \mu_s,  \text{ if } 
        \y_i=\y_j,\qquad\quad \lan \frac{\x_i}{\|\x_i\|},\frac{\x_j}{\|\x_j\|}\ran \leq -\mu_d,  \text{ if } \y_i\neq \y_j.\label{eq_assump_data}
    \ee
\end{assumption}

\myparagraph{Two-layer ReLU network} We are interested in solving this classification problem by training a width-$h$ two-layer ReLU network $f(\ \cdot\ ;\btheta):\mathbb{R}^D\ra \mathbb{R}^{d_y}$,  parametrized by $\btheta:=\{\W,\V\}\in\mathbb{R}^{D\times h}\times \mathbb{R}^{d_y\times h}$ with $\W=[\w_1,\cdots,\w_h]$ and $\V=[\v_1,\cdots,\v_h]$, and defined as
\be
    \ts f(\x;\btheta)=\V\sigma\lp \W^\top \x\rp=\sum_{j=1}^h\v_j\sigma(\lan \w_j, \x\ran)\,,\label{eq_two_layer_relu}
\ee
where $\sigma(\cdot)=\max\{\ \cdot\ , 0\}$ is the element-wise ReLU activation function. We consider networks with width $h\geq K$; we call $(\w_j,\v_j)$ the \emph{$j$-th neuron pair} in the network, $\w_j$ its \emph{input neuron weight} and $\v_j$ its \emph{output neuron weight}. Moreover, we let $\bphi_{\btheta}(\x)=[\sigma(\lan \w_1, \x\ran), \sigma(\lan \w_2, \x\ran),\cdots, \sigma(\lan \w_h, \x\ran)]^\top\in\mathbb{R}^{h}$ be the \emph{last-layer feature} of $\x$, and $\V=[\v_1,\v_2,\cdots,\v_h]\in\mathbb{R}^{d_y\times h}$ denote the \emph{last-layer classifier}. Note that we have considered bias-free ReLU networks; see the remark in the Appendix \ref{app_rem_rel} for extending the results to networks with biases.  

\myparagraph{Gradient flow with small initialization} Given some $\ell:\mathbb{R}^{d_y}\by \mathbb{R}^{d_y}\ra \mathbb{R}_{\geq 0}$ such that $\ell(\y_i,\hat{\y_i})$ (expressions shown in later sections) measures the discrepancy between the actual label $\y_i$ and a predicted label $\hat{\y}_i$, we let $\mathcal{L}(\btheta)=\sum_{i=1}^n\ell(\y_i,f(\x_i;\btheta))$ be the \emph{loss function}. We train the network via \emph{gradient flow} (GF), which can be viewed as gradient descent with infinitesimal step size:
\be
    \ddt\btheta\in -\partial_{\btheta}\mathcal{L}(\btheta)\,,\label{eq_gf}
\ee
where $\partial_{\btheta}$ denotes the Clarke sub-differential~\cite{clarke1990optimization} operator. We study \emph{solutions} (or \emph{trajectories}) $\btheta(t),\ t\geq 0$ that satisfies \eqref{eq_gf} almost everywhere. We assume that the initialization $\btheta(0)$ satisfies
\begin{assumption}[$\epsilon$-small and balanced initialization]\label{assump_init}
    The initialization $\btheta(0)\!=\!\{\w_j(0),\!\v_j(0)\}_{j=1}^h$ satisfies the following: there exists an \textbf{initialization shape} $\{\w_{j0},\v_{j0}\}_{j=1}^h$ with $\w_{j0},\v_{j0}\neq \bm{0},\forall j$ and an \textbf{initialization scale} $\epsilon>0$ such that $\forall j$, $\w_j(0)=\epsilon \w_{j0},\ \v_j(0)=\epsilon \v_{j0},\ \|\w_{j0}\|=\|\v_{j0}\|$.
\end{assumption}
Aside from the two assumptions, we will introduce additional assumptions in different training scenarios when their respective settings become clear. For now, we shall remark on these two.
\begin{remark}
    While the data assumption is strong, there are two main reasons for considering it:
    First, we investigate NC in shallow networks, whose single hidden layer has limited expressive power of collapsing features, thus one shall study more structured data, as also noted by~\citet{hong2024beyond}. Second, as it will become clearer in Section \ref{sec_nc_pf}, the emergence of NC is closely related to the asymptotic convergence of the network weights, whose precise characterization is limited to cases with structurally simple data~\cite{boursier2022gradient,min2023early,chistikov2023learning}. Nonetheless, as we show in Section \ref{sec_experiment}, simple real data satisfies orthogonal separability approximately, leading to NC characters that match our theorem.
\end{remark}
\begin{remark}
    Under the assumption of balanced initialization, we have $\|\w_j(0)\|=\|\v_j(0)\|,\forall j$, and this balance is maintained throughout the GF trajectories, i.e., $\|\w_j(t)\|=\|\v_j(t)\|,\forall t,\forall j$~\citep{Du&Lee}. This assumption of balanced initialization has been common in prior works of this type~\citep{boursier2022gradient,min2023early,chistikov2023learning} for the sake of tractable analysis. Our experiments in Section \ref{sec_experiment} do not require balanced initialization.
\end{remark}    

\section{Main result: Neural Collapse under GF on Two-layer ReLU Networks}\label{sec_main_nc}
Our main result shows that under small initialization, with some additional assumptions on the initialization shape, GF provably converges to neural collapse solutions on orthogonally separable data. Our results are presented for both binary classification and multi-class classification problems:
\begin{itemize}[leftmargin=*, itemsep=0cm]
    \item \textbf{Case one: Binary classification:} We consider binary $(K=2)$ data with scalar $\pm 1$ labels, i.e. $d_y=1$ and $y_i\in\{-1,+1\},\forall i$. Accordingly, the two-layer ReLU network $f(\x;\btheta)$ has a scalar output $\hat{y}$. The loss function can be either the exponential loss $\ell(y,\hat{y})=\exp(-y\hat{y})$ or the logistic loss $\ell(y,\hat{y})=2\log(1+\exp(-y\hat{y}))$. For this case, we use plain font to suggest that label $y$ and network output $\hat{y}=f(\x;\btheta)$ are scalars. Moreover, we define the index sets $\mathcal{I}_+:=\{i:y_i=+1\}$ and $\mathcal{I}_-:=\{i:y_i=-1\}$ for $\pm 1$-class data respectively.
    \item \textbf{Case two: Multi-class classification:} We consider multi-class $(K>2)$ data with one-hot labels, i.e., $d_y=K$, and $\y_i\in \{\e_1,\cdots,\e_K\}$, where $\e_k$ is the $k$-th column of the identity matrix $\bI_K$. Accordingly, the two-layer ReLU network has its output $\hat{\y}=f(\x;\btheta)\in\mathbb{R}^K$. The loss function is the Cross-Entropy (CE) loss $\ell(\y,\hat{\y})=-\sum_{k=1}^K[\y]_k\log\frac{\exp([\hat{\y}]_k)[\y]_k}{\sum_{l=1}^K\exp([\hat{\y}]_l)}$. Moreover, we define the index sets $\mathcal{I}_k:=\{i:\y_i=\e_k\},\forall k\in[K]$ for data from each class. 
\end{itemize}
\myparagraph{Main result} Our main theorem follows. Note that our theorem \textit{requires additional assumptions on the data and initialization shape that vary depending on the case}, thus we feel it is better to introduce and explain them alongside technical discussions on the convergence analysis in later sections. 
\begin{theorem}[NC of GF on Two-layer ReLU Networks]\label{thm_nc_conv}
    Given orthogonally separable data (Assumption \ref{assump_data}), $\epsilon$-small and balanced initialization (Assumption \ref{assump_init}) for a sufficiently small $\epsilon$, and some additional case-dependent assumptions on the data and initialization shape, the limit $\bar{\btheta}:=\lim_{t\ra \infty}\frac{\btheta(t)}{\|\btheta(t)\|_F}$ exists for any solution $\btheta(t),t\geq 0$ to \eqref{eq_gf}. Moreover, for the limit $\bar{\btheta}=\{\bar{\W},\bar{\V}\}$, we have: $\exists \bar{\bphi}_k\in\mathbb{S}^{h-1},k\in\mathcal{K}$, where $\mathcal{K}:=\{+,-\}$ (\textbf{case one}) or $\mathcal{K}:=[K]$ (\textbf{case two}), such that
    \begin{enumerate}[leftmargin=*, itemsep=0cm]
        \item (\textbf{Intra-class directional collapse}) The last-layer features of the training data satisfy that
        \be\label{eq:var-collapse}
            \ts\bphi_{\bar{\btheta}}(\x_i)=\lan s_k\u_{k},\x_i\ran\cdot \bar{\bphi}_k,\  \forall i\in\mathcal{I}_k, k\in\mathcal{K},\ s_k=\sqrt{\nicefrac{\gamma_k^{-1}}{(2\sum_{k\in\mathcal{K}}\gamma_k^{-1})}}\,;
        \ee
        where $\gamma_{k}=\max_{\u\in\mathbb{S}^{D-1}}\min_{i\in\mathcal{I}_k}\lan \x_i,\u\ran, k\in \mathcal{K}$ is the maximum margin achieved exclusively for class-$k$ data and $\u_k$ is the corresponding max-margin direction; 
        \item (\textbf{Orthogonal class means}) The class means directions $\bar{\bphi}_k,k\in\mathcal{K}$ 
        satisfy that
        \be
        \bar{\bphi}_k\geq \bm{0},\quad \lan \bar{\bphi}_{k}, \bar{\bphi}_{k'}\ran=0,\quad\forall k,k'\in\mathcal{K}, k\neq k'\,;
        \label{eq:maximal-margin}\ee
        \item (\textbf{Projected self-duality}) The last-layer classifier satisfies that
        \be\label{eq:self-duality}
        \ts\bar{\V}\!=\!s_+\bar{\bphi}_{+}^\top\!-\!s_-\bar{\bphi}_{-}^\top (\textbf{case one}) \text{ or } \bar{\V}\!=\!\sqrt{\!\frac{K}{K-1}}\lp \bI\!-\!\frac{1}{K}\one\one^\top\!\rp\!\bmt s_1\bar\bphi_1,\cdots,s_K\bar\bphi_K\emt^\top\! (\textbf{case two}).
        \ee
    \end{enumerate}
\end{theorem}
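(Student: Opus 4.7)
The plan is to carry out a two-phase analysis of GF, combining the early directional alignment of neurons under small initialization with the asymptotic max-margin bias of GF on homogeneous networks. I expect that by some time $t_1=O(\log(1/\epsilon))$, the input neurons cluster onto a finite set of preferred directions dictated by the data, and from $t_1$ onward the network behaves essentially as a linear model whose parameters evolve toward a KKT point of a margin-maximization problem; the three NC properties can then be read off from the limit.

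\emph{Phase 1 (early alignment).} Building on the alignment framework of \cite{maennel2018gradient,boursier2022gradient,min2023early}, I would show that for orthogonally separable data the only stable directions for the input neurons $\w_j(t)/\|\w_j(t)\|$ are the per-class max-margin directions $\u_k$ appearing in the theorem statement. Assumption~\ref{assump_data} guarantees $\lan \u_k,\x_i\ran\geq \gamma_k>0$ for $\x_i\in\mathcal{I}_k$, while for $k'\neq k$ the KKT representation of $\u_{k'}$ as a non-negative combination of class-$k'$ data together with the negative correlation bound $-\mu_d$ forces $\lan \u_{k'},\x_i\ran<0$. In the small-norm regime, the direction in which $\w_j$ moves is therefore dominated by the class whose labels most strongly correlate with $\v_j(0)$, and this mechanism partitions the $h$ neurons into $K$ aligned groups; the case-dependent initialization-shape assumptions are exactly what guarantee that each cluster is non-empty after alignment.

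\emph{Phase 2 (max-margin convergence and NC).} Once the input neurons are aligned with $\{\u_k\}$, orthogonal separability \emph{freezes} the ReLU pattern: only the class-$k$ aligned neurons fire on $\x_i\in\mathcal{I}_k$. The loss then drives to zero and, by \cite{ji2019gradient,lyu2019gradient}, $\btheta(t)/\|\btheta(t)\|_F$ converges in direction to a KKT point of the margin-maximization problem on the reduced (now linear within each active cluster) network. Balance $\|\w_j(t)\|=\|\v_j(t)\|$ combined with this KKT stationarity pins down the per-class norms so that $s_k^2\propto \gamma_k^{-1}$ with all classes saturating their margin constraints simultaneously, yielding~\eqref{eq:var-collapse}. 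Orthogonality of class means~\eqref{eq:maximal-margin} is then immediate from the disjoint-support structure of the surviving neurons and ReLU nonnegativity. Self-duality~\eqref{eq:self-duality} follows because the KKT condition on $\v_j$ forces each class-$k$ output weight to be proportional to $\sign(y)$ (binary) or $\e_k$ (multi-class) with matched norm; the projection $\bI-\tfrac{1}{K}\one\one^\top$ in the multi-class case arises from CE's invariance under adding a common constant to all logits, so that the asymptotic direction of $\bar{\V}$ lies in the orthogonal complement of $\one$.

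\emph{Main obstacles.} The hardest step is Phase 1 under the multi-class CE loss: the log-sum-exp coupling of the $K$ output coordinates breaks the clean sample-by-sample gradient structure exploited in the binary analyses of \cite{boursier2022gradient,min2023early}. I would need to track how the softmax residuals $[\y_i]_k-\mathrm{softmax}(f(\x_i;\btheta))_k$ jointly steer each input neuron toward a specific $\u_k$, and handle those neurons whose initial $\v_j(0)$ does not strongly favor any single $\e_k$ by showing their norms remain negligible throughout the early phase. A secondary difficulty is verifying that the activation pattern, once established at $t_1$, remains frozen for all later times; this should follow from a perturbation argument that quantitatively exploits the strict positivity of $\mu_s$ and $\mu_d$ in Assumption~\ref{assump_data}, but the bookkeeping between the alignment error at $t_1$ and the subsequent growth of $\|\btheta(t)\|_F$ is the main technical burden.
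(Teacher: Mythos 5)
Your two-phase architecture (early neuron alignment followed by late-time max-margin bias) is the paper's skeleton, and you correctly identify the multi-class cross-entropy coupling and the persistence of the frozen activation pattern as the hard steps. There is, however, a factual error in your Phase 1 that would cause the argument to break if carried out as written: the stable directions of the small-initialization alignment dynamics are \emph{not} the per-class max-margin directions $\u_k$. With $\|\w_j\|=O(\epsilon^{1/2})$, the normalized flow $\ddt\tfrac{\w_j}{\|\w_j\|}\approx \sign(v_j)\Pi_{\w_j^\perp}\sum_i\xi_{ij}\x_i y_i$ (binary), or the coupled $(\w_j,\v_j)$ flow driven by $\sum_i\xi_{ij}\lan\tilde{\bm{E}}\y_i,\tfrac{\v_j}{\|\v_j\|}\ran\x_i$ (multi-class), has as its nontrivial attractors the class \emph{mean} directions $\bar{\x}_k = \sum_{i\in\mathcal{I}_k}\x_i/\|\sum_{i\in\mathcal{I}_k}\x_i\|$ (paired with pseudo-labels $\tilde{\e}_k$ in the multi-class case) --- a weighted \emph{sum} of data points, not a minimizer of a margin. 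The migration from $\bar{\x}_k$ to $\u_k$ occurs only in Phase 2: once inter-class separation holds and the loss decouples by class, GF on each $\{\W_k,\V_k\}$ is a linear-network problem, and the asymptotic max-margin bias of~\cite{ji2019gradient,lyu2019gradient} pulls the rank-one direction of $\bar{\W}_k$ onto $\u_k\bm{g}_k^\top$ as $\|\W_k\|_F\to\infty$. Your observation that a nonnegative combination of class-$k'$ data has negative inner product with class-$k$ data transfers unchanged to $\bar{\x}_{k'}$, so the Phase 1 \emph{conclusion} (inter-class separation) survives; but ``the only stable directions for input neurons are $\u_k$'' is a false statement about the alignment flow, and you would need to restate Phase 1 around the class means.

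Secondarily, your plan to handle neurons with ambivalent $\v_j(0)$ by showing their norms remain negligible is a different (and potentially stronger) strategy than the paper's, which contains no such argument. The paper instead imposes Assumption~4 (semi-local initialization), placing \emph{every} neuron inside an explicit invariant subset of the region of attraction of one $(\bar{\x}_k,\tilde{\e}_k)$ pair, and explicitly flags the lack of a negligible-norm argument as a limitation. Pursuing your version would require a quantitative rate comparison between aligned and unaligned neuron growth, which is nontrivial because the multi-class alignment flow is a highly nonlinear Riemannian flow on $\mathbb{S}^{D-1}\times\mathbb{S}^{K-1}$ whose boundary between basins of attraction the paper does not characterize. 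Also note that in Phase 2 the per-class max-margin KKT alone only fixes rays; the paper invokes the global max-margin KKT (via~\cite{lyu2019gradient}) to pin down the relative scales $s_k$, a detail worth making explicit if you want $s_k^2\propto\gamma_k^{-1}$ to come out of your argument rather than being asserted.
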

Note that GF on positively homogeneous networks with classification losses drives the network weights to diverge to infinity~\cite{ji2020directional,lyu2019gradient}. It suffices to study the properties of the asymptotic weight direction $\bar{\btheta}$ since we have $f(\cdot;\btheta)=\|\btheta\|_F^2 f(\cdot;\frac{\btheta}{\|\btheta\|_F})$ due to the positive homogeneity of $f$ w.r.t. $\btheta$. The following remarks compare the NC characterizations in Theorem \ref{thm_nc_conv} with those in prior works.

\myparagraph{NC in two-layer ReLU} Theorem \ref{thm_nc_conv} shows the following NC characters at the late stage of training:
\begin{itemize}[leftmargin=*,itemsep=0cm]
     \item (\textbf{\emph{Intra-class directional collapse}}) 
Unlike previous works \cite{mixon2022neural,ji2021unconstrained,zhu2021geometric,fang2021exploring,zhou2022optimization,tirer2022extended,sukenik2023deep}, which study unconstrained feature models and show that features collapse to class means with equal length, our work addresses a more realistic and challenging setting involving input data. In this setting, the limited expressiveness of two-layer ReLU networks may prevent exact collapse to a singleton. Nevertheless, the result in \eqref{eq:var-collapse} shows a {\it direction collapse} in the sense that all data points in the $k$-th class $\mathcal{I}_k$ have their last-layer features $\phi_{\bar\btheta}(\x_i)$ collapse into a one-dimensional subspace spanned by $\bar\bphi_k$, though the features may have varying lengths. Consequently, the intra-class variability at the last layer is determined by the variability of projections $\{\lan s_k\u_{k},\x_i\ran\}_i$, a significant reduction compared to the variability of the original data $\{\x_i\}_i$. Moreover, if the features are normalized to unit norm  (e.g., by applying RMSnorm \cite{zhang2019root}), they collapse exactly to their corresponding class means, shedding light on the role of the normalization layer in the neural collapse phenomenon. 

    \item (\textbf{\emph{Orthogonal class means}}) The result \eqref{eq:maximal-margin} suggests that the class-mean features are orthogonal to each other, forming a non-negative orthogonal frame when normalized.  The orthogonal structure, rather than a simplex Equiangular Tight Frame (simplex ETF)\footnote{A $K$-simplex ETF in $\mathbb{R}^h$ is a collection of points specified by the columns of 
    $\tilde{\bm{E}} = \sqrt{\frac{K}{K-1}} \bm{P}\big(\bI-\frac{1}{k}\one\one^\top\big)$,
    where $\bm{P}\in\mathbb{R}^{h\times K}$ and $\bm{P}^\top \bm{P}=\bI$.}, arises because the features are always non-negative due to ReLU---but any orthogonal frame can be transformed into a simplex ETF by removing its global mean. This finding aligns with results from the unconstrained features model using ReLU as the activation \cite{tirer2022extended}. 

    \item (\textbf{\emph{Projected self-duality}}) In the case of binary classification, the classifier $\bar{\V}$ converges to $s_+\bar{\bphi}_{+}-\s_-\bar{\bphi}_{-}$, which yields a maximum margin, as we will show in Section \ref{ssec_pf_nc_bin}. For the case of multi-class classification, \eqref{eq:self-duality} implies that $\bar \V \bar \V^\top = \frac{K}{K-1}\lp \bI-\frac{1}{K}\one\one^\top\rp \bar{\bm \Phi}\bar{\bm \Phi}^\top \lp \bI-\frac{1}{K}\one\one^\top\rp$, where $\bar {\bm\Phi} = \bmt s_1\bar\bphi_1,\cdots,s_K\bar\bphi_K\emt$. Since $\bar{\bm \Phi}\bar{\bm \Phi}^\top$ is a diagonal matrix with positive diagonals, $\bar \V$ forms a (scaled) simplex ETF, thereby achieving maximum margin. In particular, when the diagonal scales $s_k,k\in\mathcal{K}$ are all equal, $\bar \V$ becomes an exact simplex ETF, and each classifier converges to the corresponding \emph{projected class mean} (projection is obtained by subtracting the global mean), up to a scaling factor---achieving self-duality between features and classifiers weights\cite{papyan2020prevalence}.
\end{itemize}
\myparagraph{Convergence of GF/GD to NC} Prior works on the convergence of gradient-based methods towards NC consider the mean squared loss~\cite{mixon2022neural,xu2023dynamics,wang2022linear,jacot2024wide}; Besides, additional conditions such as initialization close to a global optimum~\cite{wang2022linear}, weight decay regularization~\cite{mixon2022neural,wang2022linear,jacot2024wide}, or large width~\cite{jacot2024wide} are needed. Compared with these works, we study the convergence under the cross-entropy loss without explicit regularization or width-overparametrization, showing that NC happens under a broader class of problems. Moreover, our results highlight the role of implicit bias of the training algorithm, which we shall discuss next.
\section{Detailed Discussions: Connecting Neural Collapse with Implicit Bias of GF}\label{sec_nc_pf}

\subsection{Proof of Neural Collapse in Binary Classification}\label{ssec_pf_nc_bin}
In this section, we provide a proof of Theorem 1 for binary classification of orthogonally separable data. Recall that $\mathcal{I}_+$ and $\mathcal{I}_-$ denote the index sets for data with positive and negative labels, respectively. Let $\mathcal{N}_+(t):=\{j\in [h]: \sign(v_j(t))=+1\}$ and $\mathcal{N}_-(t):=\{j\in [h]: \sign(v_j(t))=-1\}$ denote the index set of neurons whose last-layer weights $v_j(t)$ at time $t$ has positive and negative signs, respectively. Under Assumption \ref{assump_init}, we have that $\sign(v_j(t))=\sign(v_j(0))$~\cite{boursier2022gradient,min2023early}, thus $\mathcal{N}_+(t)\equiv\mathcal{N}_+(0), \mathcal{N}_-(t)\equiv\mathcal{N}_-(0), \forall t$, and we conveniently let $\mathcal{N}_+:=\mathcal{N}_+(0)$ and $\mathcal{N}_-:=\mathcal{N}_-(0)$. 

\myparagraph{Alignment phase of the GF} During the early phase of training, often referred to as \emph{alignment phase}, several works~\cite{maennel2018gradient,phuong2021inductive,boursier2022gradient,min2023early,chistikov2023learning,boursier2024early,wang2023understanding,kumar2024directional} have shown that the norm of the neuron weights, which is initially of scale $\mathcal{O}(\epsilon)$, remains small (of scale $\mathcal{O}(\epsilon^{1/2})$) for an extended period of time of length $\Theta(\log\frac{1}{\epsilon})$. As a result, one focuses on understanding the directional dynamics of the input neuron weights during this phase, which can be approximated as follows $\forall j\in[h]$:
\be
    \ts\ddt\frac{\w_j}{\|\w_j\|}=\sign(v_j)\Pi_{\w_j^\perp}\lp\sum_{i=1}^n\xi_{ij}\x_i y_i +\mathcal{O}(\epsilon)\rp\,,\text{ for some }\xi_{ij}\in \left.\partial_z\sigma(z)\rvt_{z=\lan\x_i,\w_j\ran}\,,\label{eq_align_dynm}
\ee
where $\Pi_{\w_j^\perp}=\big( \bI-\nicefrac{\w_j\w_j^\top}{\|\w_j\|^2}\big)$ defines the projection onto the subspace orthogonal to $\w_j$. If one neglects the $\mathcal{O}(\epsilon)$ term, the dynamics $\ddt\frac{\w_j}{\|\w_j\|},j\in[h]$ are decoupled.
The dynamic behavior of $\frac{\w_j}{\|\w_j\|}$ critically depends on the stationary points of \eqref{eq_align_dynm}, which we shall address next.

The following discussions assume the $\mathcal{O}(\epsilon)$ error term is zero (only for the sake of the explanation here, the error terms are appropriately handled in the analyses). First of all,  the directions $\frac{\w_j}{\|\w_j\|}$ that render $\xi_{ij}=0,\forall i\in[n]$ are trivial stationary points of \eqref{eq_align_dynm}, and they form the ``dead region'' for the neuron as all the activations to the data are zero (as its name suggest, neuron weights within dead region have zero gradient thus receive no update along GF). Next, the stationary points with some $\xi_{ij}\neq0$ are often called \emph{extremal vectors}~\cite{maennel2018gradient,phuong2021inductive,boursier2024early}, and the analyses in the work of~\citet{phuong2021inductive,min2023early} have suggested that for binary orthogonally separable data the only extremal vectors are the class mean directions: $\bar{\x}_+$ and $\bar{\x}_-$. More importantly, for neurons with $j\in\mathcal{N}_+$, $\bar{\x}_+$ is an attractor, and $\bar{\x}_-$ is a repeller\footnote{Roughly speaking, an attractor or a repeller is a stationary point that has the flow around its neighborhood pointing towards or against it, respectively.} (the opposite for $j\in\mathcal{N}_-$). Therefore, by following \eqref{eq_align_dynm}, the neurons weights with $j\in\mathcal{N}_+$ either fall into the dead region, or converge in direction to the average direction $\bar{\x}_+$ of the positive class; while those with $j\in\mathcal{N}_-$ either fall into the dead region or converge to $\bar{\x}_-$.

\myparagraph{Transient analysis: inter-class separation via alignment dynamics of neurons} Based on the discussions above, the convergence analysis requires a non-degenerate initialization shape $\{\w_{j0}\}_{j=1}^h$ such that 1) \emph{no input neuron weight is initialized to align with the repeller for \eqref{eq_align_dynm}}, since moving away from the repeller can take a long time that cannot be quantified; and 2) \emph{there must exist at least one neuron weight per class that is guaranteed to converge to the average direction of that class}, avoiding the uninteresting case of all neuron weights entering the dead region. 
\begin{assumption}[Non-degenerate initialization]\label{assump_non_degen}
    Let $\bar{\x}_+=\frac{\x_+}{\|\x_+\|}$ and $\bar{\x}_-=\frac{\x_-}{\|\x_-\|}$, where $\x_+:=\sum_{i\in\mathcal{I}_+}\x_i$ and $\x_-:=\sum_{i\in\mathcal{I}_-}\x_i$. The initialization shape $\{\w_{j0}\}_{j=1}^h$ satisfies that $\mathcal{N}_+,\mathcal{N}_-\neq \emptyset$ and
    \begin{align}
        &\ts\max_{i\in\mathcal{I}_+,j\in\mathcal{N}_+}\lan \x_i,\w_{j0}\ran>0, \ \quad \max_{i\in\mathcal{I}_-,j\in\mathcal{N}_-}\lan \x_i,\w_{j0}\ran>0\,;\label{eq_assump_non_degen1}\\
        &\ts\max_{j\in\mathcal{N}_+}\big\langle \bar{\x}_-, \frac{\w_{j0}}{\|\w_{j0}\|}\big\rangle<1,\quad \max_{j\in\mathcal{N}_-}\big\langle \bar{\x}_+, \frac{\w_{j0}}{\|\w_{j0}\|}\big\rangle<1\,.\label{eq_assump_non_degen2}
    \end{align}
\end{assumption}
Given a non-degenerate initialization\footnote{~\citet{min2023early} has shown that the non-degeneracy is satisfied with high probability when the input weight shapes are randomly initialized.}, whenever the neuron weights converge to the vicinity of their respective attracting class averages, they stay close to the class averages for the rest of the training, leading to the following inter-class separation, due to the orthogonally separability of the data: 

\begin{figure}[t]
    \centering
    \begin{subfigure}[b]{0.24\textwidth}
        \includegraphics[width=\textwidth]{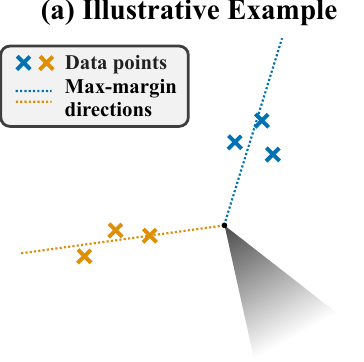}
    \end{subfigure}
    \hfill
    \begin{subfigure}[b]{0.24\textwidth}
        \includegraphics[width=\textwidth]{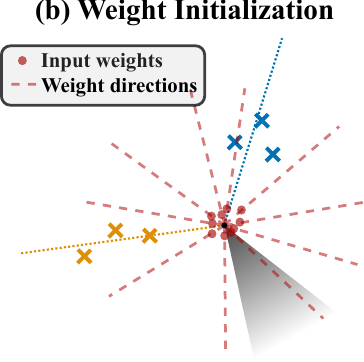}
    \end{subfigure}
    \hfill
    \begin{subfigure}[b]{0.24\textwidth}
        \includegraphics[width=\textwidth]{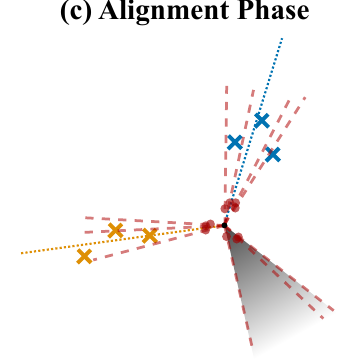}
    \end{subfigure}
    \hfill
    \begin{subfigure}[b]{0.24\textwidth}
        \includegraphics[width=\textwidth]{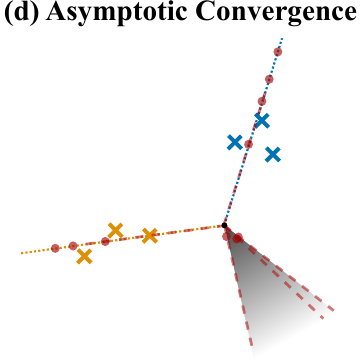}
    \end{subfigure}
    \caption{Convergence and implicit bias of GF in two-layer ReLU networks: (a) An example of orthogonally separable data (Assumption \ref{assump_data}), gray region indicates the ``dead region" for neuron weights: $\{\w: \lan \w, x_i\ran\leq 0, \forall i\in[n]\}$; (b) Weight initialization following Assumption \ref{assump_init}, input neuron weights have small norm and random directions; (c) During alignment phase, inter-class separation is achieved through directional alignment between input neuron weights and data points, as described in \eqref{eq_class_sep}; (d) Asymptotically, neuron weights diverge to infinity while their directions align with the class-wise max-margin directions, as described in \eqref{eq_asym_dir_conv}.}
    \label{fig:conv_gf}
\end{figure}
\begin{claim}[Inter-class separation via alignment, based the analyses from~\citet{phuong2021inductive,min2023early}] 
    Given orthogonally separable data (Assumption \ref{assump_data}), $\epsilon$-small, balanced and non-degenerate initialization (Assumptions \ref{assump_init}, \ref{assump_non_degen}) for a sufficiently small $\epsilon$, for any solution $\btheta(t),t\geq 0$ to \eqref{eq_gf}, $\exists T^*$ and $\emptyset\neq \tilde{\mathcal{N}}_+\subseteq \mathcal{N}_+$ and  $\emptyset\neq\tilde{\mathcal{N}}_-\subseteq \mathcal{N}_-$ such that $\forall t\geq T^*$, we have
    \begin{align}
        &\W_+^\top(t)\x_i\!\begin{cases}
            > \bm{0},&\! \forall i\in\mathcal{I}_+\\
            \leq  \bm{0},&\! \forall i\in\mathcal{I}_-
        \end{cases},\quad \W_-(t)^\top\x_i\!\begin{cases}
            \leq  \bm{0},&\! \forall i\in\mathcal{I}_-\\
            > \bm{0},&\! \forall i\in\mathcal{I}_+
        \end{cases},\ \text{and }\ \W_c(t)^\top\x_i\leq \bm{0}, \forall i\!\in\![n]\,,\label{eq_class_sep}\\
        &\text{where } \W_+(t):=[\w_j(t)]_{j\in\tilde{\mathcal{N}}_+}, \W_-(t):=[\w_j(t)]_{j\in\tilde{\mathcal{N}}_-}, \text{ and } \W_c(t):=[\w_j(t)]_{j\in [h]-(\tilde{\mathcal{N}_+}\cup\tilde{\mathcal{N}}_-)}.\nonumber
    \end{align}
    As a result, the \textbf{inter-class separation} $\lan\bphi_{\btheta(t)}\!(\x_i),\bphi_{\btheta(t)}\!(\x_{i'})\ran\!=\!0,\forall i\!\in\!\mathcal{I}_+,i'\!\in\!\mathcal{I}_{-}$ holds $\forall t\geq T^*$.
\end{claim}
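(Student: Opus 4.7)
The plan is to combine the alignment-phase analysis developed in \citet{phuong2021inductive,min2023early} with the orthogonal-separability structure of the data to pin down sign patterns of the inner products $\lan \w_j(t),\x_i\ran$, and then argue that those patterns persist for all subsequent time. The central observation driving the proof is that, during the alignment phase, each input neuron either enters the dead region or aligns directionally with $\bar{\x}_+$ (if $j\in\mathcal{N}_+$) or $\bar{\x}_-$ (if $j\in\mathcal{N}_-$); once this configuration is reached, orthogonal separability forces the sign pattern in \eqref{eq_class_sep}, and sign-preservation plus the structure of the sub-gradient keep it locked forever.

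My first step is to analyze the decoupled ODE obtained from \eqref{eq_align_dynm} by dropping the $\mathcal{O}(\epsilon)$ term and characterize its stationary points on $\mathbb{S}^{D-1}$: the dead cone $\{\w:\lan \w,\x_i\ran\leq 0,\forall i\}$ and the extremal vectors. Using Assumption \ref{assump_data} together with $\sign(v_j(t))\equiv\sign(v_j(0))$, I would show that for $j\in\mathcal{N}_+$ the only non-dead extremal vectors are $\bar{\x}_+$ (an attractor) and $\bar{\x}_-$ (a repeller), and symmetrically for $j\in\mathcal{N}_-$; this essentially recovers the extremal-vector analysis of \citet{phuong2021inductive,min2023early}. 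Second, I would use Assumption \ref{assump_non_degen}: \eqref{eq_assump_non_degen1} guarantees that at least one neuron in each $\mathcal{N}_\pm$ initially lies in the basin of attraction of the correct class mean, and \eqref{eq_assump_non_degen2} rules out initializations on the repeller, so the flow moves each neuron either into the dead region or arbitrarily close to the attracting class mean within time $T^*=\Theta(\log\tfrac{1}{\epsilon})$.

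Third, I would define $\tilde{\mathcal{N}}_+\subseteq \mathcal{N}_+$ as the (non-empty) set of neurons whose direction lies in a small neighborhood of $\bar{\x}_+$ at time $T^*$, and $\tilde{\mathcal{N}}_-$ analogously. For $j\in\tilde{\mathcal{N}}_+$, orthogonal separability immediately yields $\lan \w_j,\x_i\ran>0$ for $i\in\mathcal{I}_+$ and $\lan \w_j,\x_i\ran< 0$ for $i\in\mathcal{I}_-$; the remaining neurons, indexed by $[h]-(\tilde{\mathcal{N}}_+\cup\tilde{\mathcal{N}}_-)$, are in the dead region and so satisfy $\W_c^\top \x_i\leq \bm{0}$ for all $i$. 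The final step is invariance for all $t\geq T^*$: once the activation patterns are as stated, the sub-gradient for $j\in\tilde{\mathcal{N}}_+$ reduces to $\sign(v_j)\sum_{i\in\mathcal{I}_+}\alpha_i \x_i$ with $\alpha_i\geq 0$, hence $\w_j$ moves only within the convex cone of $\{\x_i\}_{i\in\mathcal{I}_+}$, which by Assumption \ref{assump_data} is contained in the half-spaces $\{\lan\cdot,\x_i\ran>0,\forall i\in\mathcal{I}_+\}\cap\{\lan\cdot,\x_i\ran< 0,\forall i\in\mathcal{I}_-\}$; dead neurons have zero sub-gradient and stay dead. The inter-class separation $\lan \bphi_{\btheta(t)}(\x_i),\bphi_{\btheta(t)}(\x_{i'})\ran=0$ then follows because the supports of $\bphi(\x_i)$ and $\bphi(\x_{i'})$ are contained in the disjoint index sets $\tilde{\mathcal{N}}_+$ and $\tilde{\mathcal{N}}_-$.

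The main obstacle is rigorously controlling the $\mathcal{O}(\epsilon)$ perturbation of the alignment dynamics over the logarithmic time horizon, which requires Grönwall-type estimates ensuring the true trajectory tracks the decoupled sphere flow. A second subtlety is the post-alignment behavior, where neuron norms grow and one must rule out any dead neuron ``reactivating'': this is handled by noting that the sub-gradient of an inactive neuron is identically zero, so its direction never changes after entering the dead region. A technical but routine point is that the cone-invariance argument in the last step must be promoted from the Clarke sub-differential to all measurable selections, which is standard given the piecewise-smooth structure of the ReLU loss landscape.
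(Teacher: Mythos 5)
Your sketch follows the same route the paper takes, which for this binary-case claim is essentially a citation to the alignment analyses of~\citet{phuong2021inductive,min2023early}: identify the attractor $\bar{\x}_\pm$/repeller $\bar{\x}_\mp$/dead-cone structure of the decoupled sphere flow, use the non-degeneracy conditions \eqref{eq_assump_non_degen1}--\eqref{eq_assump_non_degen2} to guarantee at least one neuron per sign class avoids the repeller and reaches the attractor, and then argue sign-pattern invariance after $T^*$. Your cone-invariance step is the right closing move and it is sound: for $j\in\tilde{\mathcal{N}}_+$ with $v_j>0$ and activation pattern $\{\x_i\}_{i\in\mathcal{I}_+}$, the velocity $\dot\w_j$ is a strictly positive combination of $\{\x_i\}_{i\in\mathcal{I}_+}$, and orthogonal separability makes every $\lan\w_j,\x_i\ran$, $i\in\mathcal{I}_+$, strictly increase while every $\lan\w_j,\x_{i'}\ran$, $i'\in\mathcal{I}_-$, strictly decrease, so the pattern is forward-invariant and never re-enters the ReLU nonsmooth set.

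One step is stated too loosely and is worth flagging: \emph{``the sub-gradient of an inactive neuron is identically zero, so its direction never changes after entering the dead region.''} This is true only in the \emph{interior} of the dead cone, where $\lan\w_j,\x_i\ran<0$ for all $i$. On the boundary, some $\lan\w_j,\x_i\ran=0$ and the corresponding $\xi_{ij}\in[0,1]$ is a free selection in the Clarke sub-differential; for $i\in\mathcal{I}_+$ and $j\in\mathcal{N}_+$ (so $v_j>0$, $y_i=+1$), a positive choice of $\xi_{ij}$ contributes a term proportional to $+\x_i$ and can push $\w_j$ out of the dead region, so ``any solution'' of the differential inclusion need not keep such a neuron dead. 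To close this you need either a refinement of the alignment phase showing that non-converging neurons enter the \emph{strict interior} of the dead cone by time $T^*$ (so the boundary case never arises), or a separate argument that boundary-touching neurons, if they reactivate, immediately enter the positive cone and can be absorbed into $\tilde{\mathcal{N}}_+$ with a slightly later $T^*$. The paper sidesteps this by attributing the entire claim to the cited prior works, where it is handled; your appeal to ``measurable selections'' does not by itself resolve it.
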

We refer to Figure \ref{fig:conv_gf}c for an illustration of the weight-data alignment that achieves inter-class separation. As shown in this claim, all neuron weights with index $j$ outside $\tilde{\mathcal{N}}_+\cup\tilde{\mathcal{N}}_-$ will stay within the dead region after $T^*$ and can be disregarded in the subsequent analysis. Therefore, without loss of generality, we assume until the end of Section \ref{ssec_pf_nc_bin} that $\tilde{\mathcal{N}}_+\cup \tilde{\mathcal{N}}_-=[h]$ and reorder the indices such that $\tilde{\mathcal{N}}_+=[|\tilde{\mathcal{N}}_+|]$ and $\tilde{\mathcal{N}}_-=[h]-[|\tilde{\mathcal{N}}_+|]$. To see that \eqref{eq_class_sep} indeed suggests inter-class separation characterized by last-layer features being mutually orthogonal, notice that $\forall t\geq T^*$, we have
\ben
\bphi_{\btheta(t)}(\x_i)  \!\overset{(\forall  i\in\mathcal{I}_+)}{=}\!\bmt \sigma(\W_+^\top(t) \x_i)\\ \sigma(\W_-^\top(t) \x_i)\emt\!\!=\!\!\bmt \W_+^\top(t) \x_i\\ 
         \bzero\emt\!,\ \bphi_{\btheta(t)}(\x_i) \! \overset{(\forall  i\in\mathcal{I}_+)}{=}\!\bmt \sigma(\W_+^\top(t) \x_i)\\ \sigma(\W_-^\top(t) \x_i)\emt\!\!=\!\!\bmt \bzero\\ \W_-^\top(t) \x_i 
         \emt.
\een

\myparagraph{Asymptotic analysis: intra-class directional collapse and self-duality via max-margin bias} Now with the inter-class separation described in \eqref{eq_class_sep}, we are ready to study the asymptotic convergence of the weights through the lens of max-margin bias. In particular, notice that given inter-class separation \eqref{eq_class_sep}, we rewrite the loss as (where we define $\V_+:=[\v_j(t)]_{j\in\mathcal{N}_+}$ and $\V_-:=[\v_j(t)]_{j\in\mathcal{N}_-}$):
\be
    \ts\mathcal{L}\lp\btheta\rp=\sum_{i=1}^n\mathcal{\ell}\lp \y_i,\f(\x_i;\btheta)\rp=\sum_{i\in\mathcal{I}_+}\mathcal{\ell}\big( \y_i,\V_+\W_+^\top \x_i\big)+\sum_{i\in\mathcal{I}_-}\mathcal{\ell}\big( \y_i,\V_-\W_-^\top \x_i\big)\,.\label{eq_loss_sep}
\ee
This suggests that after $T^*$, the GF on $\{\W_+, \V_+\}$ is fully decoupled from that on $\{\W_-, \V_-\}$, allowing one to study them separately. Moreover, each of the flows corresponds to training a \emph{two-layer linear network} on positively correlated data with the same label, whose asymptotic convergence of the weight directions has been characterized in the work of~\citet{phuong2021inductive}, mainly based on the analysis from~\citet{ji2019gradient} on the max-margin bias of GF in linear networks. 

To be precise, for the GF on losses of the form $\sum_{i\in\mathcal{I}_+}\ell(\y_i,\V\W^\top\x_i\}$,~\citet{ji2019gradient} show that as time $t\ra\infty$, both $\V$ and $\W^\top$ diverge to infinity and their limiting directions exist. In the case of \eqref{eq_loss_sep}, this means $\btheta=\{\W_+,\V_+,\W_-,\V_-\}$ diverge to infinity, and $\lim_{t\ra \infty}\frac{\btheta(t)}{\|\btheta(t)\|}:=\bar{\btheta}=\{\bar{\W}_+,\bar{\V}_+,\bar{\W}_-,\bar{\V}_-\}$. Moreover, they show that the limiting directions satisfies the following alignment condition $\bar{\V}_+\bar{\W}_+^\top\propto \u_+$, the class-wise max-margin direction we have defined in Theorem \ref{thm_nc_conv} and balancedness condition $\bar{\V}_+^\top\bar{\V}_+=\bar{\W}_+^\top\bar{\W}_+$ (similar for $\bar{\W}_-,\bar{\V}_-$, with $\u_+$ replaced by $\u_-$). It was first found in the work of~\citet{phuong2021inductive} that the only time these two conditions are satisfied is when $\bar{\W}_+^\top$ has rank $1$ and the top left and right singular vectors align with $\bar{\V}_+$ and $\u_+$, respectively. We show that this necessarily implies NC. The formal results are:
\begin{claim}[Directional collapse and self-duality via max-margin bias, based on the analyses from~\citet{phuong2021inductive,ji2019gradient}]
    Given Assumptions \ref{assump_data},\ref{assump_init},\&\ref{assump_non_degen}, the limit $\bar{\btheta}:=\lim_{t\ra \infty}\frac{\btheta(t)}{\|\btheta(t)\|_F}$ exists for any solution $\btheta(t),t\geq 0$ to \eqref{eq_gf}. For the limiting direction $\bar{\btheta}=\{\bar{\W}_+,\bar{\V}_+,\bar{\W}_-,\bar{\V}_-\}$, $\exists \bm{g}_+\in\mathbb{S}^{|\mathcal{N}_+|-1}, \bm{g}_-\in\mathbb{S}^{|\mathcal{N}_-|-1}$, such that 
    \begin{align}
        \ts\bar{\W}_+=   s_+\u_+ \bm{g}_+^\top,\ \bar{\V}_+=s_+\bm{g}_+^\top,\ \bar{\W}_-=   s_-\u_- \bm{g}_-^\top,\ \bar{\V}_-=-s_-\bm{g}_-^\top\,, \label{eq_asym_dir_conv}
    \end{align}
    where $s_+$ and $s_-$ are defined in Theorem \ref{thm_nc_conv}. As a result, we have the \textbf{intra-class directional collapse}
    \begin{align}
        \bphi_{\bar{\btheta}}(\x_i)  \!=\!\bmt \lan s_+\u_+, \x_i\ran \bm{g}_+\\  \bzero\emt\!=\!\lan s_+\u_+, \x_i\ran\bmt \bm{g}_+\\ \bzero\emt:= \lan s_+\u_+, \x_i\ran\cdot\bar{\bphi}_+, \forall  i\in\mathcal{I}_+\,,\\
        \bphi_{\bar{\btheta}}(\x_i)  \!=\!\bmt \bzero\\\lan s_-\u_-, \x_i\ran \bm{g}_-\emt \!=\!\lan s_-\u_-, \x_i\ran\bmt \bzero\\\bm{g}_-\emt:=\lan s_-\u_-, \x_i\ran\cdot\bar{\bphi}_- , \forall  i\in\mathcal{I}_-\,,
    \end{align}
    and the \textbf{self-duality} between the last-layer classifier weight and the last-layer feature
    \be
        \bar{\V}=\bmt \bar{\V}_+ & \bzero\emt + \bmt \bzero & \bar{\V}_- \emt=s_+\bmt \bm{g}_+^\top & \bzero\emt -s_-\bmt \bzero & \bm{g}_-^\top\emt = s_+\bar{\bphi}_+ - s_-\bar{\bphi}_-\,.
    \ee
\end{claim}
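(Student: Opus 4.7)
The plan is to leverage the inter-class separation established in the preceding claim to reduce each half of the problem (positive-class vs.\ negative-class) to a two-layer \emph{linear} network trained on positively correlated data with a common label, and then invoke the max-margin implicit bias of gradient flow to pin down the limiting directions. Concretely, after time $T^*$, \eqref{eq_class_sep} gives $\sigma(\W_+^\top(t)\x_i)=\W_+^\top(t)\x_i$ for $i\in\mathcal{I}_+$ and $\bzero$ for $i\in\mathcal{I}_-$, and symmetrically for $\W_-$, while the remaining neurons lie in the dead region and contribute nothing. Thus the loss decouples as in \eqref{eq_loss_sep} and the flow on $\{\W_+,\V_+\}$ is exactly gradient flow on a two-layer scalar-output linear network minimizing $\sum_{i\in\mathcal{I}_+}\ell(y_i,\V\W^\top\x_i)$, and analogously for the $\{\W_-,\V_-\}$ block.

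For each such decoupled linear sub-flow I would apply the directional-convergence and max-margin bias theorem of \citet{ji2019gradient}: the norm $\|(\W_+,\V_+)\|_F$ diverges and the normalized direction converges to a first-order KKT point of $\min\tfrac12(\|\W\|_F^2+\|\V\|_F^2)$ subject to $y_i\V\W^\top\x_i\geq 1$ for all $i\in\mathcal{I}_+$. Because balancedness is preserved along gradient flow, $\V_+^\top\V_+=\W_+^\top\W_+$ carries over to the limit. I would then apply the rank-one extraction argument of \citet{phuong2021inductive}: balancedness forces $\bar\W_+$ and $\bar\V_+$ to share right singular vectors and singular values, and since the max-$\ell_2$-margin hyperplane over positively correlated data is a single direction, the KKT product $\bar\V_+\bar\W_+^\top$ is rank one, which pins both factors to rank-one form. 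This yields $\bar\W_+=s_+\u_+\bm{g}_+^\top$ and $\bar\V_+=s_+\bm{g}_+^\top$ for some unit $\bm{g}_+$, with $\u_+$ the class-$+$ max-margin direction of Theorem \ref{thm_nc_conv} (the sign is $+$ since $y_i=+1$ on $\mathcal{I}_+$); the negative sub-flow analogously produces $\bar\W_-=s_-\u_-\bm{g}_-^\top$ and $\bar\V_-=-s_-\bm{g}_-^\top$.

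The scale $s_\pm$ then drops out by matching the KKT unit-margin constraint and the Frobenius normalization $\|\bar\btheta\|_F=1$. The effective classifier $\bar\V_+\bar\W_+^\top=s_+^2\u_+^\top$ achieves margin $s_+^2\gamma_+$ on class-$+$ data, so the KKT min-norm factorization requires $s_+^2\propto 1/\gamma_+$ (similarly $s_-^2\propto 1/\gamma_-$); the four rank-one blocks contribute $2(s_+^2+s_-^2)$ to $\|\bar\btheta\|_F^2$, and enforcing unit norm yields $s_\pm=\sqrt{\gamma_\pm^{-1}/(2(\gamma_+^{-1}+\gamma_-^{-1}))}$ as stated. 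Substituting these rank-one limits into the feature map immediately gives the intra-class directional collapse $\bphi_{\bar\btheta}(\x_i)=\langle s_k\u_k,\x_i\rangle\bar\bphi_k$ with the orthogonal, non-negative directions $\bar\bphi_+=[\bm{g}_+;\bzero]$ and $\bar\bphi_-=[\bzero;\bm{g}_-]$, while reading off $\bar\V$ produces the claimed self-duality $\bar\V=s_+\bar\bphi_+^\top-s_-\bar\bphi_-^\top$.

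The main obstacle I expect is the clean transfer of the linear-network max-margin theorem to the present two-phase setting. The Ji--Telgarsky result is typically stated for a flow that already sits in the small-loss regime triggering directional convergence, whereas here one must start it from the post-alignment state at time $T^*$; this forces one to verify that balancedness survives the alignment phase, that the loss at $T^*$ has already entered the small-loss regime, and that no neuron re-activates or changes sign after $T^*$ so the decoupling truly persists for all $t\geq T^*$. The rank-one extraction of \citet{phuong2021inductive} is the other delicate step, requiring that the positive-correlation geometry of each class be combined with the balanced-SVD structure to exclude higher-rank balanced KKT candidates.
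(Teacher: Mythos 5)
Your proposal follows essentially the same route as the paper: after $T^*$ the loss decouples into two independent two-layer linear flows on single-class data, each of which is handled by the directional-convergence and max-margin-bias result of Ji--Telgarsky plus balancedness, then the rank-one extraction of Phuong--Lampert pins each block to $s_\pm\u_\pm\bm{g}_\pm^\top$, and the scales $s_\pm$ are fixed by the global max-margin KKT conditions (the paper cites Lyu--Li for this last step; your margin-equalization plus Frobenius-normalization argument is exactly the content of that invocation). The caveats you flag at the end (persistence of decoupling and balancedness past $T^*$, and entry into the small-loss regime) are the same technical points the paper handles implicitly via the prior claim on inter-class separation, so the match is essentially complete.
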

Note that the scaling factors $s_+,s_-$ are determined based on the results in the work of~\citet{lyu2019gradient} that the limiting $\bar{\btheta}$ must satisfy another max-margin problem defined on the entire dataset.

\myparagraph{Connecting NC with implicit bias of GF} In summary, we bridge the theoretical analysis of NC and the implicit bias of GF closer by showing how the latter facilitates the emergence of NC along GF. Notably, the inter-class separation is achieved by the directional alignment of neuron weights thanks to the small initialization scale. This resonates with a large amount of work~\cite{saxe2014exact,gunasekar2017implicit,Gidel2019,woodworth2020kernel,luo2021phase,stoger2021small,jacot2021saddle,razin2022implicit,xu2025understanding,zhu2025how,kunin2025alternating} that identifies the small initialization as the active learning or feature learning regime, allowing simple weight and hidden feature structures to arise during the early phase of GF, which otherwise cannot be achieved if initialized in the so-called lazy regime~\cite{jacot2018neural,chizat2019lazy,kothapalli2025can}. Then, we have shown how the asymptotic max-margin bias promotes the intra-class directional collapse and self-duality after inter-class separation, where the key observation is that the max-margin bias often makes the weights asymptotically converge in direction to low-rank matrices, leading to low-dimensional projections that significantly reduce the variability within the input data. We note that prior work~\cite{ji2021unconstrained} studies the max-margin bias in UFM, while ours considers such bias in ReLU networks.

\subsection{Proof Sketch of Neural Collapse in Multi-class Classification}\label{ssec_pf_nc_multi}
As shown in the last section, the proof of the NC characterization for binary classification in Theorem~\ref{thm_nc_conv} 
follows from existing results on the implicit bias of GF~\cite{phuong2021inductive,min2023early,ji2019gradient}. However, to understand similar NC characterizations in the case of multi-class classification, one needs to extend the implicit bias analyses to multi-class problems, which prior work rarely does. In this section, we provide a proof sketch of Theorem \ref{thm_nc_conv} for multi-class classification, emphasizing the additional challenges it brings to the convergence analysis by considering a multi-dimensional network output and the cross-entropy loss, and discussing our contributions in addressing these challenges. 

\myparagraph{Weight alignment in multi-class problems} Recall that in binary problems, the directional dynamics are studied only for the input weights $\w_j,j\in[h]$, because the output weights $v_j$ are scalars whose sign (i.e. the ``direction'' of the scalar) remains the same as its initialization. However, for multi-class problems, each $\v_j$ becomes a $K$-dimensional vector, whose directional dynamics are non-trivial. Indeed, we show that (See Appendix \ref{app_ssec_neuron_align}) during the early phase of GF, we have
\begin{align}
    \ts\ddt \frac{\w_j}{\|\w_j\|}=\sqrt{\frac{K-1}{K}}\Pi_{\w_j^\perp}\lp \sum_{i=1}^n\xi_{ij}\lan \tilde{\bm{E}}\y_i,\frac{\v_j}{\|\v_j\|}\ran\x_i +\mathcal{O}(\epsilon)\rp\,,\label{eq_align_dym_multi_w}\\
    \ts\ddt \frac{\v_j}{\|\v_j\|}=\sqrt{\frac{K-1}{K}}\Pi_{\v_j}^\perp\lp \sum_{i=1}^n\xi_{ij}\lan \x_i,\frac{\w_j}{\|\w_j\|}\ran \tilde{\bm{E}}\y_i +\mathcal{O}(\epsilon)\rp\,,\label{eq_align_dym_multi_v}
\end{align}
where $\Pi_{\w_j^\perp}, \Pi_{\v_j}^\perp$ and $\xi_{ij}$ are defined similarly as in \eqref{eq_align_dynm}, and $\tilde{\bm{E}}=\sqrt{\frac{K}{K-1}}\lp \bI -\frac{1}{K}\one\one^\top\rp$. We let $\tilde{\e}_k$ be the $k$-th column of $\tilde{\bm{E}}$ and call it the \emph{pseudo-label} of class $k$, as opposed to the one-hot label $\e_k$.

From \eqref{eq_align_dym_multi_w}\eqref{eq_align_dym_multi_v} (still, we exclude the $\mathcal{O}(\epsilon)$ error terms for discussions), we see that although the dynamics $\{\ddt\frac{\w_j}{\|\w_j\|},\ddt\frac{\v_j}{\|\v_j\|}\},j\in[h]$ are decoupled among neuron pairs, the directional dynamics of each neuron pair, now concerning both input and output weights, are described by a Riemannian flow on $\mathbb{S}^{D-1}\times \mathbb{S}^{K-1}$ that is highly nonlinear (due to the $\xi_{ij}$) and has non-trivial interactions between the input and output weights. This is the major challenge to the convergence analysis of GF under small initialization for multi-class problems. 
For our purpose, we discuss the alignment of the neuron weights through the lens of stationary points.

Aside from trivial stationary points that correspond to the dead regions for the input neuron weight, $\{\bar{\x}_k,\tilde{\e}_k\},k\in[K]$, where $\bar{\x}_k=\nicefrac{\sum_{i\in\mathcal{I}_k}\x_i}{\|\sum_{i\in\mathcal{I}_k}\x_i\|}$, are attractors of \eqref{eq_align_dym_multi_w}\eqref{eq_align_dym_multi_v}. To give a rough explanation, assume $\frac{\v_j}{\|\v_j\|}=\tilde{\e}_k$, i.e. perfect alignment between output weights and the pseudo-label always holds, then one can write \eqref{eq_align_dym_multi_w} as
\ben
    \ts\ddt \frac{\w_j}{\|\w_j\|}=\sqrt{\frac{K-1}{K}}\Pi_{\w_j^\perp}\bigg( \sum_{i\in\mathcal{I}_k}\xi_{ij}\underbrace{\lan \tilde{\e}_k,\tilde{\e}_k\ran}_{=1}\x_i \bigg) + \sqrt{\frac{K-1}{K}}\Pi_{\w_j^\perp}\bigg( \sum_{k'\neq k}\sum_{i\in\mathcal{I}_{k'}}\xi_{ij}\underbrace{\lan \tilde{\e}_{k'},\tilde{\e}_{k}\ran}_{<0}\x_i \bigg)\,,
\een
resulting on a flow that pushes $\frac{\w_j}{\|\w_j\|}$ towards (against) the directions of data points in the $k$-th class (other classes), and eventually towards $\bar{\x}_k$ when sufficient alignment with the $k$-th class has led to $\xi_{ij}=\one_{i\in\mathcal{I}_k}$. Similarly, by assuming $\frac{\w_j}{\|\w_j\|}=\bar{\x}_k$, we can write \eqref{eq_align_dym_multi_v} as $\ddt \frac{\v_j}{\|\v_j\|}=\sqrt{\frac{K-1}{K}}\Pi_{\v_j}^\perp\big(\sum_{i\in\mathcal{I}_{k}}\lan \x_i,\bar{\x}_k\ran \tilde{\e}_k\big)$, pushing the output weight toward the pseudo-label $\tilde{\e}_k$. 

\myparagraph{Transient analysis: inter-class separation via alignment dynamics of neurons} Based on the discussion above, there is the \emph{region of attraction} (ROA) for each attractor $\{\bar{\x}_k,\tilde{\e}_k\}$ such that all the neuron weights initialized within the ROA are guaranteed to converge to $\{\bar{\x}_k,\tilde{\e}_k\}$ via \eqref{eq_align_dym_multi_w}\eqref{eq_align_dym_multi_v}. Moreover, the boundaries between two ROAs of different (class average)-(pseudo-label) pairs together form an invariant set that does not converge to any of the attractors $\{\bar{\x}_k,\tilde{\e}_k\}$, where there exist saddle points of \eqref{eq_align_dym_multi_w}\eqref{eq_align_dym_multi_v}. The exact generalization of Assumption \ref{assump_non_degen} to the multi-class case should assume that no weight direction falls on the aforementioned boundaries and each ROA contains at least one neuron pair. However, finding an analytic expression for the boundaries is a challenging problem by itself and far beyond the scope of this paper. Instead, our analysis identifies an invariant subset for each ROA, thus by initializing within those invariant subsets, we guarantee the directional convergence of the neuron weights to $\{\bar{\x}_k,\tilde{\e}_k\},k\in[K]$, which implies inter-class separation. 

Formally, we let $\mathcal{I}_k^{\w}=\{i\in\mathcal{I}_k:\big\langle \x_i,\frac{\w}{\|\w\|}\big\rangle>0\}$ denote the index set for class-$k$ data points that activates the input neuron weight $\w$, let $\mathcal{A}_k^{\w}=\sum_{i\in\mathcal{I}_k^{\w}}\big\langle \x_i,\frac{\w}{\|\w\|}\big\rangle$ be the \emph{aggregate alignment with the $k$-th class} of the input neuron weight $\w$, and let $\mathcal{B}_k^{\v}=\big\langle \tilde{\e}_k,\frac{\v}{\|\v\|}\big\rangle$ be the \emph{alignment with the $k$-th pesudo-label} of the output neuron $\v$. Then we define the following:
\begin{assumption}[Semi-local initialization]\label{assump_semi_local}
    The initialization shape $\{\w_{j0},\v_{j0}\}_{j=1}^h$ satisfies that $\exists$ a partition $\{\mathcal{N}_k,k\in[K]\}$ of $[h]$, such that $\forall k\in[K]$, we have
    \be
        \ts|\mathcal{I}_k^{\w_{j0}}|^2>\sum_{k'\neq k}|\mathcal{I}_{k'}^{\w_{j0}}|^2,\quad \mathcal{A}_k^{\w_{j0}}> 2\sum_{k'\neq k}\mathcal{A}_{k'}^{\w_{j0}},\quad \mathcal{B}_k^{\v_{j0}}\geq 1-\frac{1}{2(K-1)},\ \forall j\in\mathcal{N}_k\,.\label{eq_semi_local}
    \ee
\end{assumption}
Given some additional assumption on the orthogonal separability of the data (see below), the condition in \eqref{eq_semi_local} defines an invariant subset of the ROA of the attractor $\{\bar{\x}_k,\tilde{\e}_k\}$: any neuron weights initialized to satisfy \eqref{eq_semi_local} remains to do so during the alignment phase of GF, while getting attracted by $\{\bar{\x}_k,\tilde{\e}_k\}$, leading to the desired inter-class separation:
\begin{proposition}[Inter-class separation in multi-class problems]\label{prop_align_multi}
    Let $K\!>\!2$. Given orthogonally separable data (Assumption \ref{assump_data}) with $\frac{X_{\max}^2}{X_{\min}^2\mu_d\mu_s^2}< 2K-3$, where $X_{\max}:=\max_{i\in[n]}\|\x_i\|$ and $X_{\min}:=\min_{i\in[n]}\|\x_i\|$, $\epsilon$-small, balanced and semi-local initialization (Assumptions \ref{assump_init}, \ref{assump_semi_local}) for a sufficiently small $\epsilon$, for any solution $\btheta(t),t\!\geq\! 0$ to \eqref{eq_gf}, $\exists T^*$ such that $\forall t\!\geq\! T^*$, we have $\W_k(t)^\top\x_i\!\begin{cases}
            > \bm{0},&\! \forall i\in\mathcal{I}_k\\
            \leq  \bm{0},&\! \forall i\notin\mathcal{I}_k
        \end{cases},\forall i\!\in\![n],k\!\in\![K]$, where $\W_k(t)\!:=\![\w_j(t)]_{j\in\mathcal{N}_k}$.
    As a result, the \textbf{inter-class separation} $\lan\bphi_{\btheta(t)}\!(\x_i),\bphi_{\btheta(t)}\!(\x_{i'})\ran\!=\!0,\forall i\in\mathcal{I}_k,i'\in\mathcal{I}_{k'},k\neq k'$ holds $\forall t\!\geq\! T^*$.
\end{proposition}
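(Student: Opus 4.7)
The plan is to establish inter-class separation by tracking the alignment phase of gradient flow in the multi-class setting. Under Assumption~\ref{assump_init} with $\epsilon$ sufficiently small, a standard small-initialization argument (as in~\citet{boursier2022gradient,min2023early}) implies that the weight norms $\|\w_j(t)\|=\|\v_j(t)\|$ remain of order $\mathcal{O}(\epsilon^{1/2})$ throughout a time window of length $\Theta(\log(1/\epsilon))$, during which the directional flow $\ddt(\w_j/\|\w_j\|),\ddt(\v_j/\|\v_j\|)$ is well-approximated by \eqref{eq_align_dym_multi_w}-\eqref{eq_align_dym_multi_v} up to $\mathcal{O}(\epsilon)$ error. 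The heart of the argument is then to show that the subset $\mathcal{S}_k\subset\mathbb{S}^{D-1}\times\mathbb{S}^{K-1}$ carved out by \eqref{eq_semi_local} is forward-invariant under this approximated flow, and that any neuron pair with $j\in\mathcal{N}_k$ converges at $\Omega(1)$ rate toward the attractor $\{\bar{\x}_k,\tilde{\e}_k\}$, reaching a state with $\mathcal{I}_k^{\w_j}=\mathcal{I}_k$ and $\mathcal{I}_{k'}^{\w_j}=\emptyset$ for $k'\neq k$ before the alignment window ends.

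First I would verify invariance of the output-weight alignment $\mathcal{B}_k^{\v_j}\geq 1-\tfrac{1}{2(K-1)}$. Projecting \eqref{eq_align_dym_multi_v} onto $\tilde{\e}_k$ and using $\lan\tilde{\e}_k,\tilde{\e}_k\ran=1$, $\lan\tilde{\e}_k,\tilde{\e}_{k'}\ran=-\tfrac{1}{K-1}$ together with $\Pi_{\v_j^\perp}\tilde{\e}_k=\tilde{\e}_k-\mathcal{B}_k^{\v_j}(\v_j/\|\v_j\|)$ yields, to leading order,
\[
\ts\ddt\mathcal{B}_k^{\v_j}\;\geq\;\bigl(1-(\mathcal{B}_k^{\v_j})^2\bigr)\sqrt{\tfrac{K-1}{K}}\,\Big(\mathcal{A}_k^{\w_j}-\tfrac{1}{K-1}\sum_{k'\neq k}\mathcal{A}_{k'}^{\w_j}\Big)-\mathcal{O}(\epsilon),
\]
which is nonnegative whenever the aggregate-alignment bound $\mathcal{A}_k^{\w_j}>2\sum_{k'\neq k}\mathcal{A}_{k'}^{\w_j}$ holds, so $\mathcal{B}_k^{\v_j}$ cannot decrease past its initial threshold. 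For the aggregate-alignment bound itself, I would differentiate $\mathcal{A}_k^{\w_j}$ and $\mathcal{A}_{k'}^{\w_j}$ along \eqref{eq_align_dym_multi_w} and invoke Assumption~\ref{assump_data}: the on-class contribution to $\ddt\mathcal{A}_k^{\w_j}$ is bounded below by a positive multiple of $\mu_s X_{\min}^2\,\mathcal{B}_k^{\v_j}\,|\mathcal{I}_k^{\w_j}|$, while cross-class contributions to $\ddt\mathcal{A}_{k'}^{\w_j}$ are bounded above by essentially $X_{\max}^2/(\mu_d(K-1))$, where the $(K-1)^{-1}$ factor comes from $|\lan\tilde{\e}_{k'},\tilde{\e}_k\ran|$. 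The quantitative assumption $X_{\max}^2/(X_{\min}^2\mu_d\mu_s^2)<2K-3$ is precisely what makes the on-class growth rate exceed twice the total cross-class growth rate, closing the invariance for the factor-of-two gap.

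Second I would control the combinatorial condition $|\mathcal{I}_k^{\w_j}|^2>\sum_{k'\neq k}|\mathcal{I}_{k'}^{\w_j}|^2$, which is subtler because activation sets jump whenever some $\lan\x_i,\w_j\ran$ crosses zero. I would argue that while $(\w_j,\v_j)\in\mathcal{S}_k$, the radial push on $\w_j$ from \eqref{eq_align_dym_multi_w} (weighted by $\mathcal{B}_k^{\v_j}\geq 1-\tfrac{1}{2(K-1)}$) has strictly positive component along each on-class $\x_i,i\in\mathcal{I}_k$ and strictly negative component along each cross-class $\x_i,i\in\mathcal{I}_{k'}$, by $\mu_d$-separation; hence indices in $\mathcal{I}_{k'}^{\w_j}$ can only leave the activation set and indices in $\mathcal{I}_k\setminus\mathcal{I}_k^{\w_j}$ can only enter, preserving the combinatorial inequality monotonically. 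Combining the three invariances gives $(\w_j(t),\v_j(t))\in\mathcal{S}_k$ throughout the alignment window; at its end every $\w_j,j\in\mathcal{N}_k$ satisfies $\lan\x_i,\w_j\ran>0$ for $i\in\mathcal{I}_k$ and $\lan\x_i,\w_j\ran<0$ for $i\notin\mathcal{I}_k$. Extending this activation pattern past the alignment window is a direct Gr\"onwall-type argument on the full GF: once on-class alignments are $\Omega(1)$ and cross-class data sit strictly inside the dead half-space of each $\w_j$, the strict inequalities are preserved for all subsequent time, giving the claim for $T^*$. Inter-class orthogonality follows immediately, since for $i\in\mathcal{I}_k$ and $i'\in\mathcal{I}_{k'}$ with $k\neq k'$, $\bphi_{\btheta(t)}(\x_i)$ is supported on the $\mathcal{N}_k$ coordinates while $\bphi_{\btheta(t)}(\x_{i'})$ is supported on the $\mathcal{N}_{k'}$ coordinates, and these supports are disjoint.

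The hard part will be the combinatorial invariance step together with the bootstrap structure it lives inside. The activation indicators $\xi_{ij}$ are piecewise constant in $\w_j$, so a direct Gr\"onwall estimate cannot control their jumps; at the same time, the lower bound on $\ddt\mathcal{A}_k^{\w_j}$ depends on $|\mathcal{I}_k^{\w_j}|$ being large, which is itself one of the invariances being proved. Closing this loop requires a simultaneous induction on all three conditions in \eqref{eq_semi_local}, while uniformly propagating the $\mathcal{O}(\epsilon)$ approximation error from \eqref{eq_align_dym_multi_w}-\eqref{eq_align_dym_multi_v} over the $\Theta(\log(1/\epsilon))$ alignment window without letting it accumulate past the slack provided by $\mu_d,\mu_s$. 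The quantitative ratio $X_{\max}^2/(X_{\min}^2\mu_d\mu_s^2)<2K-3$ is precisely the slack that makes this bootstrap feasible.
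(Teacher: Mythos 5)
Your outline follows the same overall structure as the paper's proof: establish that the weight norms stay small over a $\Theta(\log(1/\epsilon))$ window so that the angular flow is governed by \eqref{eq_align_dym_multi_w}--\eqref{eq_align_dym_multi_v} up to $\mathcal{O}(\epsilon)$ error, show that the semi-local set of Assumption~\ref{assump_semi_local} is forward-invariant under this approximate flow by a simultaneous bootstrap on the three conditions, and conclude that every neuron in $\mathcal{N}_k$ reaches a fully class-pure activation pattern before the window closes. Your identification of the three invariants (output-weight alignment, aggregate input-weight alignment, combinatorial activation-count inequality) matches the paper's treatment, as does your observation that the activation sets jump and require the combinatorial monotonicity argument rather than a smooth Gr\"onwall bound.

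The genuine gap is in your treatment of the post-alignment phase, which you dismiss as a ``direct Gr\"onwall-type argument on the full GF.'' It is not, and the paper devotes a nontrivial lemma to exactly this step. Once the alignment window ends and the weight norms grow, the linearization $\hat{\y}_i\approx\frac{1}{K}\mathbf{1}$ fails, so \eqref{eq_align_dym_multi_w}--\eqref{eq_align_dym_multi_v} are no longer valid. After $T^*_{j,k}$ the flow on $\w_j$ is driven by $\sum_{i\in\mathcal{I}_k}\langle\e_k-\hat{\y}_i,\v_j\rangle\x_i$, where $\hat{\y}_i$ is the exact, data-dependent softmax and $\v_j$ can continue to rotate in $\mathbb{R}^K$. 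The sign of $\langle\e_k-\hat{\y}_i,\v_j\rangle$ is therefore not automatic; if it turned negative, $\w_j$ would be pushed back toward the boundary of its dead half-space and cross-class data could re-activate, breaking the pattern you need to propagate. The paper closes this by re-deriving the angular dynamics of $\v_j$ in terms of the exact softmax residual (cf.\ the computation around \eqref{eq_app_v_align_asym}), switching to a \emph{different, weaker} invariant threshold $\mathcal{B}_k^{\v_j}\geq\frac{\sqrt{2}}{2}$ (the alignment-phase threshold $1-\frac{1}{2(K-1)}$ cannot be maintained once the $\epsilon$-approximation is dropped), and proving a dedicated auxiliary bound that controls the softmax-induced perturbation term $\bigl|\frac{1}{K-1}\sum_{k>1}[\bar{\v}]_k-\langle\p,\bar{\v}\rangle\bigr|\leq\sqrt{1-\beta^2}$. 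Without this post-alignment invariance argument for $\v_j$, your plan cannot rule out the re-activation scenario, and the claim for all $t\geq T^*$ does not follow from what you have shown during the alignment window.
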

\myparagraph{Asymptotic analysis: intra-class directional collapse and projected self-duality via max-margin bias} Once the inter-class separation is achieved, we can again decompose the loss function as $\mathcal{L}(\btheta)=\sum_{k=1}^K\sum_{i\in\mathcal{I}_k}\ell_{\mathrm{CE}}(\y_i,\V_k\W_k^\top\x_i)$, where $\V_k=[\v_j]_{j\in\mathcal{N}_k}$, thus it suffices to study the GF on $\{\W_k,\V_k\}$ for training a two-layer linear network on exclusively on class-$k$ data for each $k\in[K]$ with cross-entropy loss. Another major technical contribution of our analysis is to extend the max-margin results in the  work of~\citet{phuong2021inductive,ji2019gradient} to the multi-class problems (albeit under a special case that all data have the same label and are positively correlated), leading to the following asymptotic characterization of GF:
\begin{proposition}[Variability collapse and self-duality in multi-class problems]\label{prop_max_margin_multi}
    Let $K>2$. Given Assumptions \ref{assump_data},\ref{assump_init},\&\ref{assump_semi_local}, the limit $\bar{\btheta}:=\lim_{t\ra \infty}\frac{\btheta(t)}{\|\btheta(t)\|_F}$ exists for any solution $\btheta(t),t\geq 0$ to \eqref{eq_gf}. For the limiting direction $\bar{\btheta}=\{\bar{\W}_k,\bar{\V}_k\}_{k=1}^K$, $\exists \bm{g}_k\in\mathbb{S}^{|\mathcal{N}_k|-1}$, such that 
    \begin{align}
        &\ts\bar{\W}_k=   s_k\u_k \bm{g}_k^\top,\ \bar{\V}_k=s_k\tilde{\e}_k\bm{g}_k^\top\,,  
    \end{align}
    where $s_k, k\in[K]$ are defined in Theorem \ref{thm_nc_conv}. As a result, the \textbf{intra-class directional collapse} \eqref{eq:var-collapse} and \textbf{projected self-duality} \eqref{eq:self-duality} hold.   
\end{proposition}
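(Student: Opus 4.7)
The plan is to extend the binary-case strategy of Section \ref{ssec_pf_nc_bin} to the multi-class setting in two stages: (i) decouple the GF into $K$ independent sub-problems using Proposition \ref{prop_align_multi}, and (ii) run a max-margin / KKT analysis on each sub-problem that jointly exploits the pseudo-label structure $\tilde{\e}_k$ and the balancedness preserved by GF. For the first stage, starting from the time $T^*$ supplied by Proposition \ref{prop_align_multi}, a standard invariance argument shows that the activation pattern $\W_k(t)^\top \x_i > \bm{0}$ iff $i \in \mathcal{I}_k$ is \emph{preserved} for all $t \geq T^*$ (once in the separated region, the CE gradient restricted to correctly classified inputs only pushes weights further into the active cone). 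Hence for all $t \geq T^*$ the loss factorizes as
\begin{equation*}
\mathcal{L}(\btheta(t)) = \sum_{k=1}^K \sum_{i \in \mathcal{I}_k} \ell_{\mathrm{CE}}\bigl(\e_k,\, \V_k(t) \W_k(t)^\top \x_i\bigr),
\end{equation*}
and the GF on each pair $(\W_k, \V_k)$ evolves independently as a two-layer linear network trained on the positively correlated, identically-labeled data of class $k$.

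For each decoupled sub-problem, I invoke the directional-convergence result of \citet{lyu2019gradient}, applied to multi-class CE on positively $2$-homogeneous networks, to conclude that $(\bar{\W}_k, \bar{\V}_k)$ converges to a KKT point of the max-margin problem $\min \tfrac{1}{2}(\|\W_k\|_F^2 + \|\V_k\|_F^2)$ subject to $(\e_k - \e_l)^\top \V_k \W_k^\top \x_i \geq 1$ for all $i \in \mathcal{I}_k$ and $l \neq k$. Using the identity $(\e_k - \e_l)^\top \tilde{\e}_k = \sqrt{K/(K-1)}$, the stationarity condition for $\V_k$ reads $\V_k = \sum_{i,l} \lambda_{i,l} (\e_k - \e_l) \x_i^\top \W_k$, and the crucial next step is to show that the Lagrange multipliers $\lambda_{i,l}$ distribute \emph{equally} across $l \neq k$ for each sample $i$. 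Once this is established, $\sum_{l \neq k}(\e_k - \e_l) = \sqrt{K(K-1)}\,\tilde{\e}_k$ collapses the dual direction to $\tilde{\e}_k$ on the left factor and to $\u_k = \sqrt{(K-1)/K}\,\sum_i \alpha_i \x_i$ on the right factor (precisely the class-$k$ max-margin direction), which forces the optimal product $M_k := \V_k \W_k^\top$ to be rank one of the form $c_k \tilde{\e}_k \u_k^\top$, with $c_k \propto 1/\gamma_k$ fixed by tightness of the margin constraints.

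To lift this product-level structure into a factor-level one, I next use the balancedness $\bar{\V}_k^\top \bar{\V}_k = \bar{\W}_k^\top \bar{\W}_k$ preserved by GF on $2$-homogeneous networks: equality of these Gram matrices forces $\bar{\V}_k$ and $\bar{\W}_k$ to share identical right singular vectors with identical singular values, so rank-one-ness of $\bar{\V}_k \bar{\W}_k^\top$ propagates to each factor, yielding $\bar{\W}_k = s_k \u_k \bm{g}_k^\top$ and $\bar{\V}_k = s_k \tilde{\e}_k \bm{g}_k^\top$ for some unit $\bm{g}_k \in \mathbb{S}^{|\mathcal{N}_k|-1}$. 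The class-dependent scale $s_k = \sqrt{\gamma_k^{-1}/(2\sum_j \gamma_j^{-1})}$ then follows by recoupling the $K$ sub-problems through the global normalization $\|\bar{\btheta}\|_F = 1$, combined with the inverse-margin scaling of each class's contribution to the total cost (equivalently, $\|\bar{\W}_k\|_F^2 + \|\bar{\V}_k\|_F^2 = 2 s_k^2 \propto 1/\gamma_k$). The intra-class directional collapse \eqref{eq:var-collapse} and projected self-duality \eqref{eq:self-duality} are then immediate: $\bphi_{\bar{\btheta}}(\x_i) = \langle s_k \u_k, \x_i \rangle \cdot \bar{\bphi}_k$ with $\bar{\bphi}_k$ supported on the index set $\mathcal{N}_k$ (hence non-negative and mutually orthogonal across $k$), and assembling $\bar{\V} = [\bar{\V}_1, \ldots, \bar{\V}_K]$ produces the stated projected-ETF form.

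The hard part is the KKT argument in the second paragraph: unlike the binary case where the margin functional is a single linear constraint per sample, here one must rule out non-uniform multiplier distributions and higher-rank primal optima in the presence of the $(K-1)$-fold constraint family per sample. This requires fully exploiting both the orthogonal-separability structure of the data (which pins down $\u_k$ as the unique class-$k$ max-margin direction) and the precise algebraic identity $\sum_{l \neq k}(\e_k - \e_l) \propto \tilde{\e}_k$ (which makes the uniform multiplier pattern consistent with stationarity), and it is the main piece of new technical work that has no direct analog in \citet{ji2019gradient}.
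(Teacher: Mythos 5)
Your high-level architecture matches the paper: decouple the GF into $K$ independent sub-problems after $T^*$ via Proposition~\ref{prop_align_multi}, invoke the directional-convergence-to-KKT result of \citet{lyu2019gradient} on each, and use balancedness $\bar{\V}_k^\top\bar{\V}_k=\bar{\W}_k^\top\bar{\W}_k$ to lift a rank-one product to rank-one factors. The balancedness step and the algebraic identities you invoke ($(\e_k-\e_l)^\top\tilde{\e}_k=\sqrt{K/(K-1)}$, $\sum_{l\neq k}(\e_k-\e_l)\propto\tilde{\e}_k$) are correct, and your account of the $s_k$ normalization is consistent with the paper.

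The gap is exactly where you flag it. You defer the pivotal step---showing $\bar{\V}_k\bar{\W}_k^\top$ is rank one of the form $c\,\tilde{\e}_k\u_k^\top$---to a to-be-proved claim that the Lagrange multipliers $\lambda_{i,l}$ equidistribute across $l\neq k$, and then admit this is ``the main piece of new technical work.'' This is not a detail one can wave at: \citet{lyu2019gradient} only guarantees the limit is \emph{some} KKT point of the margin program, and the KKT set of the multi-class problem with the $(K-1)$-fold constraint family need not consist solely of rank-one points with uniform multipliers. One would have to classify the whole KKT set (not attempted), or bring in trajectory information. The paper does the latter, and by a genuinely different route than the static KKT analysis you sketch: it proves a new Lemma (a multi-class extension of Lemma~2.11 of \citet{ji2019gradient}) which shows that $\tfrac{d}{dt}\|\Pi^{\perp}_{\u_k}\W_k\|_F^2\leq 0$ once the off-$\u_k$ component of $\W_k\V_k^\top$ is large; combined with $\|\W_k\|_F\to\infty$ this forces $\bar{\W}_k\propto\u_k\bm{g}_k^\top$ by contradiction. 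A second, separate dynamical argument on $\mathcal{B}_j^{\v_j}=\langle\tilde{\e}_k,\v_j/\|\v_j\|\rangle$ (showing it would blow past $1$ otherwise) pins the left factor of $\bar{\V}_k$ to $\tilde{\e}_k$; no multiplier-equidistribution claim appears. So while your decomposition and endgame are right, the middle of the proof---precisely where the proposition's content lives---is missing, and the static KKT route you propose is not obviously available.
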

\myparagraph{Limitations of current analysis} Aside from the orthogonal separability of the data, for which we have made remarks in Section \ref{sec_prem}, the convergence analysis for multi-class problems requires a stricter separability condition ($\frac{X_{\max}^2}{X_{\min}^2\mu_d\mu_s^2}< 2K-3$), as shown in Proposition \ref{prop_align_multi}. This assumption is required to show that the subsets of the parameter space defined in Assumption \ref{assump_semi_local} are invariant under GF. We believe such an assumption is not needed in practice, but our limited understanding of the ROAs and their invariant subsets has led to this additional technical condition to ensure directional convergence. Future research on better characterizations of the ROAs and their invariant sets will naturally relax or even potentially remove this requirement. The additional assumption (Assumption \ref{assump_semi_local}) on the initialization shape for multi-class problems is another limitation of our analysis for the early phase of GF, requiring all neuron weights to have decent alignment with one of the (class-average)-(pseudo-label) pair. Relaxing such an assumption necessitates a careful in-depth analysis of the neuron weight alignment dynamics shown in~\eqref{eq_align_dym_multi_w}\eqref{eq_align_dym_multi_v}, for which we have discussed the underlying challenges, and we leave it as an important future research direction. Nonetheless, we would like point out that: First, our result on asymptotic convergence of the weights is applicable whenever one can show that inter-class separation happens at sometime during the GF, and our transient analysis simply provides one condition under which the separation is guaranteed to happen; Moreover, the semi-local initialization can be satisfied if, instead of random initialization,  one initializes all the neuron pair shapes $\{\w_{j0},\v_{j0}\}_{j=1}^h$ by drawing uniformly from the (data)-(pesudo-label) pairs $\{\x_i,\tilde{\bm{E}}\y_i\}_{i=1}^n$, which is a practically possible initialization scheme.

\section{Numerical Experiments}\label{sec_experiment}
We conduct experiments primarily for the purpose of validating our theoretical results. We first train a two-layer ReLU network for classifying three MNIST~\cite{lecun1998gradient} digits and visualize the neuron weights alignment at the end of the training, thereby showing the NC characterizations in Theorem \ref{thm_nc_conv}. Next, based on our remarks on intra-class directional collapse, our Theorem \ref{thm_nc_conv} suggests that proper normalization layers such as RMSNorm~\cite{zhang2019root} can potentially lead to a more significant level of NC, and we conduct some preliminary experiments with ResNet~\cite{he2016deep} to verify this conjecture.

\myparagraph{Validating Theorem \ref{thm_nc_conv} in MNIST digits classification} We train a two-layer ReLU network to classify three MNIST digits $\{0,1,2\}$. The experimental details are in Appendix \ref{app_ssec_mnist}. Figure \ref{fig:nc_mnist} visualizes the training results: First, we show that the dataset of MNIST digits, centered by the mean digit of the entire dataset, approximately satisfies the orthogonally separable assumption. Then, we visualized the neuron pairs, showing their respective alignment with the data and the pseudo-labels. Moreover, we visualize the top 3 principal components of the last-layer feature of the digits, together with the classifiers, whose structure matches the NC characterizations in Theorem \ref{thm_nc_conv}.

\begin{figure}[t]
    \centering
    \includegraphics[width=0.9\linewidth]{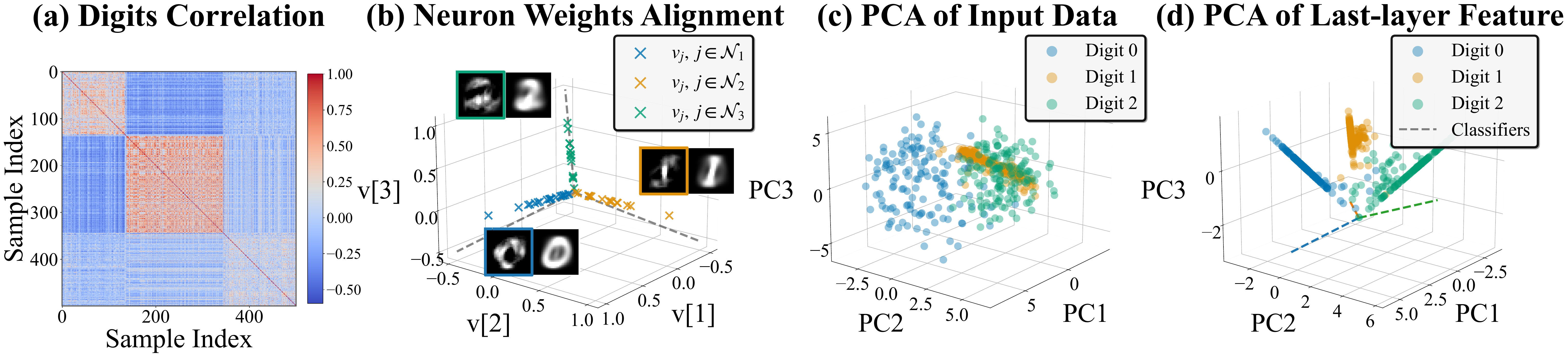}
    \caption{Validating Theorem \ref{thm_nc_conv} in classifying MNIST digits $\{0,1,2\}$. (a) Normalized correlation matrix of subsampled 500 MNIST digits; (b) (For the trained network) visualization of output neuron weights (as crosses, gray dashed line represents $\tilde{\e}_k$ directions for references), the average input neuron weights (as grayscale image, surrounded by colored box), and the average of the digits (for comparison, next to the neuron weights); (c) PCA of raw digits data $\X$, keeping the top 3 principal components (1000 points visualized) results in a $\sim\!61\%$ relative approximation error for $\X$; (d) (For the trained network) PCA of last-layer feature $\bphi_{\btheta}(\X)$ and classifiers (rows of $\V$), keeping the top 3 components (1000 points visualized) results in a $\sim\!0.2\%$ relative approximation error for $\bphi_{\btheta}(\X)$.}
    \label{fig:nc_mnist}
    \vspace{-0.2cm}
\end{figure}
\begin{figure}[t]
    \centering
    \includegraphics[width=0.9\linewidth]{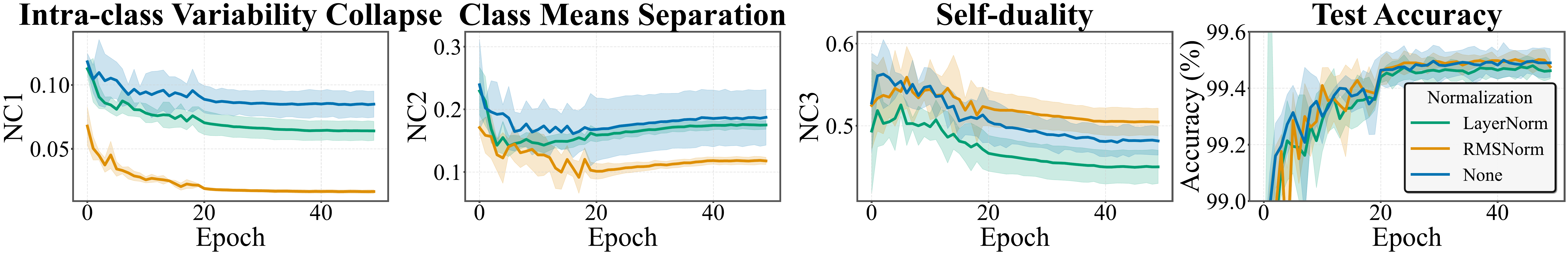}
    \includegraphics[width=0.9\linewidth]{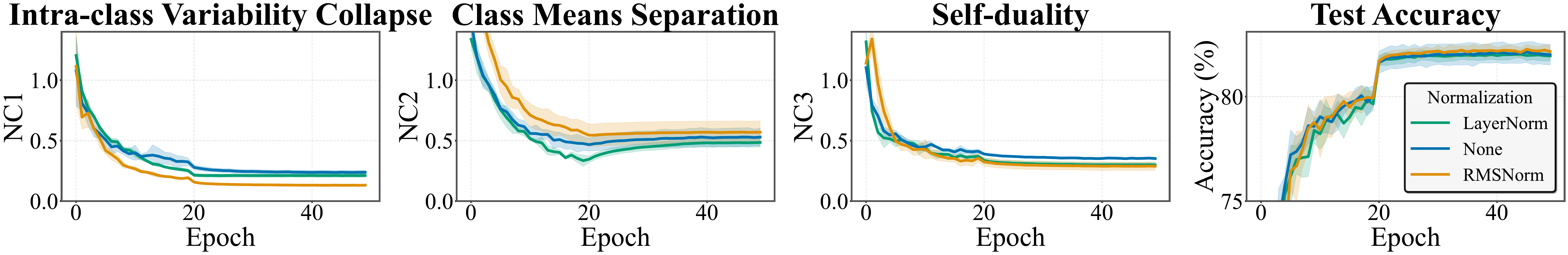}
    \caption{Measuring NC in trained modified ResNet18 on MNIST (top) and CIFAR10 (bottom)}
    \label{fig:nc_norm}
    \vspace{-0.2cm}
\end{figure}
\myparagraph{Experiments on the role of normalization layers on NC} Next, we train a modified ResNet18 (by replacing the final linear classifier by a two-layer ReLU classifier) on MNIST and CIFAR10~\cite{krizhevsky2009learning} datasets. In addition, we add a normalization layer (Identity/None, LayerNorm~\cite{ba2016layer}, or RMSNorm~\cite{zhang2019root}) before the ReLU classifier and vary the methods for normalization. The experimental details are in Appendix \ref{app_ssec_norm}. Figure \ref{fig:nc_norm} reports (repeated for 5 runs; mean(line) and std(shade) are reported) the evolutions over training 50 epochs of the metrics NC1, NC2 and NC3 that measure the NC characteristics in Theorem \ref{thm_nc_conv} (lower value implies more prominent NC; definitions in Appendix \ref{app_ssec_norm}) at the last-layer of the ReLU classifier. Notably, using the RMSNorm layer significantly improves the intra-class directional collapse, as we conjectured in the remark for Theorem \ref{thm_nc_conv}, suggesting potential practical value in using RMSNorm layers for promoting NC. 

\section{Conclusion}
In this paper, we investigated the connection between NC and the implicit bias of GF through a convergence analysis of GF on two-layer ReLU networks for orthogonally separable data and showed that the implicit bias of GF facilitates the emergence of NC along the GF trajectory. Future work includes relaxing the assumptions on the data and initialization, and extending the convergence analysis to understand the emergence of NC in deeper networks; For example, similar early weight directional alignment and asymptotic max-margin bias have been studied in prior works~\cite{kumar2024directional,ji2020directional,lyu2019gradient}, following the same high-level proof and utilizing these existing results on alignment and max-margin for deep networks might extend the current work to a more practical setting.

\begin{ack}
This work is primarily done when H. Min was a postdoc at the University of Pennsylvania. The authors acknowledge the support of the NSF under grants 2031985 and IIS-2312840, the Simons Foundation under grant 814201, the ONR MURI Program under grant 503405-78051, and the University of Pennsylvania Startup Funds. 
\end{ack}

\bibliography{ref.bib}

\begin{thebibliography}{74}
\providecommand{\natexlab}[1]{#1}
\providecommand{\url}[1]{\texttt{#1}}
\expandafter\ifx\csname urlstyle\endcsname\relax
  \providecommand{\doi}[1]{doi: #1}\else
  \providecommand{\doi}{doi: \begingroup \urlstyle{rm}\Url}\fi

\bibitem[Papyan et~al.(2020)Papyan, Han, and Donoho]{papyan2020prevalence}
Vardan Papyan, XY~Han, and David~L Donoho.
\newblock Prevalence of neural collapse during the terminal phase of deep learning training.
\newblock \emph{{PNAS}}, 117\penalty0 (40):\penalty0 24652--24663, 2020.

\bibitem[Mixon et~al.(2022)Mixon, Parshall, and Pi]{mixon2022neural}
Dustin~G Mixon, Hans Parshall, and Jianzong Pi.
\newblock Neural collapse with unconstrained features.
\newblock \emph{Sampling Theory, Signal Processing, and Data Analysis}, 20\penalty0 (2):\penalty0 11, 2022.

\bibitem[Ji et~al.(2022)Ji, Lu, Zhang, Deng, and Su]{ji2021unconstrained}
Wenlong Ji, Yiping Lu, Yiliang Zhang, Zhun Deng, and Weijie~J Su.
\newblock An unconstrained layer-peeled perspective on neural collapse.
\newblock In \emph{{ICLR}}, 2022.

\bibitem[Zhu et~al.(2021)Zhu, Ding, Zhou, Li, You, Sulam, and Qu]{zhu2021geometric}
Zhihui Zhu, Tianyu Ding, Jinxin Zhou, Xiao Li, Chong You, Jeremias Sulam, and Qing Qu.
\newblock A geometric analysis of neural collapse with unconstrained features.
\newblock In \emph{{NeurIPS}}, 2021.

\bibitem[Fang et~al.(2021)Fang, He, Long, and Su]{fang2021exploring}
Cong Fang, Hangfeng He, Qi~Long, and Weijie~J Su.
\newblock Exploring deep neural networks via layer-peeled model: Minority collapse in imbalanced training.
\newblock \emph{{PNAS}}, 118\penalty0 (43):\penalty0 e2103091118, 2021.

\bibitem[Zhou et~al.(2022)Zhou, Li, Ding, You, Qu, and Zhu]{zhou2022optimization}
Jinxin Zhou, Xiao Li, Tianyu Ding, Chong You, Qing Qu, and Zhihui Zhu.
\newblock On the optimization landscape of neural collapse under mse loss: Global optimality with unconstrained features.
\newblock In \emph{{ICML}}, 2022.

\bibitem[Tirer and Bruna(2022)]{tirer2022extended}
Tom Tirer and Joan Bruna.
\newblock Extended unconstrained features model for exploring deep neural collapse.
\newblock In \emph{{ICML}}, 2022.

\bibitem[S{\'u}ken{\'\i}k et~al.(2023)S{\'u}ken{\'\i}k, Mondelli, and Lampert]{sukenik2023deep}
Peter S{\'u}ken{\'\i}k, Marco Mondelli, and Christoph~H Lampert.
\newblock Deep neural collapse is provably optimal for the deep unconstrained features model.
\newblock \emph{{NeurIPS}}, 2023.

\bibitem[Jiang et~al.(2024)Jiang, Zhou, Wang, Qu, Mixon, You, and Zhu]{jiang2024generalized}
Jiachen Jiang, Jinxin Zhou, Peng Wang, Qing Qu, Dustin~G Mixon, Chong You, and Zhihui Zhu.
\newblock Generalized neural collapse for a large number of classes.
\newblock In \emph{{ICML}}, 2024.

\bibitem[Rangamani and Banburski-Fahey(2022)]{rangamani2022neural}
Akshay Rangamani and Andrzej Banburski-Fahey.
\newblock Neural collapse in deep homogeneous classifiers and the role of weight decay.
\newblock In \emph{{ICASSP}}, 2022.

\bibitem[Wang et~al.(2023)Wang, Li, Yaras, Zhu, Balzano, Hu, and Qu]{wang2023understanding}
Peng Wang, Xiao Li, Can Yaras, Zhihui Zhu, Laura Balzano, Wei Hu, and Qing Qu.
\newblock Understanding deep representation learning via layerwise feature compression and discrimination.
\newblock \emph{arXiv preprint arXiv:2311.02960}, 2023.

\bibitem[Xu et~al.(2023{\natexlab{a}})Xu, Rangamani, Liao, Galanti, and Poggio]{xu2023dynamics}
Mengjia Xu, Akshay Rangamani, Qianli Liao, Tomer Galanti, and Tomaso Poggio.
\newblock Dynamics in deep classifiers trained with the square loss: Normalization, low rank, neural collapse, and generalization bounds.
\newblock \emph{Research}, 6:\penalty0 0024, 2023{\natexlab{a}}.

\bibitem[Wang et~al.(2022)Wang, Liu, Yaras, Balzano, and Qu]{wang2022linear}
Peng Wang, Huikang Liu, Can Yaras, Laura Balzano, and Qing Qu.
\newblock Linear convergence analysis of neural collapse with unconstrained features.
\newblock In \emph{OPT 2022: Optimization for Machine Learning (NeurIPS 2022 Workshop)}, 2022.

\bibitem[Jacot et~al.(2025)Jacot, S{\'u}ken{\'\i}k, Wang, and Mondelli]{jacot2024wide}
Arthur Jacot, Peter S{\'u}ken{\'\i}k, Zihan Wang, and Marco Mondelli.
\newblock Wide neural networks trained with weight decay provably exhibit neural collapse.
\newblock In \emph{{ICLR}}, 2025.

\bibitem[Vardi(2023)]{vardi2023implicit}
Gal Vardi.
\newblock On the implicit bias in deep-learning algorithms.
\newblock \emph{Communications of the ACM}, 66\penalty0 (6):\penalty0 86--93, 2023.

\bibitem[Maennel et~al.(2018)Maennel, Bousquet, and Gelly]{maennel2018gradient}
Hartmut Maennel, Olivier Bousquet, and Sylvain Gelly.
\newblock Gradient descent quantizes relu network features.
\newblock \emph{arXiv preprint arXiv:1803.08367}, 2018.

\bibitem[Phuong and Lampert(2021)]{phuong2021inductive}
Mary Phuong and Christoph~H Lampert.
\newblock The inductive bias of relu networks on orthogonally separable data.
\newblock In \emph{{ICLR}}, 2021.

\bibitem[Boursier et~al.(2022)Boursier, Pullaud-Vivien, and Flammarion]{boursier2022gradient}
Etienne Boursier, Loucas Pullaud-Vivien, and Nicolas Flammarion.
\newblock Gradient flow dynamics of shallow relu networks for square loss and orthogonal inputs.
\newblock In \emph{{NeurIPS}}, 2022.

\bibitem[Tsoy and Konstantinov(2024)]{tsoy2024simplicity}
Nikita Tsoy and Nikola Konstantinov.
\newblock Simplicity bias of two-layer networks beyond linearly separable data.
\newblock In \emph{{ICML}}, 2024.

\bibitem[Min et~al.(2024)Min, Mallada, and Vidal]{min2023early}
Hancheng Min, Enrique Mallada, and Ren\'e Vidal.
\newblock Early neuron alignment in two-layer relu networks with small initialization.
\newblock In \emph{{ICLR}}, 2024.

\bibitem[Ji and Telgarsky(2019)]{ji2019gradient}
Ziwei Ji and Matus Telgarsky.
\newblock Gradient descent aligns the layers of deep linear networks.
\newblock In \emph{{ICLR}}, 2019.

\bibitem[Clarke(1990)]{clarke1990optimization}
Frank~H Clarke.
\newblock \emph{Optimization and nonsmooth analysis}.
\newblock SIAM, 1990.

\bibitem[Hong and Ling(2024)]{hong2024beyond}
Wanli Hong and Shuyang Ling.
\newblock Beyond unconstrained features: Neural collapse for shallow neural networks with general data.
\newblock \emph{arXiv preprint arXiv:2409.01832}, 2024.

\bibitem[Chistikov et~al.(2023)Chistikov, Englert, and Lazic]{chistikov2023learning}
Dmitry Chistikov, Matthias Englert, and Ranko Lazic.
\newblock Learning a neuron by a shallow re{LU} network: Dynamics and implicit bias for correlated inputs.
\newblock In \emph{{NeurIPS}}, 2023.

\bibitem[Du et~al.(2018)Du, Hu, and Lee]{Du&Lee}
Simon~S Du, Wei Hu, and Jason~D Lee.
\newblock Algorithmic regularization in learning deep homogeneous models: Layers are automatically balanced.
\newblock In \emph{{NeurIPS}}, 2018.

\bibitem[Ji and Telgarsky(2020)]{ji2020directional}
Ziwei Ji and Matus Telgarsky.
\newblock Directional convergence and alignment in deep learning.
\newblock In \emph{{NeurIPS}}, 2020.

\bibitem[Lyu and Li(2019)]{lyu2019gradient}
Kaifeng Lyu and Jian Li.
\newblock Gradient descent maximizes the margin of homogeneous neural networks.
\newblock In \emph{{ICLR}}, 2019.

\bibitem[Zhang and Sennrich(2019)]{zhang2019root}
Biao Zhang and Rico Sennrich.
\newblock Root mean square layer normalization.
\newblock In \emph{{NeurIPS}}, 2019.

\bibitem[Boursier and Flammarion(2025)]{boursier2024early}
Etienne Boursier and Nicolas Flammarion.
\newblock Early alignment in two-layer networks training is a two-edged sword.
\newblock \emph{{JMLR}}, 2025.

\bibitem[Kumar and Haupt(2024)]{kumar2024directional}
Akshay Kumar and Jarvis Haupt.
\newblock Directional convergence near small initializations and saddles in two-homogeneous neural networks.
\newblock \emph{{TMLR}}, 2024.

\bibitem[Saxe et~al.(2014)Saxe, Mcclelland, and Ganguli]{saxe2014exact}
Andrew~M Saxe, James~L Mcclelland, and Surya Ganguli.
\newblock Exact solutions to the nonlinear dynamics of learning in deep linear neural network.
\newblock In \emph{{ICLR}}, 2014.

\bibitem[Gunasekar et~al.(2017)Gunasekar, Woodworth, Bhojanapalli, Neyshabur, and Srebro]{gunasekar2017implicit}
Suriya Gunasekar, Blake Woodworth, Srinadh Bhojanapalli, Behnam Neyshabur, and Nathan Srebro.
\newblock Implicit regularization in matrix factorization.
\newblock In \emph{{NeurIPS}}, 2017.

\bibitem[Gidel et~al.(2019)Gidel, Bach, and Lacoste-Julien]{Gidel2019}
Gauthier Gidel, Francis Bach, and Simon Lacoste-Julien.
\newblock Implicit regularization of discrete gradient dynamics in linear neural networks.
\newblock In \emph{{NeurIPS}}, 2019.

\bibitem[Woodworth et~al.(2020)Woodworth, Gunasekar, Lee, Moroshko, Savarese, Golan, Soudry, and Srebro]{woodworth2020kernel}
Blake Woodworth, Suriya Gunasekar, Jason~D Lee, Edward Moroshko, Pedro Savarese, Itay Golan, Daniel Soudry, and Nathan Srebro.
\newblock Kernel and rich regimes in overparametrized models.
\newblock In \emph{{COLT}}, 2020.

\bibitem[Luo et~al.(2021)Luo, Xu, Ma, and Zhang]{luo2021phase}
Tao Luo, Zhi-Qin~John Xu, Zheng Ma, and Yaoyu Zhang.
\newblock Phase diagram for two-layer relu neural networks at infinite-width limit.
\newblock \emph{{JMLR}}, 22\penalty0 (71):\penalty0 1--47, 2021.

\bibitem[St{\"o}ger and Soltanolkotabi(2021)]{stoger2021small}
Dominik St{\"o}ger and Mahdi Soltanolkotabi.
\newblock Small random initialization is akin to spectral learning: Optimization and generalization guarantees for overparameterized low-rank matrix reconstruction.
\newblock In \emph{{NeurIPS}}, 2021.

\bibitem[Jacot et~al.(2021)Jacot, Ged, {\c{S}}im{\c{s}}ek, Hongler, and Gabriel]{jacot2021saddle}
Arthur Jacot, Fran{\c{c}}ois Ged, Berfin {\c{S}}im{\c{s}}ek, Cl{\'e}ment Hongler, and Franck Gabriel.
\newblock Saddle-to-saddle dynamics in deep linear networks: Small initialization training, symmetry, and sparsity.
\newblock \emph{arXiv preprint arXiv:2106.15933}, 2021.

\bibitem[Razin et~al.(2022)Razin, Maman, and Cohen]{razin2022implicit}
Noam Razin, Asaf Maman, and Nadav Cohen.
\newblock Implicit regularization in hierarchical tensor factorization and deep convolutional neural networks.
\newblock In \emph{International Conference on Machine Learning}, pages 18422--18462. PMLR, 2022.

\bibitem[Xu et~al.(2025)Xu, Min, MacDonald, Luo, Tarmoun, Mallada, and Vidal]{xu2025understanding}
Ziqing Xu, Hancheng Min, Lachlan~Ewen MacDonald, Jinqi Luo, Salma Tarmoun, Enrique Mallada, and Rene Vidal.
\newblock Understanding the learning dynamics of lora: A gradient flow perspective on low-rank adaptation in matrix factorization.
\newblock In \emph{{AISTATS}}, 2025.

\bibitem[Zhu et~al.(2025)Zhu, Liu, and Cevher]{zhu2025how}
Zhenyu Zhu, Fanghui Liu, and Volkan Cevher.
\newblock How gradient descent balances features: A dynamical analysis for two-layer neural networks.
\newblock In \emph{{ICLR}}, 2025.

\bibitem[Kunin et~al.(2025)Kunin, Marchetti, Chen, Karkada, Simon, DeWeese, Ganguli, and Miolane]{kunin2025alternating}
Daniel Kunin, Giovanni~Luca Marchetti, Feng Chen, Dhruva Karkada, James~B Simon, Michael~R DeWeese, Surya Ganguli, and Nina Miolane.
\newblock Alternating gradient flows: A theory of feature learning in two-layer neural networks.
\newblock \emph{arXiv preprint arXiv:2506.06489}, 2025.

\bibitem[Jacot et~al.(2018)Jacot, Gabriel, and Hongler]{jacot2018neural}
Arthur Jacot, Franck Gabriel, and Cl{\'e}ment Hongler.
\newblock Neural tangent kernel: Convergence and generalization in neural networks.
\newblock In \emph{{NeurIPS}}, 2018.

\bibitem[Chizat et~al.(2019)Chizat, Oyallon, and Bach]{chizat2019lazy}
Lenaic Chizat, Edouard Oyallon, and Francis Bach.
\newblock On lazy training in differentiable programming.
\newblock In \emph{{NeurIPS}}, 2019.

\bibitem[Kothapalli and Tirer(2025)]{kothapalli2025can}
Vignesh Kothapalli and Tom Tirer.
\newblock Can kernel methods explain how the data affects neural collapse?
\newblock \emph{{TMLR}}, 2025.

\bibitem[Lecun et~al.(1998)Lecun, Bottou, Bengio, and Haffner]{lecun1998gradient}
Y.~Lecun, L.~Bottou, Y.~Bengio, and P.~Haffner.
\newblock Gradient-based learning applied to document recognition.
\newblock \emph{Proceedings of the IEEE}, 86\penalty0 (11):\penalty0 2278--2324, 1998.

\bibitem[He et~al.(2016)He, Zhang, Ren, and Sun]{he2016deep}
Kaiming He, Xiangyu Zhang, Shaoqing Ren, and Jian Sun.
\newblock Deep residual learning for image recognition.
\newblock In \emph{{IEEE CVPR}}, 2016.

\bibitem[Krizhevsky(2009)]{krizhevsky2009learning}
Alex Krizhevsky.
\newblock Learning multiple layers of features from tiny images.
\newblock Technical report, University of Toronto, 2009.
\newblock Technical Report.

\bibitem[Ba et~al.(2016)Ba, Kiros, and Hinton]{ba2016layer}
Jimmy~Lei Ba, Jamie~Ryan Kiros, and Geoffrey~E Hinton.
\newblock Layer normalization.
\newblock \emph{arXiv preprint arXiv:1607.06450}, 2016.

\bibitem[Burer and Monteiro(2005)]{Burer2005}
Samuel Burer and Renato D.~C. Monteiro.
\newblock Local minima and convergence in low-rank semidefinite programming.
\newblock \emph{Math. Program.}, 103\penalty0 (3):\penalty0 427–444, July 2005.

\bibitem[Arora et~al.(2018)Arora, Cohen, Golowich, and Hu]{arora2018convergence}
Sanjeev Arora, Nadav Cohen, Noah Golowich, and Wei Hu.
\newblock A convergence analysis of gradient descent for deep linear neural networks.
\newblock In \emph{{ICLR}}, 2018.

\bibitem[Min et~al.(2021)Min, Tarmoun, Vidal, and Mallada]{mtvm21icml}
Hancheng Min, Salma Tarmoun, Ren\'e Vidal, and Enrique Mallada.
\newblock On the explicit role of initialization on the convergence and implicit bias of overparametrized linear networks.
\newblock In \emph{{ICML}}, 2021.

\bibitem[Min et~al.(2023)Min, Vidal, and Mallada]{min2023on}
Hancheng Min, Ren\'e Vidal, and Enrique Mallada.
\newblock On the convergence of gradient flow on multi-layer linear models.
\newblock In \emph{{ICML}}, 2023.

\bibitem[Xu et~al.(2023{\natexlab{b}})Xu, Min, Tarmoun, Mallada, and Vidal]{xu2023linear}
Ziqing Xu, Hancheng Min, Salma Tarmoun, Enrique Mallada, and Rene Vidal.
\newblock Linear convergence of gradient descent for finite width over-parametrized linear networks with general initialization.
\newblock In \emph{{AISTATS}}, 2023{\natexlab{b}}.

\bibitem[de~Oliveira et~al.(2023)de~Oliveira, Siami, and Sontag]{de2023dynamics}
Arthur Castello~B de~Oliveira, Milad Siami, and Eduardo~D Sontag.
\newblock Dynamics and perturbations of overparameterized linear neural networks.
\newblock In \emph{{IEEE CDC}}, 2023.

\bibitem[Arora et~al.(2019)Arora, Cohen, Hu, and Luo]{arora2019implicit}
Sanjeev Arora, Nadav Cohen, Wei Hu, and Yuping Luo.
\newblock Implicit regularization in deep matrix factorization.
\newblock \emph{{NeurIPS}}, 2019.

\bibitem[Jiang et~al.(2023)Jiang, Chen, and Ding]{jiang2023algorithmic}
Liwei Jiang, Yudong Chen, and Lijun Ding.
\newblock Algorithmic regularization in model-free overparametrized asymmetric matrix factorization.
\newblock \emph{SIAM Journal on Mathematics of Data Science}, 5\penalty0 (3):\penalty0 723--744, 2023.

\bibitem[Jin et~al.(2023)Jin, Li, Lyu, Du, and Lee]{jin2023understanding}
Jikai Jin, Zhiyuan Li, Kaifeng Lyu, Simon~Shaolei Du, and Jason~D. Lee.
\newblock Understanding incremental learning of gradient descent: A fine-grained analysis of matrix sensing.
\newblock In \emph{{ICML}}, 2023.

\bibitem[Kunin et~al.(2024)Kunin, Ravent\'{o}s, Domin\'{e}, Chen, Klindt, Saxe, and Ganguli]{Kunin2024get}
Daniel Kunin, Allan Ravent\'{o}s, Cl\'{e}mentine Domin\'{e}, Feng Chen, David Klindt, Andrew Saxe, and Surya Ganguli.
\newblock Get rich quick: exact solutions reveal how unbalanced initializations promote rapid feature learning.
\newblock In \emph{{NeurIPS}}, 2024.

\bibitem[Min and Vidal(2025)]{mv25cdc}
Hancheng Min and Ren\'e Vidal.
\newblock Understanding incremental learning with closed-form solution to gradient flow on overparamerterized matrix factorization.
\newblock In \emph{IEEE CDC}, 2025.

\bibitem[Hu et~al.(2022)Hu, Wallis, Allen-Zhu, Li, Wang, Wang, Chen, et~al.]{hu2022lora}
Edward~J Hu, Phillip Wallis, Zeyuan Allen-Zhu, Yuanzhi Li, Shean Wang, Lu~Wang, Weizhu Chen, et~al.
\newblock Lora: Low-rank adaptation of large language models.
\newblock In \emph{{ICLR}}, 2022.

\bibitem[Zhang et~al.(2025)Zhang, Liu, and Chen]{zhang2025lora}
Yuanhe Zhang, Fanghui Liu, and Yudong Chen.
\newblock Lora-one: One-step full gradient could suffice for fine-tuning large language models, provably and efficiently.
\newblock In \emph{{ICML}}, 2025.

\bibitem[Jang et~al.(2024)Jang, Lee, and Ryu]{jang2024lora}
Uijeong Jang, Jason~D Lee, and Ernest~K Ryu.
\newblock Lora training in the ntk regime has no spurious local minima.
\newblock In \emph{{ICML}}, 2024.

\bibitem[Kim et~al.(2025)Kim, Kim, and Ryu]{kim2025lora}
Junsu Kim, Jaeyeon Kim, and Ernest~K. Ryu.
\newblock Lo{RA} training provably converges to a low-rank global minimum or it fails loudly (but it probably won't fail).
\newblock In \emph{{ICML}}, 2025.

\bibitem[Tarmoun et~al.(2021)Tarmoun, Fran\c{c}a, Haeffele, and Vidal]{pmlr-v139-tarmoun21a}
Salma Tarmoun, Guilherme Fran\c{c}a, Benjamin~D Haeffele, and Ren\'e Vidal.
\newblock Understanding the dynamics of gradient flow in overparameterized linear models.
\newblock In \emph{{ICML}}, 2021.

\bibitem[Du et~al.(2019)Du, Zhai, Poczos, and Singh]{du2019a}
Simon~S Du, Xiyu Zhai, Barnabas Poczos, and Aarti Singh.
\newblock Gradient descent provably optimizes over-parameterized neural networks.
\newblock In \emph{{ICLR}}, 2019.

\bibitem[Oymak and Soltanolkotabi(2020)]{oymak2020toward}
Samet Oymak and Mahdi Soltanolkotabi.
\newblock Toward moderate overparameterization: Global convergence guarantees for training shallow neural networks.
\newblock \emph{IEEE Journal on Selected Areas in Information Theory}, 1\penalty0 (1):\penalty0 84--105, 2020.

\bibitem[Lee et~al.()Lee, Xiao, Schoenholz, Bahri, Novak, Sohl-Dickstein, and Pennington]{lee2019wide}
Jaehoon Lee, Lechao Xiao, Samuel Schoenholz, Yasaman Bahri, Roman Novak, Jascha Sohl-Dickstein, and Jeffrey Pennington.
\newblock Wide neural networks of any depth evolve as linear models under gradient descent.
\newblock In \emph{{NeurIPS}}.

\bibitem[Ghorbani et~al.(2019)Ghorbani, Mei, Misiakiewicz, and Montanari]{ghorbani2019limitations}
Behrooz Ghorbani, Song Mei, Theodor Misiakiewicz, and Andrea Montanari.
\newblock Limitations of lazy training of two-layers neural network.
\newblock \emph{{NeurIPS}}, 2019.

\bibitem[Yehudai and Shamir(2019)]{yehudai2019power}
Gilad Yehudai and Ohad Shamir.
\newblock On the power and limitations of random features for understanding neural networks.
\newblock \emph{{NeurIPS}}, 2019.

\bibitem[Mei et~al.(2019)Mei, Misiakiewicz, and Montanari]{mei2019mean}
Song Mei, Theodor Misiakiewicz, and Andrea Montanari.
\newblock Mean-field theory of two-layers neural networks: dimension-free bounds and kernel limit.
\newblock In \emph{{COLT}}, 2019.

\bibitem[Chizat and Bach(2018)]{chizat2018global}
Lenaic Chizat and Francis Bach.
\newblock On the global convergence of gradient descent for over-parameterized models using optimal transport.
\newblock \emph{{NeurIPS}}, 2018.

\bibitem[Damian et~al.(2022)Damian, Lee, and Soltanolkotabi]{damian2022neural}
Alexandru Damian, Jason Lee, and Mahdi Soltanolkotabi.
\newblock Neural networks can learn representations with gradient descent.
\newblock In \emph{{COLT}}, 2022.

\bibitem[Telgarsky(2023)]{telgarsky2023feature}
Matus Telgarsky.
\newblock Feature selection and low test error in shallow low-rotation relu networks.
\newblock In \emph{ICLR}, 2023.

\bibitem[Frei et~al.(2022)Frei, Chatterji, and Bartlett]{frei2022benign}
Spencer Frei, Niladri~S Chatterji, and Peter Bartlett.
\newblock Benign overfitting without linearity: Neural network classifiers trained by gradient descent for noisy linear data.
\newblock In \emph{{COLT}}, 2022.

\end{thebibliography}
\bibliographystyle{unsrtnat}

\newpage
\section*{NeurIPS Paper Checklist}

\begin{enumerate}

\item {\bf Claims}
    \item[] Question: Do the main claims made in the abstract and introduction accurately reflect the paper's contributions and scope?
    \item[] Answer: \answerYes{} 
    \item[] Justification: Our abstract and introduction clearly state our claims and underlying assumptions.
    \item[] Guidelines:
    \begin{itemize}
        \item The answer NA means that the abstract and introduction do not include the claims made in the paper.
        \item The abstract and/or introduction should clearly state the claims made, including the contributions made in the paper and important assumptions and limitations. A No or NA answer to this question will not be perceived well by the reviewers. 
        \item The claims made should match theoretical and experimental results, and reflect how much the results can be expected to generalize to other settings. 
        \item It is fine to include aspirational goals as motivation as long as it is clear that these goals are not attained by the paper. 
    \end{itemize}

\item {\bf Limitations}
    \item[] Question: Does the paper discuss the limitations of the work performed by the authors?
    \item[] Answer: \answerYes{} 
    \item[] Justification: We discuss the limitations in Remark $1$ and $2$ and paragraph \textbf{Limitations of current analysis}.
    \item[] Guidelines:
    \begin{itemize}
        \item The answer NA means that the paper has no limitation while the answer No means that the paper has limitations, but those are not discussed in the paper. 
        \item The authors are encouraged to create a separate "Limitations" section in their paper.
        \item The paper should point out any strong assumptions and how robust the results are to violations of these assumptions (e.g., independence assumptions, noiseless settings, model well-specification, asymptotic approximations only holding locally). The authors should reflect on how these assumptions might be violated in practice and what the implications would be.
        \item The authors should reflect on the scope of the claims made, e.g., if the approach was only tested on a few datasets or with a few runs. In general, empirical results often depend on implicit assumptions, which should be articulated.
        \item The authors should reflect on the factors that influence the performance of the approach. For example, a facial recognition algorithm may perform poorly when image resolution is low or images are taken in low lighting. Or a speech-to-text system might not be used reliably to provide closed captions for online lectures because it fails to handle technical jargon.
        \item The authors should discuss the computational efficiency of the proposed algorithms and how they scale with dataset size.
        \item If applicable, the authors should discuss possible limitations of their approach to address problems of privacy and fairness.
        \item While the authors might fear that complete honesty about limitations might be used by reviewers as grounds for rejection, a worse outcome might be that reviewers discover limitations that aren't acknowledged in the paper. The authors should use their best judgment and recognize that individual actions in favor of transparency play an important role in developing norms that preserve the integrity of the community. Reviewers will be specifically instructed to not penalize honesty concerning limitations.
    \end{itemize}

\item {\bf Theory assumptions and proofs}
    \item[] Question: For each theoretical result, does the paper provide the full set of assumptions and a complete (and correct) proof?
    \item[] Answer: \answerYes{} 
    \item[] Justification: Our assumptions are stated in the main paper, and we provide proofs of theorems in our technical appendices
    \item[] Guidelines:
    \begin{itemize}
        \item The answer NA means that the paper does not include theoretical results. 
        \item All the theorems, formulas, and proofs in the paper should be numbered and cross-referenced.
        \item All assumptions should be clearly stated or referenced in the statement of any theorems.
        \item The proofs can either appear in the main paper or the supplemental material, but if they appear in the supplemental material, the authors are encouraged to provide a short proof sketch to provide intuition. 
        \item Inversely, any informal proof provided in the core of the paper should be complemented by formal proofs provided in appendix or supplemental material.
        \item Theorems and Lemmas that the proof relies upon should be properly referenced. 
    \end{itemize}

    \item {\bf Experimental result reproducibility}
    \item[] Question: Does the paper fully disclose all the information needed to reproduce the main experimental results of the paper to the extent that it affects the main claims and/or conclusions of the paper (regardless of whether the code and data are provided or not)?
    \item[] Answer: \answerYes{} 
    \item[] Justification: We provide experimental details in Appendices \ref{app_ssec_mnist} and \ref{app_ssec_norm}.
    \item[] Guidelines:
    \begin{itemize}
        \item The answer NA means that the paper does not include experiments.
        \item If the paper includes experiments, a No answer to this question will not be perceived well by the reviewers: Making the paper reproducible is important, regardless of whether the code and data are provided or not.
        \item If the contribution is a dataset and/or model, the authors should describe the steps taken to make their results reproducible or verifiable. 
        \item Depending on the contribution, reproducibility can be accomplished in various ways. For example, if the contribution is a novel architecture, describing the architecture fully might suffice, or if the contribution is a specific model and empirical evaluation, it may be necessary to either make it possible for others to replicate the model with the same dataset, or provide access to the model. In general. releasing code and data is often one good way to accomplish this, but reproducibility can also be provided via detailed instructions for how to replicate the results, access to a hosted model (e.g., in the case of a large language model), releasing of a model checkpoint, or other means that are appropriate to the research performed.
        \item While NeurIPS does not require releasing code, the conference does require all submissions to provide some reasonable avenue for reproducibility, which may depend on the nature of the contribution. For example
        \begin{enumerate}
            \item If the contribution is primarily a new algorithm, the paper should make it clear how to reproduce that algorithm.
            \item If the contribution is primarily a new model architecture, the paper should describe the architecture clearly and fully.
            \item If the contribution is a new model (e.g., a large language model), then there should either be a way to access this model for reproducing the results or a way to reproduce the model (e.g., with an open-source dataset or instructions for how to construct the dataset).
            \item We recognize that reproducibility may be tricky in some cases, in which case authors are welcome to describe the particular way they provide for reproducibility. In the case of closed-source models, it may be that access to the model is limited in some way (e.g., to registered users), but it should be possible for other researchers to have some path to reproducing or verifying the results.
        \end{enumerate}
    \end{itemize}

\item {\bf Open access to data and code}
    \item[] Question: Does the paper provide open access to the data and code, with sufficient instructions to faithfully reproduce the main experimental results, as described in supplemental material?
    \item[] Answer: \answerYes{} 
    \item[] Justification: We attach the code in the supplemental material
    \item[] Guidelines:
    \begin{itemize}
        \item The answer NA means that paper does not include experiments requiring code.
        \item Please see the NeurIPS code and data submission guidelines (\url{https://nips.cc/public/guides/CodeSubmissionPolicy}) for more details.
        \item While we encourage the release of code and data, we understand that this might not be possible, so “No” is an acceptable answer. Papers cannot be rejected simply for not including code, unless this is central to the contribution (e.g., for a new open-source benchmark).
        \item The instructions should contain the exact command and environment needed to run to reproduce the results. See the NeurIPS code and data submission guidelines (\url{https://nips.cc/public/guides/CodeSubmissionPolicy}) for more details.
        \item The authors should provide instructions on data access and preparation, including how to access the raw data, preprocessed data, intermediate data, and generated data, etc.
        \item The authors should provide scripts to reproduce all experimental results for the new proposed method and baselines. If only a subset of experiments are reproducible, they should state which ones are omitted from the script and why.
        \item At submission time, to preserve anonymity, the authors should release anonymized versions (if applicable).
        \item Providing as much information as possible in supplemental material (appended to the paper) is recommended, but including URLs to data and code is permitted.
    \end{itemize}

\item {\bf Experimental setting/details}
    \item[] Question: Does the paper specify all the training and test details (e.g., data splits, hyperparameters, how they were chosen, type of optimizer, etc.) necessary to understand the results?
    \item[] Answer: \answerYes{} 
    \item[] Justification: We provide experimental details in the Appendices \ref{app_ssec_mnist} and \ref{app_ssec_norm}
    \item[] Guidelines:
    \begin{itemize}
        \item The answer NA means that the paper does not include experiments.
        \item The experimental setting should be presented in the core of the paper to a level of detail that is necessary to appreciate the results and make sense of them.
        \item The full details can be provided either with the code, in appendix, or as supplemental material.
    \end{itemize}

\item {\bf Experiment statistical significance}
    \item[] Question: Does the paper report error bars suitably and correctly defined or other appropriate information about the statistical significance of the experiments?
    \item[] Answer: \answerYes{} 
    \item[] Justification: See Figure \ref{fig:nc_norm}
    \item[] Guidelines:
    \begin{itemize}
        \item The answer NA means that the paper does not include experiments.
        \item The authors should answer "Yes" if the results are accompanied by error bars, confidence intervals, or statistical significance tests, at least for the experiments that support the main claims of the paper.
        \item The factors of variability that the error bars are capturing should be clearly stated (for example, train/test split, initialization, random drawing of some parameter, or overall run with given experimental conditions).
        \item The method for calculating the error bars should be explained (closed form formula, call to a library function, bootstrap, etc.)
        \item The assumptions made should be given (e.g., Normally distributed errors).
        \item It should be clear whether the error bar is the standard deviation or the standard error of the mean.
        \item It is OK to report 1-sigma error bars, but one should state it. The authors should preferably report a 2-sigma error bar than state that they have a 96\% CI, if the hypothesis of Normality of errors is not verified.
        \item For asymmetric distributions, the authors should be careful not to show in tables or figures symmetric error bars that would yield results that are out of range (e.g. negative error rates).
        \item If error bars are reported in tables or plots, The authors should explain in the text how they were calculated and reference the corresponding figures or tables in the text.
    \end{itemize}

\item {\bf Experiments compute resources}
    \item[] Question: For each experiment, does the paper provide sufficient information on the computer resources (type of compute workers, memory, time of execution) needed to reproduce the experiments?
    \item[] Answer: \answerNo{} 
    \item[] Justification: The experiments are for validating our theoretical findings and do not requires heavy computer resources.
    \item[] Guidelines:
    \begin{itemize}
        \item The answer NA means that the paper does not include experiments.
        \item The paper should indicate the type of compute workers CPU or GPU, internal cluster, or cloud provider, including relevant memory and storage.
        \item The paper should provide the amount of compute required for each of the individual experimental runs as well as estimate the total compute. 
        \item The paper should disclose whether the full research project required more compute than the experiments reported in the paper (e.g., preliminary or failed experiments that didn't make it into the paper). 
    \end{itemize}
    
\item {\bf Code of ethics}
    \item[] Question: Does the research conducted in the paper conform, in every respect, with the NeurIPS Code of Ethics \url{https://neurips.cc/public/EthicsGuidelines}?
    \item[] Answer: \answerYes{} 
    \item[] Justification: the research follows the Code of Ethics
    \item[] Guidelines:
    \begin{itemize}
        \item The answer NA means that the authors have not reviewed the NeurIPS Code of Ethics.
        \item If the authors answer No, they should explain the special circumstances that require a deviation from the Code of Ethics.
        \item The authors should make sure to preserve anonymity (e.g., if there is a special consideration due to laws or regulations in their jurisdiction).
    \end{itemize}

\item {\bf Broader impacts}
    \item[] Question: Does the paper discuss both potential positive societal impacts and negative societal impacts of the work performed?
    \item[] Answer: \answerNA{} 
    \item[] Justification: In this theory paper, there is no potential societal consequence that we feel must be specifically highlighted.
    \item[] Guidelines:
    \begin{itemize}
        \item The answer NA means that there is no societal impact of the work performed.
        \item If the authors answer NA or No, they should explain why their work has no societal impact or why the paper does not address societal impact.
        \item Examples of negative societal impacts include potential malicious or unintended uses (e.g., disinformation, generating fake profiles, surveillance), fairness considerations (e.g., deployment of technologies that could make decisions that unfairly impact specific groups), privacy considerations, and security considerations.
        \item The conference expects that many papers will be foundational research and not tied to particular applications, let alone deployments. However, if there is a direct path to any negative applications, the authors should point it out. For example, it is legitimate to point out that an improvement in the quality of generative models could be used to generate deepfakes for disinformation. On the other hand, it is not needed to point out that a generic algorithm for optimizing neural networks could enable people to train models that generate Deepfakes faster.
        \item The authors should consider possible harms that could arise when the technology is being used as intended and functioning correctly, harms that could arise when the technology is being used as intended but gives incorrect results, and harms following from (intentional or unintentional) misuse of the technology.
        \item If there are negative societal impacts, the authors could also discuss possible mitigation strategies (e.g., gated release of models, providing defenses in addition to attacks, mechanisms for monitoring misuse, mechanisms to monitor how a system learns from feedback over time, improving the efficiency and accessibility of ML).
    \end{itemize}
    
\item {\bf Safeguards}
    \item[] Question: Does the paper describe safeguards that have been put in place for responsible release of data or models that have a high risk for misuse (e.g., pretrained language models, image generators, or scraped datasets)?
    \item[] Answer: \answerNA{} 
    \item[] Justification: the paper poses no such risks
    \item[] Guidelines:
    \begin{itemize}
        \item The answer NA means that the paper poses no such risks.
        \item Released models that have a high risk for misuse or dual-use should be released with necessary safeguards to allow for controlled use of the model, for example by requiring that users adhere to usage guidelines or restrictions to access the model or implementing safety filters. 
        \item Datasets that have been scraped from the Internet could pose safety risks. The authors should describe how they avoided releasing unsafe images.
        \item We recognize that providing effective safeguards is challenging, and many papers do not require this, but we encourage authors to take this into account and make a best faith effort.
    \end{itemize}

\item {\bf Licenses for existing assets}
    \item[] Question: Are the creators or original owners of assets (e.g., code, data, models), used in the paper, properly credited and are the license and terms of use explicitly mentioned and properly respected?
    \item[] Answer: \answerYes{} 
    \item[] Justification: original papers that produced the datasets are cited
    \item[] Guidelines:
    \begin{itemize}
        \item The answer NA means that the paper does not use existing assets.
        \item The authors should cite the original paper that produced the code package or dataset.
        \item The authors should state which version of the asset is used and, if possible, include a URL.
        \item The name of the license (e.g., CC-BY 4.0) should be included for each asset.
        \item For scraped data from a particular source (e.g., website), the copyright and terms of service of that source should be provided.
        \item If assets are released, the license, copyright information, and terms of use in the package should be provided. For popular datasets, \url{paperswithcode.com/datasets} has curated licenses for some datasets. Their licensing guide can help determine the license of a dataset.
        \item For existing datasets that are re-packaged, both the original license and the license of the derived asset (if it has changed) should be provided.
        \item If this information is not available online, the authors are encouraged to reach out to the asset's creators.
    \end{itemize}

\item {\bf New assets}
    \item[] Question: Are new assets introduced in the paper well documented and is the documentation provided alongside the assets?
    \item[] Answer: \answerNA{} 
    \item[] Justification: the paper does not release new assets
    \item[] Guidelines:
    \begin{itemize}
        \item The answer NA means that the paper does not release new assets.
        \item Researchers should communicate the details of the dataset/code/model as part of their submissions via structured templates. This includes details about training, license, limitations, etc. 
        \item The paper should discuss whether and how consent was obtained from people whose asset is used.
        \item At submission time, remember to anonymize your assets (if applicable). You can either create an anonymized URL or include an anonymized zip file.
    \end{itemize}

\item {\bf Crowdsourcing and research with human subjects}
    \item[] Question: For crowdsourcing experiments and research with human subjects, does the paper include the full text of instructions given to participants and screenshots, if applicable, as well as details about compensation (if any)? 
    \item[] Answer: \answerNA{} 
    \item[] Justification: the paper does not involve crowdsourcing nor research with human subjects
    \item[] Guidelines:
    \begin{itemize}
        \item The answer NA means that the paper does not involve crowdsourcing nor research with human subjects.
        \item Including this information in the supplemental material is fine, but if the main contribution of the paper involves human subjects, then as much detail as possible should be included in the main paper. 
        \item According to the NeurIPS Code of Ethics, workers involved in data collection, curation, or other labor should be paid at least the minimum wage in the country of the data collector. 
    \end{itemize}

\item {\bf Institutional review board (IRB) approvals or equivalent for research with human subjects}
    \item[] Question: Does the paper describe potential risks incurred by study participants, whether such risks were disclosed to the subjects, and whether Institutional Review Board (IRB) approvals (or an equivalent approval/review based on the requirements of your country or institution) were obtained?
    \item[] Answer: \answerNA{} 
    \item[] Justification: the paper does not involve crowdsourcing nor research with human subjects
    \item[] Guidelines:
    \begin{itemize}
        \item The answer NA means that the paper does not involve crowdsourcing nor research with human subjects.
        \item Depending on the country in which research is conducted, IRB approval (or equivalent) may be required for any human subjects research. If you obtained IRB approval, you should clearly state this in the paper. 
        \item We recognize that the procedures for this may vary significantly between institutions and locations, and we expect authors to adhere to the NeurIPS Code of Ethics and the guidelines for their institution. 
        \item For initial submissions, do not include any information that would break anonymity (if applicable), such as the institution conducting the review.
    \end{itemize}

\item {\bf Declaration of LLM usage}
    \item[] Question: Does the paper describe the usage of LLMs if it is an important, original, or non-standard component of the core methods in this research? Note that if the LLM is used only for writing, editing, or formatting purposes and does not impact the core methodology, scientific rigorousness, or originality of the research, declaration is not required.
    \item[] Answer: \answerNA{} 
    \item[] Justification: No LLMs used for writing this paper.
    \item[] Guidelines:
    \begin{itemize}
        \item The answer NA means that the core method development in this research does not involve LLMs as any important, original, or non-standard components.
        \item Please refer to our LLM policy (\url{https://neurips.cc/Conferences/2025/LLM}) for what should or should not be described.
    \end{itemize}

\end{enumerate}

\appendix
\newpage
\allowdisplaybreaks



\section{Additional Remarks and Related Works}\label{app_rem_rel}
We make the following remark on ReLU networks with biases
\paragraph{Extension to ReLU nets with biases} When considering two-layer ReLU networks with biases, since $\sigma(\langle \w, \x\rangle +b)=\sigma(\langle [\w;b],[\x;1]\rangle)$, adding a bias term effectively adds one homogenous coordinate to the entire dataset. Therefore, our results still hold if the augmented dataset satisfies the orthogonal separability condition. Notably, homogenous coordinate increases data correlation: $\langle [\x_i;1],[\x_j;1]\rangle=\langle \x_i,\x_j\rangle +1$, thus it mainly affects the negative correlation between data points with different labels. In the case when $\min_i \|\x_i\|\gg 1$, the orthogonal separability (among augmented data $\{[\x_i;1]\}_{i=1}^n$) still holds with the bias term.
\subsection{Additional related works}
\paragraph{Matrix factorization in deep learning theory} A major part of the research efforts in the theoretical understandings of deep learning is through tractable mathematical problems, where similar phenomena can manifest to those observed in deep learning practice. Among these problems, matrix factorization~\cite{Burer2005} is an important one, which has been studied for understanding the convergence rate of gradient descent on neural networks~\cite{arora2018convergence,mtvm21icml,min2023on,xu2023linear,de2023dynamics}, the implicit bias of neural network training algorithms~\cite{gunasekar2017implicit,arora2019implicit,ji2019gradient,saxe2014exact,Gidel2019,jacot2021saddle,stoger2021small,jiang2023algorithmic,jin2023understanding,Kunin2024get,mv25cdc}, and, as we already mentioned in the introduction, the NC phenomena~\cite{mixon2022neural,ji2021unconstrained,zhu2021geometric,fang2021exploring,zhou2022optimization,tirer2022extended,sukenik2023deep,jiang2024generalized}. More recently, the prevalence of LoRA~\cite{hu2022lora} in practice has motivated many works on its theoretical properties~\cite{xu2025understanding,zhang2025lora,jang2024lora,kim2025lora} through analyzing matrix factorization problems. While matrix factorization problems offer many valuable insights into various aspects of deep learning, they generally neglect the role of training data in these problems. For example, the convergence analysis often assumes input data with isotropic covariance~\cite{arora2018convergence,pmlr-v139-tarmoun21a}, and as we have discussed, the analysis of NC often assumes the last-layer features as an unconstrained optimization variable~\cite{mixon2022neural,ji2021unconstrained,zhu2021geometric}. Given that NC is characterizing the ability of neural networks to map input data to structured latent features, it is also important to consider the role of input data. Indeed, our work highlights how the input data structure induces a directional collapse rather than a singleton collapse.

\paragraph{Gradient descent on shallow neural networks}Theoretical properties of the gradient descent algorithms on shallow networks have been studied in different learning regimes and from various perspectives. Earlier works~\cite{du2019a,oymak2020toward} concern the convergence of gradient descent in the so-called kernel regime, where under specific settings (large network width, random weight initialization with large variance, etc.) the linearization around network weight initialization holds valid throughout training~\cite{jacot2018neural,lee2019wide}. However, limitations of such ``lazy regime" training~\cite{chizat2019lazy} are identified through some specific student-teacher learning settings~\cite{ghorbani2019limitations, yehudai2019power}. This motivates the study of convergence in small initialization settings, often called \emph{active} or \emph{feature learning regime}~\cite{woodworth2020kernel}. From a dynamics perspective, in the infinite width limit with proper weight initialization scaling, the weight evolution during training can be characterized by some mean-field dynamics~\cite{mei2019mean,chizat2018global}; In the finite width with vanishing weight initialization, the early training phase can be characterized by the directional alignment between the weights and the input data~\cite{maennel2018gradient,boursier2024early,tsoy2024simplicity}. From a generalization perspective, learning in the feature learning regime can enjoy many advantages, such as sample efficiency~\cite{damian2022neural,telgarsky2023feature} or benign overfitting~\cite{frei2022benign}. Our work studies the convergence of ReLU networks in the feature learning regime and contributes to this line of work in the following regards: First, prior works primarily concern the convergence of the weights, while our result discusses its implications on the learned last-layer feature. Second, from a technical perspective, our result addresses several challenges emerging from considering a multi-class problem with the cross-entropy loss.

\newpage
\section{Experimental Details}
\subsection{Experimental details on classifying MNIST digits}\label{app_ssec_mnist}

\myparagraph{Preprocessing data} We first preprocess the training data, i.e. digits $\{0,1,2\}$ by centering: $\x_i\leftarrow \x_i-\bar{\x}$, where $\bar{\x}=\sum_{i\in[n]}\x_i/n$ is the global mean image of the entire training data. Then we have plotted the normalized correlation matrix $\lhp \lan \frac{\x_i}{\|\x_{i}\|},\frac{\x_{i'}}{\|\x_{i'}\|}\ran\rhp_{i,i'\in[n]}$ of the centered data, showing in Figure \ref{fig:nc_mnist}(a) that two data points of the same digit are likely to have a positive correlation and those different digits are likely to have negative correlations. This suggests that the orthogonality separability assumption in Assumption \ref{assump_data} is approximately satisfied.

\myparagraph{Training} Given the centered data, for a two-layer ReLU network \eqref{eq_two_layer_relu} of width-$50$, we initialize all entries of the network weights with i.i.d. Gaussians with variance $10^{-6}$. Then we run SGD of batch size $1000$ with learning rate $0.1$ for 50 epochs. For the trained network, we visualize the output neuron weights $\v_j,j\in[h]$ and determine the $\mathcal{N}_k$ by letting $\mathcal{N}_k=\{j\in[h]: k=\arg\max_{k'}\lan \tilde{\e}_{k'},\v_j\ran\}$, then also visualize the average direction of the input neuron weights $\frac{\sum_{j\in\mathcal{N}_k}\w_j}{\|\sum_{j\in\mathcal{N}_k}\w_j\|}$ for each group $\mathcal{N}_k$, as shown in Figure \ref{fig:nc_mnist}(b).

\subsection{Experimental details on normalization layers}\label{app_ssec_norm}

\myparagraph{Modified ResNet} We take the ResNet18 and ResNet50 implementations (The first conv layer is modified to accommodate MNIST and CIFAR10 input sizes) in Pytorch and replace the final linear classifier with a two-layer ReLU network of width-$1000$, and also add a normalization layer (Identity/None, LayerNorm, or RMSNorm) between the classifier and the feature extractor. The initialization follows the Pytorch default.

\myparagraph{Training} For each choice of (model: ResNet18, ResNet50)-(Dataset: MNIST, CIFAR10), we repeat 5 runs (with different random seeds) of SGD of batch size $128$ and learning rate $0.1$ (for ResNet18) and $0.02$ (for ResNet50) with momentum $0.95$ for 50 epochs; and for every 20 epochs, we reduce the learning rate to $0.1$ of its current value. We plot the NC metrics and test accuracy against training epochs in Figure \ref{fig:nc_norm}.

\begin{figure}[h]
    \centering
    \includegraphics[width=0.9\linewidth]{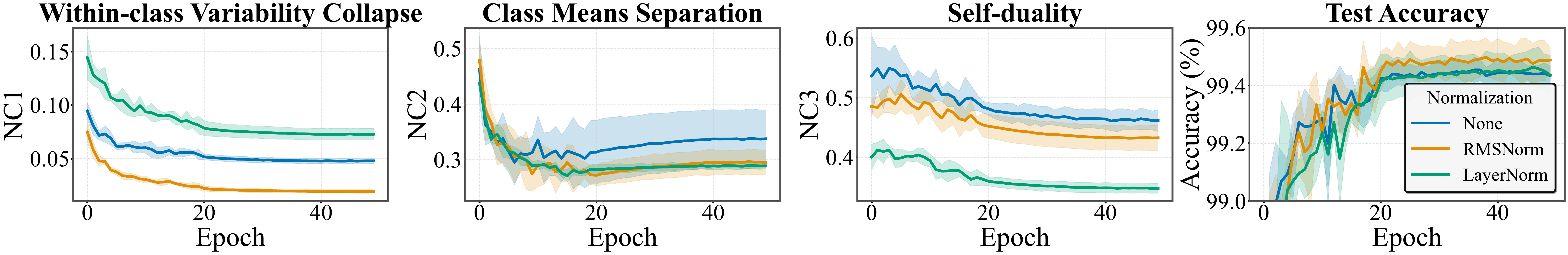}
    \includegraphics[width=0.9\linewidth]{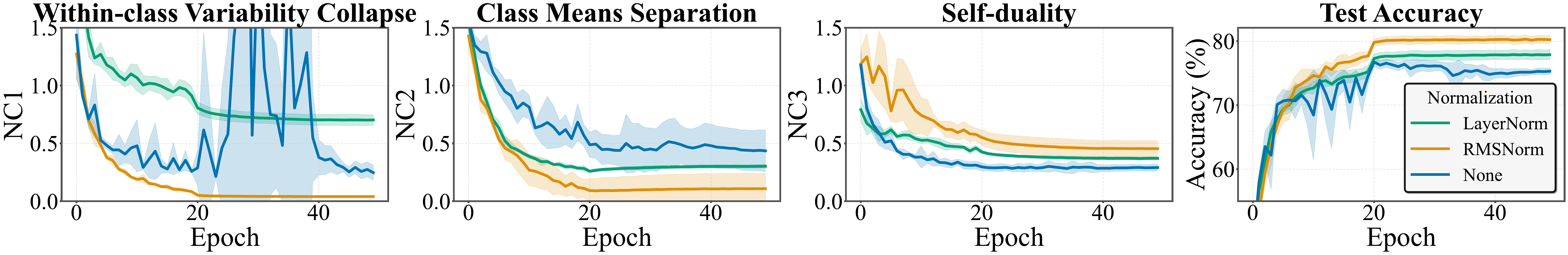}
    \caption{Measuring NC in trained modified ResNet50 on MNIST (top) and CIFAR10 (bottom)}
    \label{fig:nc_norm_app}
    \vspace{-0.2cm}
\end{figure}

\myparagraph{NC metrics} The NC metrics follow those used in prior works except for the projected self-duality. Given the class means $\bphi_k=\frac{\sum_{i=\in\mathcal{I}_k}\bphi_{\btheta}(\x_i)}{|\mathcal{I}_k|}$ and global mean $\bar{\bphi}=\sum_{k=1}^K\bphi_k/K$, NC1 is defined to be the ratio between intra-class variance and the inter-class variance $\frac{\sum_{k=1}^K\sum_{i\in\mathcal{I}_k}\|\bphi_{\btheta}(\x_i)-\bphi_k\|^2/|\mathcal{I}_k|}{\sum_{k=1}^K\|\bphi_k-\bar{\bphi}\|^2}$. NC2 is defined to be the proximity of the gram matrix of the class mean directions to the identity matrix $\| \frac{\bm{G}}{\|\bm{G}\|_F} - \frac{1}{\sqrt{K}}\bI\|_F$, where $\bm{G}=\bmt \bar{\bphi}_1 & \cdots & \bar{\bphi}_K\emt^\top \bmt \bar{\bphi}_1 & \cdots & \bar{\bphi}_K\emt$. NC3 is defined to be the proximity of $\V\bar{\bm{\Phi}}^\dagger$ to an identity matrix $\| \frac{\V\bar{\bm{\Phi}}^\dagger}{\|\V\bar{\bm{\Phi}}^\dagger\|_F} - \frac{1}{\sqrt{K-1}}\tilde{\bm{E}}\|_F$.

In the main paper, we have only provided the plot for ResNet18. We show the plot for ResNet50 in Figure \ref{fig:nc_norm_app}.

\newpage
\section{Neural Alignment under Multi-class Orthogonally Separable Data}\label{app_sec_align}

\subsection{Basics on neuron dynamics under multi-class problems}\label{app_ssec_basics}
The differential inclusion $\dot{\btheta}\in -\nabla_{\btheta}\mathcal{L}(\btheta)$ gives rise to the following characterization of the time derivatives of neuron weights $\forall j\in[h]$:
\begin{align}
    \ddt\w_j&=\sum_{i=1}^n\xi_{ij}\lan \y_i-\hat{\y}_i, \v_j\ran\x_i\,,\\
    \ddt\v_j&=\sum_{i=1}^n\xi_{ij}(\y_i-\hat{\y}_i)\lan\x_i, \w_j\ran\,,\\
    &\text{where } \xi_{ij}\begin{cases}
        =1,& \lan \x_i,\w_j\ran>0\\
        \in[0,1], &\lan \x_i,\w_j\ran=0\\
        =0, & \lan \x_i,\w_j\ran<0
    \end{cases},\ \hat{\y}_i=\text{Softmax}(\f(\x_i;\btheta))\nonumber
\end{align}
It will become clear soon that it is convenient to decompose the weight dynamics into those of the weight norm and of the weight direction, for which we use the balancedness that $\|\w_j\|\equiv \|\v_j\|,\forall j$:
\begin{align}
    \textbf{(weight norm dynamics)}&\nonumber\\
    \ddt\|\w_j\|^2\lp \text{also }\ddt\|\v_j\|^2\rp&=2\sum_{i=1}^n\xi_{ij}\lan \y_i-\hat{\y}_i, \v_j\ran\lan\x_i,\w_j\ran\nonumber\\
    &=2\sum_{i=1}^n\xi_{ij}\lan \y_i-\hat{\y}_i,\frac{\v_j}{\|\v_j\|}\ran\lan\x_i,\frac{\w_j}{\|\w_j\|}\ran\|\w_j\|^2\,;\label{eq_norm_dym}\\
    \textbf{(input neuron angular dynamics)}&\nonumber\\
    \ddt\frac{\w_j}{\|\w_j\|}&=\Pi_{\w_j}^\perp\frac{1}{\|\w_j\|}\sum_{i=1}^n\xi_{ij}\lan \y_i-\hat{\y}_i,\v_j\ran\x_i\nonumber\\
    &=\Pi_{\w_j}^\perp\sum_{i=1}^n\xi_{ij}\lan \y_i-\hat{\y}_i, \frac{\v_j}{\|\v_j\|}\ran\x_i\,;\label{eq_in_angle_dym}\\
    \textbf{(output neuron angular dynamics)}&\nonumber\\
    \ddt\frac{\v_j}{\|\v_j\|}&=\Pi_{\v_j}^\perp\frac{1}{\|\v_j\|}\sum_{i=1}^n\xi_{ij}(\y_i-\hat{\y}_i)\lan\x_i,\w_j\ran\nonumber\\
    &=\Pi_{\v_j}^\perp\sum_{i=1}^n\xi_{ij}\lan \x_i,\frac{\w_j}{\|\w_j\|}\ran(\y_i-\hat{\y}_i)\,,\label{eq_out_angle_dym}
\end{align}
where $\Pi_{\u}^\perp:=\lp \bI-\frac{\u\u^\top}{\|\u\|^2}\rp$ denote the project matrix onto the orthogonal complement of $\u$. 

By inspecting \eqref{eq_norm_dym}\eqref{eq_in_angle_dym}\eqref{eq_out_angle_dym}, we note that the dynamic of each neuron pair $(\w_j,\v_j)$ is almost decoupled from each other except for the interaction through $\hat{\y}_i, i\in[n]$. Interestingly, at the early phase of the GF, (we will show that) the norm of the weights remains close to zero, resulting in $\hat{\y}_i\simeq \frac{1}{K}\one, \forall i\in[n]$ thus fully decouples the neuron pair dynamics, the precise statement on such an approximation $\hat{\y}_i\simeq \frac{1}{K}\one$ is as follow:
\begin{lemma}\label{app_lem_sftm_bd}
    $\|\hat{\y}_i-\frac{1}{K}\one\|\leq \frac{8}{\sqrt{K}}\|\f(\x_i;\btheta)\|$ whenever $\|\f(\x_i;\btheta)\|\leq \frac{1}{4}$.
\end{lemma}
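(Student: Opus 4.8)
The plan is to treat $\hat{\y}_i=\mathrm{Softmax}(\bm z)$ with $\bm z:=\f(\x_i;\btheta)$ as a smooth function of $\bm z$ and to control its deviation from its value $\frac1K\one$ at $\bm z=\bm{0}$ by integrating the Jacobian of $\mathrm{Softmax}$ along the straight segment $t\mapsto t\bm z$, $t\in[0,1]$. Writing $\bm p(\w):=\mathrm{Softmax}(\w)$, the Jacobian is $J(\w)=\diag(\bm p(\w))-\bm p(\w)\bm p(\w)^\top$, so that
\[
\hat{\y}_i-\tfrac1K\one=\mathrm{Softmax}(\bm z)-\mathrm{Softmax}(\bm{0})=\int_0^1 J(t\bm z)\,\bm z\,dt ,
\]
and hence $\big\|\hat{\y}_i-\frac1K\one\big\|\le\big(\sup_{t\in[0,1]}\|J(t\bm z)\|_{\mathrm{op}}\big)\,\|\bm z\|$ by the (integral) triangle inequality. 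Everything then reduces to a uniform bound on the Jacobian's operator norm along the segment.

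First I would note that $J(\w)$ is symmetric positive semidefinite, so $\|J(\w)\|_{\mathrm{op}}=\max_{\|\v\|=1}\v^\top J(\w)\v$; and for a unit vector $\v$ this quadratic form is exactly the variance of the random variable taking value $[\v]_k$ with probability $[\bm p(\w)]_k$, which is at most its second moment $\sum_k[\bm p(\w)]_k[\v]_k^2\le\max_k[\bm p(\w)]_k$ (using $\|\v\|=1$). Therefore $\|J(\w)\|_{\mathrm{op}}\le\max_k[\mathrm{Softmax}(\w)]_k$. Next I would bound $\max_k[\mathrm{Softmax}(t\bm z)]_k$ using the hypothesis $\|\bm z\|\le\frac14$: since $\|t\bm z\|_\infty\le\|t\bm z\|\le\|\bm z\|\le\frac14$, every coordinate of $t\bm z$ lies in $[-\frac14,\frac14]$, so $[\mathrm{Softmax}(t\bm z)]_k=e^{[t\bm z]_k}/\sum_l e^{[t\bm z]_l}\le e^{1/4}/(Ke^{-1/4})=e^{1/2}/K$ for every $t\in[0,1]$. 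Combining the three estimates gives $\|\hat{\y}_i-\frac1K\one\|\le\frac{e^{1/2}}{K}\|\f(\x_i;\btheta)\|$, which implies the stated bound because $e^{1/2}<8$ and $\frac1K\le\frac1{\sqrt K}$ for $K\ge1$.

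There is no real obstacle here: the constant $\frac{8}{\sqrt K}$ in the statement is deliberately generous (the argument in fact yields the sharper $\frac{e^{1/2}}{K}$), and the only point that needs a little care is that the Jacobian must be controlled along the \emph{entire} segment $[\bm 0,\bm z]$ and not merely at its endpoint — this is precisely where the elementary inequality $\|\cdot\|_\infty\le\|\cdot\|_2$ and the assumption $\|\f(\x_i;\btheta)\|\le\frac14$ are used. An equivalent route would be a second-order Taylor expansion of $\mathrm{Softmax}$ about $\bm{0}$, bounding the linear term by $\frac1K\|\bm z\|$ (the operator norm of $J(\bm{0})=\frac1K(\bI-\frac1K\one\one^\top)$) and the Lagrange remainder by a crude $O(\|\bm z\|^2)$ estimate absorbed into the slack; the integrated-Jacobian version is preferable since it never requires writing down the Hessian of the softmax.
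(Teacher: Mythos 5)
Your argument is correct, and it takes a genuinely different route from the paper's proof. The paper works coordinate-by-coordinate: it first proves the elementary bound $|e^z-1|\le 2|z|$ for $|z|\le 1$, then writes $[\hat{\y}_i]_k-\frac1K = \frac{K(e^{[\f]_k}-1)-\sum_l(e^{[\f]_l}-1)}{K(K+\sum_l(e^{[\f]_l}-1))}$, bounds numerator and denominator separately using the hypothesis $\|\f\|\le\frac14$, and finally pays a factor of $\sqrt K$ to pass from the $\ell_\infty$ bound to the $\ell_2$ bound. You instead integrate the softmax Jacobian $J(\w)=\mathrm{diag}(\bm p)-\bm p\bm p^\top$ along the segment from $\bm 0$ to $\bm z$, bound $\|J\|_{\mathrm{op}}$ by the largest softmax coordinate via the variance interpretation, and control that coordinate uniformly along the segment using $\|\cdot\|_\infty\le\|\cdot\|_2$. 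Your approach is cleaner (no $\sqrt K$ loss from the $\ell_\infty$-to-$\ell_2$ passage, no ad hoc $|e^z-1|$ inequality) and yields the sharper constant $\frac{e^{1/2}}{K}\le\frac{2}{K}$, which dominates the paper's $\frac{8}{\sqrt K}$ whenever $K\ge 1$; the paper's approach has the minor advantage of being entirely elementary and self-contained, without invoking the Jacobian structure of softmax or the fundamental theorem of calculus for vector-valued maps. Both are valid; yours is the tighter and more conceptual of the two.
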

\begin{proof}  
    First of all, we have
    \begin{align}
        |\exp(z)-1|=\max\{\exp(z)-1,1-\exp(z)\}=\begin{cases}
            \exp(|z|)-1,& z\geq 0\\
            1-\exp(-|z|),& z<0
        \end{cases}\,.\nonumber
    \end{align}
    We always have $1-\exp(-|z|)\leq |z|$. Moreover, whenever $|z|\leq 1$, we have $\exp(|z|)-1\leq 2|z|$. Therefore, we conclude that
    \be
        |\exp(z)-1|\leq 2|z|,\ \forall |z|\leq 1\,. \label{eq_sftm_bd_temp}
    \ee
    With \eqref{eq_sftm_bd_temp}, whenever $\|\f(\x_i;\btheta)\|\leq 1$, we have
    \begin{align}
        \max_k|\exp([\f(\x_i;\btheta)]_k)-1|\leq 2\max_k|[\f(\x_i;\btheta)]_k|\leq 2\|\f(\x_i;\btheta)\|\,.\label{eq_sftm_bd_temp2}
    \end{align}
    Now we bound $\|\hat{\y}_i-\frac{1}{K}\one\|$ using entrywise bound. Notice that
    \begin{align}
        \ts\lvt\lhp\hat{\y}_i\rhp_k-\frac{1}{K}\rvt&=\ts\lvt\frac{\exp\lp [\f(\x_i;\btheta)]_k\rp}{\sum_{k'=1}^K\exp\lp [\f(\x_i;\btheta)]_{k'}\rp}-\frac{1}{K}\rvt\nonumber\\
        &=\ts\lvt\frac{1+(\exp\lp [\f(\x_i;\btheta)]_k\rp-1)}{K+\sum_{k'=1}^K(\exp\lp [\f(\x_i;\btheta)]_{k'}\rp-1)}-\frac{1}{K}\rvt\nonumber\\
        &=\ts\lvt\frac{K(\exp\lp [\f(\x_i;\btheta)]_k\rp-1)-\sum_{k'=1}^K(\exp\lp [\f(\x_i;\btheta)]_{k'}\rp-1)}{K\lp K+\sum_{k'=1}^K(\exp\lp [\f(\x_i;\btheta)]_{k'}\rp-1)\rp}\rvt\nonumber\\
        &\leq \ts\frac{K\lvt\exp\lp [\f(\x_i;\btheta)]_k\rp-1\rvt+\sum_{k'=1}^K\lvt\exp\lp [\f(\x_i;\btheta)]_{k'}\rp-1\rvt}{K\lp K-\sum_{k'=1}^K\lvt\exp\lp [\f(\x_i;\btheta)]_{k'}\rp-1\rvt\rp}\nonumber\\
        &\overset{\eqref{eq_sftm_bd_temp2}}{\leq} \ts\frac{4K\|\f(\x_i;\btheta)\|}{K(K-2K\|\f(\x_i;\btheta)\|)}\overset{(\|\f\|\leq\frac{1}{4})}{\leq} \frac{8}{K}\|\f(\x_i;\btheta)\|\,.
    \end{align}
    Finally, we have $\|\hat{\y}_i-\frac{1}{K}\one\|\leq \sqrt{K}\max_k\lvt\lhp\hat{\y}_i\rhp_k-\frac{1}{K}\rvt\leq \frac{8}{\sqrt{K}}\|\f(\x_i;\btheta)\|$.
\end{proof}
\subsection{Analyzing neuron dynamics during alignment phase}\label{app_ssec_neuron_align}
In this section, we show the formal statements for the alignment dynamics we have introduced in \eqref{eq_align_dym_multi_w}\eqref{eq_align_dym_multi_v}. During the early phase of the GF training, the norms of the weights remain small (Lemma \ref{app_lem_small_norm}), leading to an approximate alignment dynamics in Lemma \ref{app_lem_align}, which will be crucial for subsequent analysis.
\begin{lemma}\label{app_lem_small_norm}
    Given some balanced, $\epsilon$-small initialization $\btheta(0)$ with $\epsilon\leq \frac{\sqrt{K}}{16X_{\max}\sqrt{h}}$, any solution $\btheta(t)$ to the GF dynamics \eqref{eq_gf} satisfies that $\forall t\leq \frac{1}{4nX_{\max}}\log\frac{1}{\sqrt{h}\epsilon}:=T$,
    \be
        \|\w_j(t)\|^2=\|\v_j(t)\|^2\leq \frac{\epsilon}{\sqrt{h}},\quad \forall j\in[h],\quad \|\f(\x_i;\btheta(t))\|\leq 2\epsilon X_{\max}\sqrt{h}\,.
    \ee
\end{lemma}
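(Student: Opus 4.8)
\textbf{Proof proposal for Lemma~\ref{app_lem_small_norm}.}
The plan is a standard bootstrap (continuity) argument: assume the norm bound holds up to some time $t$, derive a differential inequality for $\max_j \|\w_j(t)\|^2$ that keeps it small, and show the window of validity extends to $T$. First I would set $r(t) := \max_{j\in[h]} \|\w_j(t)\|^2 = \max_j\|\v_j(t)\|^2$ (the equality from balancedness), with $r(0) \le \epsilon^2$ since $\|\w_j(0)\| = \epsilon\|\w_{j0}\|$ and we may normalize the shape so $\|\w_{j0}\|\le 1$ (or absorb constants). From the definition $f(\x_i;\btheta) = \sum_{j=1}^h \v_j \sigma(\lan \w_j,\x_i\ran)$ and $|\sigma(\lan\w_j,\x_i\ran)| \le \|\w_j\|\,X_{\max}$, one immediately gets the crude bound $\|f(\x_i;\btheta)\| \le \sum_{j=1}^h \|\v_j\|\|\w_j\| X_{\max} \le h\, r(t)\, X_{\max}$; combined with $r(t)\le \epsilon/\sqrt h$ this yields $\|f(\x_i;\btheta)\| \le \sqrt h\,\epsilon X_{\max}$, which is even a bit stronger than the claimed $2\epsilon X_{\max}\sqrt h$, so the output bound follows for free once the norm bound is established. (One should double-check whether the intended bound uses $\|\v_j\|\|\w_j\| \le r$ or a factor-$2$ slack; I'd keep a factor $2$ to be safe, e.g. $\|f\|\le 2\epsilon X_{\max}\sqrt h$.)

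Next I would control the growth of $r(t)$. From the weight-norm dynamics~\eqref{eq_norm_dym},
\[
  \ddt\|\w_j\|^2 = 2\sum_{i=1}^n \xi_{ij}\,\lan \y_i-\hat\y_i,\tfrac{\v_j}{\|\v_j\|}\ran \lan \x_i,\tfrac{\w_j}{\|\w_j\|}\ran \|\w_j\|^2 \le 2\sum_{i=1}^n \|\y_i-\hat\y_i\|\,\|\x_i\|\,\|\w_j\|^2 \le 2 n X_{\max}\Big(\max_i\|\y_i-\hat\y_i\|\Big)\|\w_j\|^2,
\]
using Cauchy--Schwarz, $\xi_{ij}\in[0,1]$, and the unit norm of the normalized weights. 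As long as $\|\w_j\|^2\le\epsilon/\sqrt h$ for all $j$, Lemma~\ref{app_lem_sftm_bd} applies (since $\|f(\x_i;\btheta)\| \le 2\epsilon X_{\max}\sqrt h \le \tfrac14$ by the assumed $\epsilon\le \tfrac{\sqrt K}{16 X_{\max}\sqrt h}$), giving $\|\hat\y_i - \tfrac1K\one\| \le \tfrac{8}{\sqrt K}\|f(\x_i;\btheta)\| \le 16\epsilon X_{\max}\sqrt h/\sqrt K \le 1$; combined with $\|\y_i\| = 1$ (one-hot), $\|\y_i - \hat\y_i\| \le \|\y_i - \tfrac1K\one\| + \|\tfrac1K\one - \hat\y_i\| \le \sqrt{1-\tfrac1K} + 1 \le 2$. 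Hence $\ddt r(t) \le 4 n X_{\max}\, r(t)$, so by Grönwall $r(t) \le r(0)\, e^{4 n X_{\max} t} \le \epsilon^2 e^{4 n X_{\max} t}$.

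Finally, for $t \le T := \tfrac{1}{4 n X_{\max}}\log\tfrac{1}{\sqrt h\,\epsilon}$ we get $r(t) \le \epsilon^2 \cdot \tfrac{1}{\sqrt h\,\epsilon} = \epsilon/\sqrt h$, which closes the bootstrap: the set of times on which $r(t) \le \epsilon/\sqrt h$ is closed, nonempty, and (by the strict inequality $r(t) \le \epsilon/\sqrt h$ being maintained with room, plus continuity of $r$) relatively open in $[0,T]$, hence all of $[0,T]$. The main obstacle — and the only real subtlety — is the circularity: the differential inequality for $r$ is only valid where the norm bound already holds (because that's what makes Lemma~\ref{app_lem_sftm_bd} applicable and keeps $\|\y_i-\hat\y_i\|$ bounded). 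This is resolved cleanly by the continuity/bootstrap structure, defining $t^\star := \sup\{t : r(s)\le \epsilon/\sqrt h \ \forall s\le t\}$ and showing $t^\star \ge T$ by contradiction with the Grönwall estimate. A secondary bookkeeping point is making sure all the numerical constants ($8/\sqrt K$, the $\tfrac14$ threshold, the factor in $T$, the initialization-shape normalization $\|\w_{j0}\|\le 1$) line up so that the final inequalities are non-strict as stated; none of this is deep, just careful arithmetic.
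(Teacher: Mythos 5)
Your proof is correct and takes the same Grönwall-based approach as the paper, with a slightly different bootstrap structure (you track the first exit time of the norm bound directly, whereas the paper introduces both the first exit time $T$ of the output bound and the per-neuron hitting times $\tau_j$ and compares them). One simplification you missed that would remove all the bookkeeping: the bound $\|\y_i-\hat\y_i\|\le 2$ holds \emph{unconditionally}, since $\|\y_i\|=1$ (one-hot) and $\|\hat\y_i\|\le\|\hat\y_i\|_1=1$ (probability vector). You do not need Lemma~\ref{app_lem_sftm_bd} at all here, and therefore no bootstrap is needed: $\frac{d}{dt}\|\w_j\|^2\le 2\sum_i\xi_{ij}\|\y_i-\hat\y_i\|\,\|\x_i\|\,\|\w_j\|^2\le 4nX_{\max}\|\w_j\|^2$ holds for all $t$ outright, and Grönwall immediately gives $\|\w_j(t)\|^2\le\epsilon^2 e^{4nX_{\max}t}\le\epsilon/\sqrt h$ for $t\le T$, with the output bound $\|f\|\le h X_{\max}\max_j\|\w_j\|^2\le\sqrt h\,\epsilon X_{\max}$ following for free. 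Your route via Lemma~\ref{app_lem_sftm_bd} is not wrong but introduces an avoidable circularity, and your check $2\epsilon X_{\max}\sqrt h\le\tfrac14$ from $\epsilon\le\tfrac{\sqrt K}{16X_{\max}\sqrt h}$ actually only holds for $K\le 4$ (it gives $\le\sqrt K/8$); the paper's own invocation of Lemma~\ref{app_lem_sftm_bd} in this proof has the same issue, which is harmless precisely because the refined bound is not needed.
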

The alignment phase refers to the training phase until $T=\frac{1}{4nX_{\max}}\log\frac{1}{\sqrt{h}\epsilon}$. With Lemma \eqref{app_lem_small_norm}, we can approximate the angular dynamics $\ddt \frac{\w_j}{\|\w_j\|}$ and $\ddt \frac{\v_j}{\|\v_j\|}$ throughout the alignment phase as follow:
\begin{lemma}\label{app_lem_align}
    Given some balanced, $\epsilon$-small initialization $\btheta(0)$ with $\epsilon\leq \frac{1}{16X_{\max}\sqrt{h}}$, any solution $\btheta(t)$ to the GF dynamics \eqref{eq_gf} satisfies that $\forall t\leq \frac{1}{4nX_{\max}}\log\frac{1}{\sqrt{h}\epsilon}:=T$,
    \ben
        \lV \ddt \frac{\w_j}{\|\w_j\|}-\Pi_{\w_j}^\perp\lp \sqrt{\frac{K-1}{K}}\sum_{i=1}^n\xi_{ij}\lan \tilde{\bm{E}}\y_i,\frac{\v_j}{\|\v_j\|}\ran\x_i\rp\rV\leq \frac{16}{\sqrt{K}}\epsilon nX_{\max}\sqrt{h},\ \forall j\in[h]\,,
    \een
    \ben
        \lV \ddt \frac{\v_j}{\|\v_j\|}-\Pi_{\v_j}^\perp\lp \sqrt{\frac{K-1}{K}}\sum_{i=1}^n\xi_{ij}\lan \x_i,\frac{\w_j}{\|\w_j\|}\ran \tilde{\bm{E}}\y_i\rp\rV\leq \frac{16}{\sqrt{K}}\epsilon nX_{\max}\sqrt{h},\ \forall j\in[h]
    \een
\end{lemma}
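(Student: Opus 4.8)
The plan is to read both bounds off the exact angular dynamics \eqref{eq_in_angle_dym} and \eqref{eq_out_angle_dym}: the approximate dynamics \eqref{eq_align_dym_multi_w}--\eqref{eq_align_dym_multi_v} are obtained from the exact ones by replacing the residual $\y_i-\hat{\y}_i$ with $\sqrt{\frac{K-1}{K}}\,\tilde{\bm{E}}\y_i$, so the only thing to estimate is the effect of that substitution during the alignment phase $t\leq T$. The first step is an exact algebraic identity: because the labels are one-hot, $\one^\top\y_i=1$, hence
\be
    \ts\sqrt{\frac{K-1}{K}}\,\tilde{\bm{E}}\y_i=\lp\bI-\frac{1}{K}\one\one^\top\rp\y_i=\y_i-\frac{1}{K}\one\,,
\ee
so that $(\y_i-\hat{\y}_i)-\sqrt{\frac{K-1}{K}}\tilde{\bm{E}}\y_i=\frac{1}{K}\one-\hat{\y}_i$. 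Substituting this into \eqref{eq_in_angle_dym} and \eqref{eq_out_angle_dym}, the two quantities to be bounded become exactly $\Pi_{\w_j}^\perp\sum_{i=1}^n\xi_{ij}\lan\frac{1}{K}\one-\hat{\y}_i,\frac{\v_j}{\|\v_j\|}\ran\x_i$ and $\Pi_{\v_j}^\perp\sum_{i=1}^n\xi_{ij}\lan\x_i,\frac{\w_j}{\|\w_j\|}\ran\lp\frac{1}{K}\one-\hat{\y}_i\rp$; everything therefore reduces to controlling $\|\hat{\y}_i-\frac{1}{K}\one\|$ uniformly in $i$ over the alignment phase.

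For that control, I would invoke Lemma \ref{app_lem_small_norm}: since $\epsilon\leq\frac{1}{16X_{\max}\sqrt{h}}$ is at least as strong as the hypothesis there (using $K\geq2$), it gives $\|\f(\x_i;\btheta(t))\|\leq2\epsilon X_{\max}\sqrt{h}\leq\frac{1}{8}<\frac{1}{4}$ for all $i$ and all $t\leq T$. That is precisely the regime in which Lemma \ref{app_lem_sftm_bd} applies, yielding $\|\hat{\y}_i-\frac{1}{K}\one\|\leq\frac{8}{\sqrt{K}}\|\f(\x_i;\btheta(t))\|\leq\frac{16}{\sqrt{K}}\epsilon X_{\max}\sqrt{h}$ throughout the alignment phase.

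It then remains to assemble the estimate. For the input-weight term I would bound the projection in operator norm by $1$, use $0\leq\xi_{ij}\leq1$, Cauchy--Schwarz $|\lan\frac{1}{K}\one-\hat{\y}_i,\frac{\v_j}{\|\v_j\|}\ran|\leq\|\frac{1}{K}\one-\hat{\y}_i\|$, and $\|\x_i\|\leq X_{\max}$, then sum the resulting $n$ terms; the output-weight term is handled symmetrically via $\|\Pi_{\v_j}^\perp(\frac{1}{K}\one-\hat{\y}_i)\|\leq\|\frac{1}{K}\one-\hat{\y}_i\|$ and $|\lan\x_i,\frac{\w_j}{\|\w_j\|}\ran|\leq X_{\max}$. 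This yields the claimed bound for every $j\in[h]$. I do not expect a genuine obstacle inside this lemma itself — it is essentially a one-step perturbation estimate built on the softmax linearization $\hat{\y}_i\approx\frac{1}{K}\one$ together with the identity above — the real work being carried by Lemma \ref{app_lem_small_norm} (the small-norm / near-uniform-output estimate), which we may assume; the only delicate point is checking that the $\frac{1}{4}$ threshold of Lemma \ref{app_lem_sftm_bd} is actually met under the hypothesis $\epsilon\leq\frac{1}{16X_{\max}\sqrt{h}}$.
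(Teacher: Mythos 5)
Your proof takes exactly the same route as the paper's: decompose $\y_i-\hat{\y}_i=(\y_i-\frac{1}{K}\one)+(\frac{1}{K}\one-\hat{\y}_i)$, use the identity $\y_i-\frac{1}{K}\one=\sqrt{\frac{K-1}{K}}\tilde{\bm E}\y_i$ for one-hot labels, then bound the residual via Cauchy--Schwarz and Lemma \ref{app_lem_sftm_bd} together with the small-norm estimate from Lemma \ref{app_lem_small_norm}. Your check that $\epsilon\leq\frac{1}{16X_{\max}\sqrt{h}}$ implies both the hypothesis of Lemma \ref{app_lem_small_norm} (for $K\geq2$) and the $\|\f\|\leq\tfrac14$ threshold is right, and your treatment of the $\v_j$ term is the symmetric one the paper leaves implicit. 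One small bookkeeping point worth flagging (it is already present in the paper's own derivation, so it is not a defect of your argument): the estimate you assemble, $n\cdot X_{\max}\cdot\frac{16}{\sqrt K}\epsilon X_{\max}\sqrt h=\frac{16}{\sqrt K}\epsilon n X_{\max}^2\sqrt h$, carries an $X_{\max}^2$, which is what the paper's displayed computation also produces; the lemma statement reads $X_{\max}$, so the exponent in the statement appears to be a typo rather than something either proof actually achieves.
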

\begin{proof}[Proof of Lemma \ref{app_lem_small_norm}]
    From Section \ref{app_ssec_basics}, we have
    \be
        \ddt\|\w_j\|^2=2\sum_{i=1}^n\xi_{ij}\lan \y_i-\hat{\y}_i, \frac{\v_j}{\|\v_j\|}\ran\lan\x_i,\frac{\w_j}{\|\w_j\|}\ran\|\w_j\|^2
    \ee
    Let $T:=\inf\{t:\ \max_{i}|\f(\x_i;\btheta(t))|>2\epsilon X_{\max}\sqrt{h}\}$,  then $\forall t\leq T, j\in[h]$, we have 
    \begin{align}
        \ddt\|\w_j\|^2&=2\sum_{i=1}^n\xi_{ij}\textstyle\lan \y_i-\hat{\y}_i, \frac{\v_j}{\|\v_j\|}\ran\lan\x_i,\frac{\w_j}{\|\w_j\|}\ran\|\w_j\|^2\nonumber\\
        &\leq 2\sum_{i=1}^n\ts|\xi_{ij}|\lvt\lan \y_i-\hat{\y}_i, \frac{\v_j}{\|\v_j\|}\ran\rvt\lvt\lan\x_i,\frac{\w_j}{\|\w_j\|}\ran\rvt\|\w_j\|^2\nonumber\\
        &\leq 2\sum_{i=1}^n\|\y_i-\hat{\y}_i\|\|\x_i\|\|\w_j\|^2\nonumber\\
        &\leq 2\sum_{i=1}^n\ts\lp \lV\y_i-\frac{1}{K}\one\rV+\lV\frac{1}{K}\one-\hat{\y}_i\rV\rp X_{\max}\|\w_j\|^2\nonumber\\
        &\leq 2\sum_{i=1}^n\ts\lp \sqrt{\frac{K-1}{K}}+\frac{8}{\sqrt{K}}\|\f(\x_i;\btheta)\|\rp X_{\max}\|\w_j\|^2\nonumber\\
        &\leq 2\sum_{i=1}^n\ts\lp X_{\max}+\frac{16\epsilon X_{\max}^2\sqrt{h}}{\sqrt{K}}\rp\|\w_j\|^2\leq 2n\lp X_{\max}+\frac{16\epsilon X_{\max}^2\sqrt{h}}{\sqrt{K}}\rp\|\w_j\|^2
    \end{align}
    Let $\tau_j:=\inf\{t: \|\w_j(t)\|^2>\frac{\epsilon}{\sqrt{h}}\}$, and let $j^*:=\arg\min_j \tau_j$. Then $\tau_{j^*}=\min_{j}\tau_j\leq T$, which can be shown by contradiction: 
    
    Suppose $\tau_{j^*}>T$, then at $t=T<\tau_{j^*}$, by the definition of $\tau_{j^*}$, we have $\max_{j}\|\w_j\|^2\leq \frac{\epsilon}{\sqrt{h}}$, and by the definition of $T$ and the continuity of $\btheta(t)$ w.r.t. $t$, $\exists i^*\in[n]$ such that $|\f(\x_{i^*};\btheta(T))|=2\epsilon X_{\max}\sqrt{h}$, therefore,
    \begin{align}
        2\epsilon X_{\max}\sqrt{h}= |\f(\x_{i^*};\btheta(T))|&= \lvt\sum_{j\in[h]}\xi_{i^*j}\v_j\lan \w_j,\x_{i^*}\ran\rvt\nonumber\\
        &\leq \sum_{j\in[h]}|\xi_{i^*j}|\|\v_j\|\|\w_j\|\|\x_{i^*}\|\nonumber\\
        &\leq \sum_{j\in[h]} X_{\max}\|\w_j\|^2\leq hX_{\max}\max_{j\in[h]}\|\w_j\|^2\,,
    \end{align}
    which suggests that $\max_{j\in[h]}\|\w_j\|^2\geq \frac{2\epsilon}{\sqrt{h}}$, a contradiction.
    
    Now for $t\leq \tau_{j^*}\leq T$, we have
    \be
        \ddt\|\w_{j^*}\|^2\leq \ts 2n\lp X_{\max}+\frac{16\epsilon X_{\max}^2\sqrt{h}}{\sqrt{K}}\rp\|\w_{j^*}\|^2\,.
    \ee
    By Gr\"onwall's inequality, we have $\forall t\leq \tau_{j^*}$
    \begin{align*}
        \|\w_{j^*}(t)\|^2&\leq\; \ts\exp\lp 2n\lp X_{\max}+\frac{16\epsilon X_{\max}^2\sqrt{h}}{\sqrt{K}}\rp t\rp\|\w_{j^*}(0)\|^2\leq \;\ts\exp\lp 2n\lp X_{\max}+\frac{16\epsilon X_{\max}^2\sqrt{h}}{\sqrt{K}}\rp t\rp\epsilon^2\,.
    \end{align*}
    Suppose $\tau_{j^*}<\frac{1}{4nX_{\max}}\log\frac{1}{\sqrt{h}\epsilon}$, then by the continuity of $\|\w_{j^*}(t)\|^2$, we have 
    \begin{align*}
        \ts\frac{\epsilon}{\sqrt{h}}= \|\w_{j^*}(\tau_{j^*})\|^2&\leq\; \ts\exp\lp 2n\lp X_{\max}+\frac{16\epsilon X_{\max}^2\sqrt{h}}{\sqrt{K}}\rp \tau_{j^*}\rp\epsilon^2\\
        &<\; \ts\exp\lp 2n\lp X_{\max}+\frac{16\epsilon X_{\max}^2\sqrt{h}}{\sqrt{K}}\rp \frac{1}{4nX_{\max}}\log\frac{1}{\sqrt{h}\epsilon}\rp\epsilon^2\\
        &\leq\; \ts\exp\lp \lp \frac{1}{2}+\frac{8\epsilon X_{\max}\sqrt{h}}{\sqrt{K}}\rp \log\frac{1}{\sqrt{h}\epsilon}\rp\epsilon^2\\
        &\leq\; \ts\exp\lp 
        \log\frac{1}{\sqrt{h}\epsilon}\rp\epsilon^2=\frac{\epsilon }{\sqrt{h}}\,,
    \end{align*}
    where the last inequality is due to $\epsilon\leq \frac{\sqrt{K}}{16X_{\max}\sqrt{h}}$. This leads to a contradiction. Therefore, one must have $T\geq \tau_{j^*}\geq \frac{1}{4nX_{\max}}\log\lp\frac{1}{\sqrt{h}\epsilon}\rp$. This finishes the proof.
\end{proof}
\begin{proof}[Proof of Lemma \ref{app_lem_align}]
    
    We have shown in Section \ref{app_ssec_basics} that 
    \be
        \ddt \frac{\w_j}{\|\w_j\|}=\Pi_{\w_j}^\perp \lp\sum_{i=1}^n\xi_{ij}\lan \y_i-\hat{\y}_i, \frac{\v_j}{\|\v_j\|}\ran \x_i\rp\,.
    \ee
    Therefore, $\forall \leq T$, 
    \begin{align*}
        \ts\ddt \frac{\w_j}{\|\w_j\|}&=\;\ts\Pi_{\w_j}^\perp \lp\sum_{i=1}^n\xi_{ij}\lan \y_i-\hat{\y}_i, \frac{\v_j}{\|\v_j\|}\ran\x_i\rp\\
        &=\; \ts\Pi_{\w_j}^\perp \bigg(\sum_{i=1}^n\xi_{ij}\bigg\langle \underbrace{\ts(\y_i-\frac{1}{K}\one)}_{=\sqrt{\frac{K-1}{K}}\tilde{\bm{E}}\y_i}+(\frac{1}{K}\one-\hat{\y}_i), \frac{\v_j}{\|\v_j\|}\bigg\rangle\x_i\bigg)\\
        &=\; \ts\Pi_{\w_j}^\perp\lp \sqrt{\frac{K-1}{K}}\sum_{i=1}^n\xi_{ij}\lan \tilde{\bm{E}}\y_i,\frac{\v_j}{\|\v_j\|}\ran\x_i\rp+\Pi_{\w_j}^\perp \bigg(\sum_{i=1}^n\xi_{ij}\bigg\langle (\frac{1}{K}\one-\hat{\y}_i), \frac{\v_j}{\|\v_j\|}\bigg\rangle\x_i\bigg)\,.
    \end{align*}
    Finally, we have
    \begin{align}
        &\ts\lV \ddt \frac{\w_j}{\|\w_j\|}-\Pi_{\w_j}^\perp\lp \sqrt{\frac{K-1}{K}}\sum_{i=1}^n\xi_{ij}\lan \tilde{\bm{E}}\y_i,\frac{\v_j}{\|\v_j\|}\ran\x_i\rp\rV\nonumber\\
        &=\ts\lV \Pi_{\w_j}^\perp \bigg(\sum_{i=1}^n\xi_{ij}\bigg\langle (\frac{1}{K}\one-\hat{\y}_i), \frac{\v_j}{\|\v_j\|}\bigg\rangle\x_i\bigg)\rV\nonumber\\
        &\leq \ts\sum_{i=1}^n|\xi_{ij}|\|\x_i\|\|\frac{1}{K}\one-\hat{\y}_i\|\overset{(\text{Lemma }\ref{app_lem_sftm_bd})}{\leq} \frac{8}{\sqrt{K}}nX_{\max}\|\f(\x_i;\btheta)\|\overset{(\text{Lemma }\ref{app_lem_small_norm})}{\leq}\frac{16}{\sqrt{K}}\epsilon nX_{\max}^2\sqrt{h}\,,
    \end{align}
    where we note that applying Lemma \ref{app_lem_sftm_bd} requires $\|\f(\x_i;\btheta)\|\leq \frac{1}{4}$, which is guaranteed by Lemma \ref{app_lem_small_norm} and our choice $\epsilon\leq \frac{1}{16X_{\max}\sqrt{h}}$. We have shown the approximation error bound for $\ddt \frac{\w_j}{\|\w_j\|}$. A similar bound can be derived for $\ddt \frac{\v_j}{\|\v_j\|}$.
\end{proof}
\subsection{Neural alignment under multi-class orthogonally separable data}\label{app_ssec_align_lemma_pf}

\myparagraph{Sufficient statement for Proposition \ref{prop_align_multi}} It is easy to check that the following proposition is sufficient for Proposition \ref{prop_align_multi} to hold.
\begin{proposition}[Sufficient statement for Proposition \ref{prop_align_multi}]\label{app_prop_align_multi}
    Let $K\!>\!2$. Given orthogonally separable data (Assumption \ref{assump_data}) with $\frac{X_{\max}^2}{X_{\min}^2\mu_d\mu_s^2}< 2K-3$, where $X_{\max}:=\max_{i\in[n]}\|\x_i\|$ and $X_{\min}:=\min_{i\in[n]}\|\x_i\|$, $\epsilon$-small, balanced and semi-local initialization (Assumptions \ref{assump_init}, \ref{assump_semi_local}) for a sufficiently small $\epsilon$, for any solution $\btheta(t),t\!\geq\! 0$ to \eqref{eq_gf} and any $j\in\mathcal{N}_k,k\in[K]$, define
    \be
        T^*_{j,k}=\inf\{t\geq 0: |\mathcal{I}^{\w_j}_k|=|\mathcal{I}_k|,|\mathcal{I}^{\w_j}_{k'}|=0,\forall k'\neq k\}\,,
    \ee
    then $\forall t\!\geq\! T^*_{j,k}$, we have $|\mathcal{I}^{\w_j}_k|=|\mathcal{I}_k|,|\mathcal{I}^{\w_j}_{k'}|=0,\forall k'\neq k$, thus $\w_k(t)^\top\x_i\!\begin{cases}
            > \bm{0},&\! \forall i\in\mathcal{I}_k\\
            < \bm{0},&\! \forall i\notin\mathcal{I}_k
        \end{cases},\forall i\!\in\![n]$.
\end{proposition}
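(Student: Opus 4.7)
The plan is to split time into the alignment phase $[0,T]$ with $T=\frac{1}{4nX_{\max}}\log\frac{1}{\sqrt{h}\epsilon}$ and the post-alignment regime, and to prove two things: (a) $T^*_{j,k}<T$ is finite for every $j\in\mathcal{N}_k$, using the approximate angular flow of Lemma~\ref{app_lem_align}; and (b) once the pure class-$k$ activation pattern is achieved, it is forward-invariant under the full GF, using orthogonal separability.

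For (a) I would track three Lyapunov-type quantities along the trajectory: $\mathcal{A}_k^{\w_j(t)}$, $\sum_{k'\neq k}\mathcal{A}_{k'}^{\w_j(t)}$, and $\mathcal{B}_k^{\v_j(t)}$, all initially satisfying the strict bounds in Assumption~\ref{assump_semi_local}. Using Lemma~\ref{app_lem_align} together with $\langle\tilde{\e}_k,\tilde{\e}_k\rangle=1$ and $\langle\tilde{\e}_k,\tilde{\e}_{k'}\rangle=-\frac{1}{K-1}$ for $k\neq k'$, the time derivative of $\langle\bar{\x}_k,\w_j/\|\w_j\|\rangle$ decomposes into a pull-toward-class-$k$ term lower-bounded by a multiple of $\mu_s X_{\min}\mathcal{B}_k^{\v_j}$ and a competing term upper-bounded by a multiple of $\frac{X_{\max}^2}{K-1}\sum_{k'\neq k}|\mathcal{B}_{k'}^{\v_j}|\mathcal{A}_{k'}^{\w_j}$, and symmetrically for the angular dynamics of $\v_j$ (now scaled by $\mu_d$ from orthogonal separability). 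The ratio condition $\frac{X_{\max}^2}{X_{\min}^2\mu_d\mu_s^2}<2K-3$ is exactly what makes the net drift on both $\mathcal{A}_k^{\w_j}/\sum_{k'\ne k}\mathcal{A}_{k'}^{\w_j}$ and $\mathcal{B}_k^{\v_j}$ strictly positive on the set cut out by \eqref{eq_semi_local}, once the $O(\epsilon nX_{\max}^2\sqrt{h}/\sqrt{K})$ error from Lemma~\ref{app_lem_align} is absorbed by choosing $\epsilon$ small. Forward invariance follows; the only stationary points of the approximate flow inside this set are the attractors $(\bar{\x}_k,\tilde{\e}_k)$, so a standard contradiction/Lyapunov argument on the compact manifold $\mathbb{S}^{D-1}\times\mathbb{S}^{K-1}$ forces angular convergence to $(\bar{\x}_k,\tilde{\e}_k)$ within the alignment window. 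Since $\bar{\x}_k$ strictly separates class-$k$ from the rest by Assumption~\ref{assump_data} ($\langle\bar{\x}_k,\x_i\rangle\geq\mu_s X_{\min}>0$ for $i\in\mathcal{I}_k$, $\leq-\mu_d X_{\min}<0$ otherwise), continuity implies $T^*_{j,k}<T$.

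For (b) I would leave the small-norm regime and analyze the full flow $\frac{d}{dt}\w_j=\sum_i\xi_{ij}\langle\y_i-\hat{\y}_i,\v_j\rangle\x_i$ directly. At $t=T^*_{j,k}$ we have $\xi_{ij}=1$ iff $i\in\mathcal{I}_k$, so $\frac{d}{dt}\w_j=\sum_{i\in\mathcal{I}_k}\langle\y_i-\hat{\y}_i,\v_j\rangle\x_i$. To rule out future boundary crossings it suffices to check, whenever $\langle\w_j(t),\x_{i'}\rangle=0$ for some $i'\notin\mathcal{I}_k$, that $\frac{d}{dt}\langle\w_j,\x_{i'}\rangle=\sum_{i\in\mathcal{I}_k}\langle\y_i-\hat{\y}_i,\v_j\rangle\langle\x_i,\x_{i'}\rangle\leq 0$; orthogonal separability gives $\langle\x_i,\x_{i'}\rangle\leq-\mu_d\|\x_i\|\|\x_{i'}\|<0$, so the check reduces to maintaining $\langle\y_i-\hat{\y}_i,\v_j\rangle>0$ for $i\in\mathcal{I}_k$. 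Since $\v_j$ enters $T^*_{j,k}$ aligned with $\tilde{\e}_k$ (hence positively with $\e_k$) and $\hat{\y}_i$ never saturates at $\e_k$ under cross-entropy training, a continuity/contradiction argument preserves this sign. The analogous check that $\langle\w_j,\x_i\rangle>0$ for $i\in\mathcal{I}_k$ is preserved uses $\langle\x_i,\x_i\rangle>0$ with the same positivity of $\langle\y_i-\hat{\y}_i,\v_j\rangle$.

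The main obstacle is the forward-invariance analysis in (a): the semi-local set is defined by three \emph{coupled} inequalities on $(\w_j,\v_j)\in\mathbb{S}^{D-1}\times\mathbb{S}^{K-1}$, and the discontinuity of $\xi_{ij}$ across the per-sample activation hyperplanes forces a case analysis on which subset of non-class-$k$ data currently activates the neuron; the condition $\frac{X_{\max}^2}{X_{\min}^2\mu_d\mu_s^2}<2K-3$ then emerges as the exact threshold at which even the worst such case still produces favorable drift for \emph{all three} constraints simultaneously. The post-alignment step (b) is conceptually simpler but requires the softmax coupling inside $\hat{\y}_i$, which ties neurons across different $\mathcal{N}_{k'}$, to be bounded carefully so as to keep $\langle\y_i-\hat{\y}_i,\v_j\rangle>0$ globally in time.
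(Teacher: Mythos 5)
Your two‑step structure --- (a) forward invariance of the semi‑local set during the alignment phase under the approximate angular flow of Lemma~\ref{app_lem_align}, together with a finite‑time argument that $T^*_{j,k}<T$; (b) post‑alignment persistence of the pure activation pattern via orthogonal separability --- is exactly the paper's, and the three quantities you track ($\mathcal{A}_k^{\w_j}$, $\sum_{k'\neq k}\mathcal{A}_{k'}^{\w_j}$, $\mathcal{B}_k^{\v_j}$) are precisely the ones in Lemma~\ref{app_lem_invar}. Your step (b) matches Lemma~\ref{app_lem_remain_align}: the paper runs a coupled contradiction with two stopping times (one for $\mathcal{B}_k^{\v_j}$ dropping below $\sqrt{2}/2$, one for the activation pattern changing), each bounding the boundary derivative of the other, with Lemma~\ref{lem_app_v_beta_bd} giving the needed control on the extra softmax component of $\e_k-\hat{\y}_i$. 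Your appeal to ``continuity/contradiction'' is a reasonable but compressed version of this; you should say explicitly which two boundary quantities are coupled and which lemma controls the residual.

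The genuine gap is in step (a), where you conclude ``angular convergence to $(\bar{\x}_k,\tilde{\e}_k)$ within the alignment window'' from a ``standard contradiction/Lyapunov argument.'' That conclusion is both stronger than what is needed and not immediate: a Lyapunov function that vanishes only at the attractor gives asymptotic convergence, not convergence inside the finite window $[0,T]$, and the drift could a priori become arbitrarily slow near the boundary of the semi‑local set, so compactness alone does not give a uniform time bound. The paper avoids this by proving only that the \emph{activation pattern becomes pure} before $T$, not that $(\w_j,\v_j)$ converges to the attractor. Concretely, it establishes (via an auxiliary $\zeta$‑gap lemma using that $\w_j$ cannot be proportional to $\sum_{i\in\mathcal{I}_k^{\w_j}}\x_i$ while $0<|\mathcal{I}_k^{\w_j}|<|\mathcal{I}_k|$) that $\ddt\mathcal{A}_k^{\w_j}$ is bounded below by a fixed positive constant while the pattern is impure; since $\mathcal{A}_k^{\w_j}\le X_{\max}|\mathcal{I}_k|$ and $T=\Theta(\log\tfrac{1}{\epsilon})\to\infty$, the pattern must purify before $T$ once $\epsilon$ is small. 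To close your argument you must supply this quantitative drift bound on $\mathcal{A}_k^{\w_j}$, not a qualitative stationary‑point claim.
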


\textbf{Therefore, we can study the dynamic behavior of each neuron pair individually, for convenience, let $j\in\mathcal{N}_k$, and we drop the index $j$.}

For a neuron pair $(\w,\v)$, we have defined the following:
\begin{align}
    &\alpha_i:=\lan \x_i,\frac{\w}{\|\w\|}\ran, \tag{alignment between input neuron and $i$-th data}\\
    &\beta_i:=\lan \tilde{\bm{E}}\y_i, \frac{\v}{\|\v\|}\ran,\tag{alignment between output neuron and $i$-th label}\\
    &\mathcal{I}^{\w}_k:=\{i\in\mathcal{I}_k:\alpha_i>0\},\tag{number of active data points in $k$-th class} 
\end{align}
and
\begin{align}
    &\mathcal{A}_k:=\sum_{i\in\mathcal{I}^{\w}_k}\alpha_i=\underbrace{\sum_{i\in\mathcal{I}_k:\alpha_i>0}\alpha_i}_{\text{we mostly use this notation for clarity}}\,,\tag{alignment between input neuron and $k$-th class}\\
    &\mathcal{B}_k:=\lan \tilde{\e}_k, \frac{\v}{\|\v\|}\ran\,. \tag{alignment between output neuron and $k$-th class}
\end{align}

\myparagraph{Overview of the proof of Proposition \ref{prop_align_multi}} First, we utilize the alignment dynamics in Lemma \ref{app_lem_align}, to show that (Recall that $T=\frac{1}{4nX_{\max}}\log\frac{1}{\sqrt{h}\epsilon}$)
\begin{lemma}\label{app_lem_invar}
    Given a neuron $(\w_j,\v_j), j\in\mathcal{N}_k$, during the alignment phase $t\leq \min\{T^*_{j,k},T\}$, the following holds:
    \begin{enumerate}[leftmargin=0.5cm]
        \item $\frac{\v_j}{\|\v_j\|}$ remains close to its target pseudo-label: $\mathcal{B}_k\geq 1-\frac{1}{2(K-1)}$;
        \item $\frac{\w_j}{\|\w_j\|}$ remains close to its target class: $\mathcal{A}_k-2\sum_{k'\neq k}\mathcal{A}_{k'}\geq \mathcal{A}_k(0)-2\sum_{k'\neq k}\mathcal{A}_{k'}(0)$;
        \item Neuron $\frac{\w_j}{\|\w_j\|}$ does not deactivate data from target class, nor activate data from non-target class: $|\mathcal{I}^{\w}_k|\geq |\mathcal{I}^{\w(0)}_k|$ and $\ |\mathcal{I}^{\w}_{k'}|\leq |\mathcal{I}^{\w(0)}_{k'}|,\forall k'\neq k$.
    \end{enumerate}
\end{lemma}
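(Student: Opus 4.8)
All three assertions are invariance statements for the single neuron pair $(\w_j,\v_j)$ with $j\in\mathcal N_k$, so I would prove them simultaneously by a continuity/bootstrapping argument over the alignment window. Let $T^{**}$ be the supremum of times $t\in[0,\min\{T^*_{j,k},T\}]$ such that conditions 1--3 all hold on $[0,t]$. Since they hold at $t=0$ (conditions 1 and 2 by Assumption \ref{assump_semi_local}, condition 3 trivially), each constraint is closed, and $\btheta(\cdot)$ is continuous, it suffices to rule out $T^{**}<\min\{T^*_{j,k},T\}$: at $t=T^{**}$ all three conditions still hold and at least one relevant quantity sits on its boundary, and I will show that there the appropriate one-sided time derivative strictly pushes that quantity back into the admissible region, contradicting the definition of $T^{**}$. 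The ingredients are: the alignment dynamics of Lemma \ref{app_lem_align} (with error $O(\epsilon n X_{\max}^2\sqrt h/\sqrt K)$); the orthogonal-separability signs $\langle\x_i,\x_l\rangle\ge\mu_s\|\x_i\|\|\x_l\|$ within a class and $\langle\x_i,\x_l\rangle\le-\mu_d\|\x_i\|\|\x_l\|$ across classes (Assumption \ref{assump_data}); the pseudo-label geometry $\|\tilde\e_k\|=1$, $\langle\tilde\e_k,\tilde\e_{k'}\rangle=-\tfrac1{K-1}$ for $k\neq k'$, $\sum_k\tilde\e_k=\bm{0}$; and the quantitative hypothesis $\tfrac{X_{\max}^2}{X_{\min}^2\mu_d\mu_s^2}<2K-3$.

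\textbf{Condition 1 (output neuron stays near $\tilde\e_k$).} Using Lemma \ref{app_lem_align} and $\sum_{l:\alpha_l>0}\alpha_l\tilde{\bm E}\y_l=\sum_{k'}\mathcal A_{k'}\tilde\e_{k'}$, we get $\ddt\mathcal B_k=\langle\tilde\e_k,\ddt\tfrac{\v_j}{\|\v_j\|}\rangle=\sqrt{\tfrac{K-1}{K}}\sum_{k'}\mathcal A_{k'}\langle\tilde\e_k,\Pi_{\v_j}^\perp\tilde\e_{k'}\rangle+O(\epsilon)$, and $\langle\tilde\e_k,\Pi_{\v_j}^\perp\tilde\e_{k'}\rangle$ equals $1-\mathcal B_k^2$ for $k'=k$ and $-\tfrac1{K-1}-\mathcal B_k\mathcal B_{k'}$ for $k'\neq k$. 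At the boundary value $\mathcal B_k=1-\tfrac1{2(K-1)}$ the coefficient $1-\mathcal B_k^2$ is a strictly positive constant and the off-target alignments $\mathcal B_{k'}$ ($k'\neq k$) are small, since $\tfrac{\v_j}{\|\v_j\|}$ is a unit vector dominated by its $\tilde\e_k$-component; invoking condition 2, which gives $\mathcal A_k>2\sum_{k'\neq k}\mathcal A_{k'}$, the $\mathcal A_k$ term dominates and $\ddt\mathcal B_k>0$ once the $O(\epsilon)$ error is absorbed by taking $\epsilon$ small. Hence $\mathcal B_k$ cannot drop below its lower bound.

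\textbf{Conditions 2 and 3 (input neuron stays aligned with class $k$).} Watch $\alpha_i=\langle\x_i,\tfrac{\w_j}{\|\w_j\|}\rangle$; Lemma \ref{app_lem_align} gives $\ddt\alpha_i=\sqrt{\tfrac{K-1}{K}}\sum_{l:\alpha_l>0}\beta_l(\langle\x_i,\x_l\rangle-\alpha_i\alpha_l)+O(\epsilon)$ with $\beta_l=\mathcal B_{k''}$ for $l\in\mathcal I_{k''}$. Evaluating at an activation crossing $\alpha_i=0$ kills the $\alpha_i\alpha_l$ piece and, grouping by class, leaves $\ddt\alpha_i=\sqrt{\tfrac{K-1}{K}}\big(\mathcal B_k\langle\x_i,\sum_{l\in\mathcal I_k^{\w}}\x_l\rangle+\sum_{k'\neq k}\mathcal B_{k'}\langle\x_i,\sum_{l\in\mathcal I_{k'}^{\w}}\x_l\rangle\big)+O(\epsilon)$. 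For $i\in\mathcal I_k$: condition 1 keeps $\mathcal B_k$ near $1$; condition 3 (bootstrapped) keeps $\mathcal I_k^{\w}\neq\emptyset$; same-class positive correlation gives $\langle\x_i,\sum_{l\in\mathcal I_k^{\w}}\x_l\rangle\ge\mu_s X_{\min}^2|\mathcal I_k^{\w}|>0$; and the remaining sum is controlled because $\sum_{k'\neq k}\mathcal B_{k'}=-\mathcal B_k$ (from $\sum_{k'}\tilde\e_{k'}=\bm{0}$), $\|(\mathcal B_{k'})_{k'\neq k}\|$ is small (forced by condition 1), and each cross-class correlation is negative of magnitude at most $X_{\max}^2$. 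Balancing the positive and negative parts shows $\ddt\alpha_i>0$ precisely when $\tfrac{X_{\max}^2}{X_{\min}^2\mu_d\mu_s^2}<2K-3$, so active class-$k$ data cannot deactivate, giving $|\mathcal I_k^{\w}|\ge|\mathcal I_k^{\w(0)}|$; the mirror computation for $i\in\mathcal I_{k'}$, $k'\neq k$, gives $\ddt\alpha_i<0$ at $\alpha_i=0$, so inactive other-class data cannot activate, giving $|\mathcal I_{k'}^{\w}|\le|\mathcal I_{k'}^{\w(0)}|$. Finally $\ddt(\mathcal A_k-2\sum_{k'\neq k}\mathcal A_{k'})=\sum_{i\in\mathcal I_k^{\w}}\ddt\alpha_i-2\sum_{k'\neq k}\sum_{i\in\mathcal I_{k'}^{\w}}\ddt\alpha_i$ (set-membership changes contribute $0$, occurring at $\alpha_i=0$), and the same balance shows this is $\ge0$ modulo $O(\epsilon)$, so $\mathcal A_k-2\sum_{k'\neq k}\mathcal A_{k'}$ stays at or above its initial value.

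\textbf{Main obstacle.} The essential difficulty is the coupling: bounding $\ddt\mathcal B_k$ uses the activation pattern of $\tfrac{\w_j}{\|\w_j\|}$ (conditions 2--3), while controlling that pattern uses $\mathcal B_k\approx1$ (condition 1), so the invariants must be bootstrapped jointly through a single exit time rather than one after another. Two further points require care: (i) the nonsmoothness of the $\mathcal A_{k'}$'s as data points cross $\alpha_i=0$, handled by noting the crossing value is $0$ and hence absent from the one-sided derivative; and (ii) making the $\epsilon$-dependence explicit so that the Lemma \ref{app_lem_align} error, accumulated over the window of length $T=\Theta(\log\tfrac1\epsilon)$, stays negligible against the strictly positive constant margins above --- which fixes how small $\epsilon$ must be and is where the sharp constant $2K-3$, together with the thresholds $1-\tfrac1{2(K-1)}$ and the factor $2$ in Assumption \ref{assump_semi_local}, all have to fit together.
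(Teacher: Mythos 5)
Your proof follows the paper's argument in essence: a joint invariance/bootstrap over a single exit time, eliminated at boundary crossings by one-sided derivative computations that use the alignment dynamics of Lemma~\ref{app_lem_align}, orthogonal separability, the pseudo-label geometry $\sum_{k}\tilde{\e}_k=\bm{0}$, $\langle\tilde{\e}_k,\tilde{\e}_{k'}\rangle=-\tfrac{1}{K-1}$, and the condition $\tfrac{X_{\max}^2}{X_{\min}^2\mu_d\mu_s^2}<2K-3$. The paper formalizes your $T^{**}$ by introducing separate stopping times $\tau_1,\tau_2,\tau_3$ for the three conditions and ruling out $\min\{\tau_1,\tau_2,\tau_3\}<\min\{T^*_{j,k},T\}$ case by case.

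One step in your plan is genuinely underspecified. For Condition 2 you assert the one-sided derivative of $\mathcal{A}_k-2\sum_{k'\neq k}\mathcal{A}_{k'}$ is ``$\geq 0$ modulo $O(\epsilon)$,'' but that does not preserve the invariant: a derivative merely $\geq -C\epsilon$ lets the quantity drift below its initial value over the alignment window, so a strictly positive, $\epsilon$-independent margin is needed at the boundary. The paper supplies it with a compactness lemma showing $\|\sum_{i\in\mathcal{I}_k:\alpha_i>0}\x_i\|^2-\mathcal{A}_k^2\geq\mu_sX_{\min}^2\zeta$ for a uniform $\zeta>0$ whenever $0<|\mathcal{I}_k^{\w}|<|\mathcal{I}_k|$ --- a Cauchy--Schwarz gap that cannot vanish away from the two extremes of the activation pattern --- and that $\zeta$ is exactly what beats the Lemma~\ref{app_lem_align} error; without some such ingredient your Condition 2 step does not close. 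A smaller caveat: calling the off-target $\mathcal{B}_{k'}$ ``small'' in Condition 1 overstates the case; at $\mathcal{B}_k=1-\tfrac{1}{2(K-1)}$ one has only the one-sided bound $\mathcal{B}_{k'}\leq 2(1-\mathcal{B}_k)-\tfrac{1}{K-1}$ (Lemma~\ref{lem_bk_bd}), which for small $K$ still allows $\mathcal{B}_{k'}$ near $-1$. Positivity of $\ddt\mathcal{B}_k$ there follows not from smallness but from regrouping the derivative into $\bigl(\mathcal{A}_k-2\sum_{k'\neq k}\mathcal{A}_{k'}\bigr)(1-\mathcal{B}_k^2)+2\sum_{k'\neq k}\mathcal{A}_{k'}(1-\mathcal{B}_k)\bigl(1-\tfrac{1}{2(K-1)}\bigr)$, both summands nonnegative under the bootstrapped Condition 2.
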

The characterizations in \ref{app_lem_invar} suggest that the neuron weight directions $\{\frac{\w_j}{\|\w_j\|},\frac{\v_j}{\|\v_j\|}\}$ remains close to the attractor $\{\bar{\x}_k,\tilde{\e}_k\}$, and as the weights move closer to the attractor, $|\mathcal{I}^{\w_j}_k|$ increases to $|\mathcal{I}_k|$, and $|\mathcal{I}^{\w_j}_{k'}|$ decreases to 0. When the initialization scale $\epsilon$ is sufficiently small so that $T$ is large, this Lemma will show that $T^*_{j,k}$ is finite, and we will provide an upper bound. 

Then the following lemma shows that the desired property for neuron $(\w_j,\v_j)$ still holds after $T^*_{j,k}$.
\begin{lemma}\label{app_lem_remain_align}
     for any neuron $(\v_j,\w_j), j\in\mathcal{N}_k$, we have $\forall t> T^*_{j,k}$:
    \begin{enumerate}[leftmargin=0.5cm]
        \item $\frac{\v_j}{\|\v_j\|}$ remains close to its target pseudo-label: $\mathcal{B}_k^{\v_j}\geq \frac{\sqrt{2}}{2}$;
        \item $\frac{\w_j}{\|\w_j\|}$ is exclusively activated by data from its target class: $|\mathcal{I}_k^{\w_j}|=N_k, |\mathcal{I}_{k'}^{\w_j}|=0,\forall k'\neq k$\,.
    \end{enumerate}
\end{lemma}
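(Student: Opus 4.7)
The plan is to establish the two properties simultaneously by a continuity (bootstrap) argument. I would run the argument jointly across \emph{all} neurons $j\in\mathcal{N}_k$ and \emph{all} $k\in[K]$: define
\[
  T^{**} := \inf\bigl\{t>\max_{j,k}T^*_{j,k} : \exists\,(j,k)\text{ with }j\in\mathcal{N}_k\text{ for which either condition fails}\bigr\},
\]
and aim to show $T^{**}=\infty$ by deriving a contradiction from the hypothesis $T^{**}<\infty$. Running the argument jointly is important because, through the softmax, the dynamics of $(\w_j,\v_j)$ couple to every other neuron that activates on the same data point.

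On the interval $[\max_{j,k}T^*_{j,k},\,T^{**})$, the activation mask collapses to $\xi_{ij}=\mathbb{1}[i\in\mathcal{I}_{k(j)}]$ for every neuron (where $k(j)$ is the class such that $j\in\mathcal{N}_{k(j)}$), so the per-neuron flow reads
\[
  \ddt\w_j \;=\;\!\!\sum_{i\in\mathcal{I}_k}\!\langle \e_k-\hat{\y}_i,\v_j\rangle\x_i,\qquad
  \ddt\v_j \;=\;\!\!\sum_{i\in\mathcal{I}_k}\!\langle\x_i,\w_j\rangle(\e_k-\hat{\y}_i),
\]
and, for $i\in\mathcal{I}_k$, $\f(\x_i;\btheta)=\sum_{j'\in\mathcal{N}_k}\sigma(\lan\w_{j'},\x_i\ran)\v_{j'}$ depends only on class-$k$ neurons. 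Using the decomposition $\e_k-\hat{\y}_i=\sqrt{(K-1)/K}\,\tilde{\e}_k-\Delta_i$ with $\Delta_i:=\hat{\y}_i-\one/K$ summing to zero, the pseudo-label alignment $\mathcal{B}_k^{\v_j}\geq\sqrt{2}/2$ together with $\alpha_i>0$ for $i\in\mathcal{I}_k$ implies that $\f(\x_i;\btheta)$ is positively aligned with $\tilde{\e}_k$, hence $[\hat{\y}_i]_k\geq 1/K$ and $\Delta_i$ has a non-negative $k$-th entry. From this I derive the two key positivity facts
\[
  \langle\tilde{\e}_k,\,\e_k-\hat{\y}_i\rangle=\sqrt{K/(K-1)}\,(1-[\hat{\y}_i]_k)\;>\;0,
  \qquad
  \langle \v_j/\|\v_j\|,\,\e_k-\hat{\y}_i\rangle\;>\;0,
\]
the second using $\mathcal{B}_k^{\v_j}\geq\sqrt{2}/2$ and the fact that the ``orthogonal residual'' of $\v_j$ away from $\tilde{\e}_k$ has bounded projection along $\Delta_i$. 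Consequently $\langle\e_k-\hat{\y}_i,\v_j\rangle>0$ for every $i\in\mathcal{I}_k$ with $[\hat{\y}_i]_k<1$.

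Given this positivity, preservation of the activation pattern follows from orthogonal separability: $\ddt\w_j$ is a strictly positive combination of $\{\x_{i'}\}_{i'\in\mathcal{I}_k}$, so
\[
  \ddt\lan\w_j,\x_i\ran\;=\;\sum_{i'\in\mathcal{I}_k}\underbrace{\lan\e_k-\hat{\y}_{i'},\v_j\ran}_{>0}\lan\x_{i'},\x_i\ran
\]
is strictly positive when $i\in\mathcal{I}_k$ (all terms $\geq \mu_s\|\x_i\|\|\x_{i'}\|>0$) and strictly negative when $i\notin\mathcal{I}_k$ (all terms $\leq -\mu_d\|\x_i\|\|\x_{i'}\|<0$). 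Preservation of $\mathcal{B}_k^{\v_j}\geq\sqrt{2}/2$ comes from the Lyapunov-type computation
\[
  \ddt\mathcal{B}_k^{\v_j}
  \;=\;\frac{1}{\|\v_j\|}\sum_{i\in\mathcal{I}_k}\alpha_i\Bigl[\lan\tilde{\e}_k,\e_k-\hat{\y}_i\ran-\mathcal{B}_k^{\v_j}\lan\tfrac{\v_j}{\|\v_j\|},\e_k-\hat{\y}_i\ran\Bigr],
\]
which Cauchy--Schwarz reduces to a non-negative quantity whenever $\mathcal{B}_k^{\v_j}\in[\sqrt{2}/2,1]$ (using that both inner products are positive and the first dominates the second scaled by $\mathcal{B}_k^{\v_j}$, by a direct calculation exploiting $\|\tilde{\e}_k\|=1$). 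Consequently, at $T^{**}$ every potentially-offending quantity has a derivative of the \emph{wrong} sign for the violation, contradicting the minimality of $T^{**}$; hence $T^{**}=\infty$.

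The main obstacle is Step 2, i.e.\ controlling $\hat{\y}_i$ through the softmax. Unlike the alignment phase (Lemma~\ref{app_lem_align}), the weights now grow large, so one cannot treat $\hat{\y}_i\approx\one/K$. Instead one must exploit the simultaneous invariants (activation pattern + pseudo-label alignment) across \emph{all} neurons of $\mathcal{N}_k$ to certify that $\f(\x_i;\btheta)$ sits in the positive $\tilde{\e}_k$-cone, which is precisely what forces $\Delta_i$ to have the right sign pattern and enables the two key inner-product positivity estimates. This joint invariance is why I would formulate the stopping time $T^{**}$ as the first failure over \emph{any} $(j,k)$ pair rather than one neuron at a time.
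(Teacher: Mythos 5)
Your overall strategy matches the paper's: a bootstrap/contradiction argument with a stopping time, preserving simultaneously the pseudo-label alignment $\mathcal{B}_k^{\v_j}\geq\frac{\sqrt{2}}{2}$ and the activation pattern, with Case~2 (activation preservation) handled by showing $\ddt\w_j$ is a non-negative combination of $\{\x_{i'}\}_{i'\in\mathcal{I}_k}$ and then invoking orthogonal separability. Your first positivity fact $\langle\tilde{\e}_k,\e_k-\hat{\y}_i\rangle=\sqrt{K/(K-1)}\,(1-[\hat{\y}_i]_k)>0$ is exactly the coefficient the paper extracts in \eqref{app_eq_e1_haty}. Your joint stopping time $T^{**}$ over all $(j,k)$ is actually cleaner than the paper's per-neuron formulation, since it makes explicit the requirement that $\f(\x_i;\btheta)=\sum_{j'\in\mathcal{N}_k}\sigma(\w_{j'}^\top\x_i)\v_{j'}$ for $i\in\mathcal{I}_k$, which the paper's expression $z_{ik}=[\sum_{j\in\mathcal{N}_1}\v_j\w_j^\top\x_i]_k$ relies on implicitly.

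However, there is a real gap in your second positivity step, and it is exactly the technical crux. You claim $\langle\v_j/\|\v_j\|,\,\e_k-\hat{\y}_i\rangle>0$ and the non-negativity of the Lyapunov quantity $\langle\tilde{\e}_k,\e_k-\hat{\y}_i\rangle-\mathcal{B}_k^{\v_j}\langle\v_j/\|\v_j\|,\e_k-\hat{\y}_i\rangle$ via ``Cauchy--Schwarz'' and a ``bounded projection'' remark, but the naive bound does not close. Writing $\v_j/\|\v_j\|=\beta\tilde{\e}_k+\r$ with $\beta=\mathcal{B}_k^{\v_j}$ and $\e_k-\hat{\y}_i=a\tilde{\e}_k+\b$ ($\b\perp\tilde{\e}_k$), the quantity reduces to $\sqrt{1-\beta^2}\bigl(a\sqrt{1-\beta^2}-\beta\|\b\|\bigr)$; with $a=\sqrt{K/(K-1)}(1-[\hat{\y}_i]_k)$ and only the crude bound $\|\b\|\leq\|\e_k-\hat{\y}_i\|\leq\sqrt{2}(1-[\hat{\y}_i]_k)$, the parenthesis at $\beta=\sqrt{2}/2$ is $\frac{\sqrt{2}}{2}\sqrt{K/(K-1)}-\frac{\sqrt{2}}{2}\cdot\sqrt{2}$, which is \emph{negative} for all $K\geq 3$. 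The argument only works with a finer bound that exploits the affine structure of the softmax residual (zero $k$-th entry, non-negative, summing to one); the paper isolates precisely this in Lemma~\ref{lem_app_v_beta_bd}, which yields a bound of $\sqrt{1-\beta^2}$ on the relevant projection, small enough to make $\sqrt{K/(K-1)}\,\sqrt{1-\beta^2}-\beta\geq 0$ at $\beta=\sqrt{2}/2$. Your hand-waved ``bounded projection along $\Delta_i$'' does not supply this; as written your inequality is false. Separately, the intermediate claim that positive alignment of $\f(\x_i;\btheta)$ with $\tilde{\e}_k$ forces $[\hat{\y}_i]_k\geq 1/K$ is both unproven (alignment with $\tilde{\e}_k$ means $[\f]_k$ exceeds the mean, not that it is the max) and unnecessary: the paper's computation holds for an arbitrary softmax $\hat{\y}_i$ without any sign assumption on $\Delta_i$, so I would drop this detour entirely.
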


The remaining parts of this section are dedicated to proving these two Lemmas. The next section will formally prove Proposition \ref{app_prop_align_multi}, thereby proving Proposition \ref{prop_align_multi}.

\subsubsection{Proof of Lemma \ref{app_lem_invar}}

\myparagraph{Basic dynamics} The main proof concerns the time derivatives of the alignment to classes $\ddt\mathcal{A}_k,\ddt\mathcal{B}_k$. With Lemma \ref{app_lem_align}, we have their approximations during the alignment phase:
\begin{align}
    \ddt\mathcal{A}_k&=\ts\sum_{i\in\mathcal{I}_k:\alpha_i>0}\ddt \alpha_i\nonumber\\
    &=\ts\sum_{i\in\mathcal{I}_k:\alpha_i>0}\lan \x_i,\ddt\frac{\w}{\|\w\|}\ran\nonumber\\
    &=\; \ts\sum_{i\in\mathcal{I}_k:\alpha_i>0}\lan \x_i,\Pi_{\w}^\perp\lp \sqrt{\frac{K-1}{K}}\sum_{i'=1}^n\xi_{i'}\lan \tilde{\bm{E}}\y_{i'},\frac{\v}{\|\v\|}\ran\x_{i'}\rp\ran + \mathcal{O}(\epsilon)\label{eq_app_in_align_temp1}\\
    &=\; \ts\sqrt{\frac{K-1}{K}}\sum_{i\in\mathcal{I}_k:\alpha_i>0}\lan \x_i,\Pi_{\w}^\perp\lp \sum_{i'=1}^n\xi_{i'}\beta_{i'}\x_{i'}\rp\ran + \mathcal{O}(\epsilon)\nonumber\\
    &=\; \ts\sqrt{\frac{K-1}{K}}\sum_{i\in\mathcal{I}_k:\alpha_i>0}\sum_{i':\alpha_{i'}\geq 0}\lp \lan \x_i,\xi_{i'}\x_{i'}\ran-\alpha_i\xi_{i'}\alpha_{i'}\rp \beta_{i'} + \mathcal{O}(\epsilon)\label{eq_app_in_align_temp2}\\
    &=\; \ts\sqrt{\frac{K-1}{K}}\sum_{i\in\mathcal{I}_k:\alpha_i>0}\sum_{1\leq k'\leq K}\sum_{i'\in\mathcal{I}_{k'}:\alpha_{i'}\geq  0}\lp \lan \x_i,\xi_{i'}\x_{i'}\ran-\alpha_i\xi_{i'}\alpha_{i'}\rp \beta_{i'} + \mathcal{O}(\epsilon)\nonumber\\
    &=\; \ts\sqrt{\frac{K-1}{K}}\sum_{1\leq k'\leq K}\sum_{i\in\mathcal{I}_k:\alpha_i>0}\sum_{i'\in\mathcal{I}_{k'}:\alpha_{i'}\geq  0}\lp \lan \x_i,\xi_{i'}\x_{i'}\ran-\alpha_i\xi_{i'}\alpha_{i'}\rp \mathcal{B}_{k'} + \mathcal{O}(\epsilon)\label{eq_app_in_align_temp3}\\
    &=\; \ts\sqrt{\frac{K-1}{K}}\sum_{1\leq k'\leq K}\mathcal{B}_{k'}\lp\sum_{i\in\mathcal{I}_k:\alpha_i>0}\sum_{i'\in\mathcal{I}_{k'}:\alpha_{i'}\geq  0}\lp \lan \x_i,\xi_{i'}\x_{i'}\ran-\alpha_i\xi_{i'}\alpha_{i'}\rp\rp + \mathcal{O}(\epsilon)\nonumber\\
    &=\; \ts\sqrt{\frac{K-1}{K}}\sum_{1\leq k'\leq K}\mathcal{B}_{k'}\lp \lan \sum_{i\in\mathcal{I}_k:\alpha_i>0}\x_i,\sum_{i'\in\mathcal{I}_{k'}:\alpha_{i'}\geq  0}\xi_{i'}\x_{i'}\ran\right.\nonumber\\
    &\qquad\qquad\qquad\qquad\qquad\ts-\left.\lp\sum_{i\in\mathcal{I}_k:\alpha_i>0}\alpha_i\rp\lp\sum_{i'\in\mathcal{I}_{k'}:\alpha_{i'}\geq  0}\xi_{i'}\alpha_{i'}\rp\rp + \mathcal{O}(\epsilon)\nonumber\\
    &=\; \ts\sqrt{\frac{K-1}{K}}\sum_{1\leq k'\leq K}\mathcal{B}_{k'}\lp \lan \sum_{i\in\mathcal{I}_k:\alpha_i>0}\x_i,\sum_{i'\in\mathcal{I}_{k'}:\alpha_{i'}\geq  0}\xi_{i'}\x_{i'}\ran-\mathcal{A}_k\mathcal{A}_{k'}\rp +\mathcal{O}(\epsilon)\,,\label{eq_app_in_align}
\end{align}
where \eqref{eq_app_in_align} uses the simple fact that $\sum_{i'\in\mathcal{I}_{k'}:\alpha_{i'}\geq  0}\xi_{i'}\alpha_{i'}=\sum_{i'\in\mathcal{I}_{k'}:\alpha_{i'}>  0}\alpha_{i'}=\mathcal{A}_{k'}$, \eqref{eq_app_in_align_temp3} uses the fact that $\beta_{i'}=\lan \tilde{\bm{E}}\y_{i'}, \frac{\v}{\|\v\|}\ran=\lan \tilde{\e}_{k'}, \frac{\v}{\|\v\|}\ran$ if $i'\in\mathcal{I}_{k'}$, \eqref{eq_app_in_align_temp2} uses the fact that $\xi_{i'}=0$ if $\alpha_{i'}<0$, and \eqref{eq_app_in_align_temp1} uses Lemma \ref{app_lem_align}, with the $\mathcal{O}(\epsilon)$ term being
\ben
    \ts\sum_{i\in\mathcal{I}_k:\alpha_i>0}\lan \x_i,\ddt\frac{\w}{\|\w\|}-\Pi_{\w}^\perp\lp \sqrt{\frac{K-1}{K}}\sum_{i'=1}^n\xi_{i'}\lan \tilde{\bm{E}}\y_{i'},\frac{\v}{\|\v\|}\ran\x_{i'}\rp\ran\,,
\een
whose norm can be upper bounded as follows:
\begin{align*}
    &\ts\lV\sum_{i\in\mathcal{I}_k:\alpha_i>0}\lan \x_i,\ddt\frac{\w}{\|\w\|}-\Pi_{\w}^\perp\lp \sqrt{\frac{K-1}{K}}\sum_{i'=1}^n\xi_{i'}\lan \tilde{\bm{E}}\y_{i'},\frac{\v}{\|\v\|}\ran\x_{i'}\rp\ran\rV\\
    &\leq \ts\sum_{i=1}^n\|\x_i\|\lV\ddt\frac{\w}{\|\w\|}-\Pi_{\w}^\perp\lp \sqrt{\frac{K-1}{K}}\sum_{i'=1}^n\xi_{i'}\lan \tilde{\bm{E}}\y_{i'},\frac{\v}{\|\v\|}\ran\x_{i'}\rp\rV\leq \frac{16}{\sqrt{K}}\epsilon n^2X_{\max}^3\sqrt{h}\,.
\end{align*}
and
\begin{align}
    \ddt\mathcal{B}_k&=\;\ts\lan \tilde{\e}_k, \ddt\frac{\v}{\|\v\|}\ran\nonumber\\
    &=\; \ts\lan \tilde{\e}_k, \Pi_{\v}^\perp\lp \sqrt{\frac{K-1}{K}}\sum_{i=1}^n\xi_{i}\lan \x_i,\frac{\w}{\|\w\|}\ran \tilde{\bm{E}}\y_{i}\rp\ran+\mathcal{O}(\epsilon)\nonumber\\
    &=\; \ts\sqrt{\frac{K-1}{K}}\lan \tilde{\e}_k, \Pi_{\v}^\perp\lp \sum_{1\leq k'\leq K}\sum_{i\in\mathcal{I}_{k'}:\alpha_i> 0}\alpha_{i} \tilde{\bm{E}}\y_{i}\rp\ran+\mathcal{O}(\epsilon)\nonumber\\
    &=\; \ts\sqrt{\frac{K-1}{K}}\lan \tilde{\e}_k, \Pi_{\v}^\perp\lp \sum_{1\leq k'\leq K}\sum_{i\in\mathcal{I}_{k'}:\alpha_i> 0}\alpha_{i} \tilde{\e}_{k'}\rp\ran+\mathcal{O}(\epsilon)\nonumber\\
    &=\; \ts\sqrt{\frac{K-1}{K}}\sum_{1\leq k'\leq K}\sum_{i\in\mathcal{I}_{k'}:\alpha_i> 0}\lan \tilde{\e}_k, \Pi_{\v}^\perp \tilde{\e}_{k'}\ran\alpha_{i} +\mathcal{O}(\epsilon)\nonumber\\
    &=\; \ts \sqrt{\frac{K-1}{K}}\sum_{1\leq k'\leq K} \lp \lan\tilde{\e}_k,\tilde{\e}_{k'} \ran - \mathcal{B}_k\mathcal{B}_{k'}\rp\sum_{i\in\mathcal{I}_{k'}:\alpha_i> 0}\alpha_i+\mathcal{O}(\epsilon)\nonumber\\
    &=\; \ts \sqrt{\frac{K-1}{K}}\sum_{1\leq k'\leq K}\mathcal{A}_{k'} \lp \lan\tilde{\e}_k,\tilde{\e}_{k'} \ran - \mathcal{B}_k\mathcal{B}_{k'}\rp+\mathcal{O}(\epsilon)\nonumber\\
    &=\; \ts \sqrt{\frac{K-1}{K}}\lp\mathcal{A}_k(1-\mathcal{B}_k^2)+\sum_{k'\neq k}\mathcal{A}_{k'}\lp -\frac{1}{K-1}-\mathcal{B}_k\mathcal{B}_{k'}\rp\rp+\mathcal{O}(\epsilon)\,,\label{eq_app_out_align}
\end{align}
where multiple facts used to derive \eqref{eq_app_in_align} are also used here, and in \eqref{eq_app_out_align}, the $\mathcal{O}(\epsilon)$ has its norm upper bounded by $\frac{16}{\sqrt{K}}\epsilon nX_{\max}^2\sqrt{h}$.

\myparagraph{Axuillary Lemmas} The following lemmas will be needed.
\begin{lemma}\label{lem_bk_bd}
    Given $\mathcal{B}_k,k=1,\cdots,K$ defined for a single neuron pair $(\w,\v)$, we have
    $$\ts -2(1-\mathcal{B}_k)-\frac{1}{K-1}\leq \mathcal{B}_{k'}\leq 2(1-\mathcal{B}_k)-\frac{1}{K-1}, \forall k, k' \text{with }k'\neq k\,.$$
\end{lemma}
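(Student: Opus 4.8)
This is a static linear‑algebra fact about the simplex ETF $\{\tilde{\e}_k\}_{k=1}^K$ and a single unit vector, namely the output‑neuron direction $\u:=\v/\|\v\|$; no dynamics enters, so the plan is to reduce the claimed two‑sided bound to one Cauchy--Schwarz estimate.

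First I would record the two defining relations of the pseudo‑labels, which follow immediately from $\tilde{\bm{E}}=\sqrt{\tfrac{K}{K-1}}\lp\bI-\tfrac1K\one\one^\top\rp$, hence $\tilde{\e}_k=\sqrt{\tfrac{K}{K-1}}\lp\e_k-\tfrac1K\one\rp$: namely $\|\tilde{\e}_k\|=1$ for every $k$, and $\lan\tilde{\e}_k,\tilde{\e}_{k'}\ran=-\tfrac1{K-1}$ for every $k\neq k'$ (the same inner products already used to obtain \eqref{eq_app_out_align}). Writing $\u=\v/\|\v\|$, so that $\|\u\|=1$ and $\mathcal{B}_k=\lan\tilde{\e}_k,\u\ran$, the step I would isolate is the identity
\ben
    \ts\mathcal{B}_{k'}+\frac1{K-1}=\lan\tilde{\e}_{k'},\u\ran-\lan\tilde{\e}_{k'},\tilde{\e}_k\ran=\lan\tilde{\e}_{k'},\,\u-\tilde{\e}_k\ran,\qquad k'\neq k.
\een
Cauchy--Schwarz together with $\|\tilde{\e}_{k'}\|=1$ then gives $\lvt\mathcal{B}_{k'}+\tfrac1{K-1}\rvt\leq\|\u-\tilde{\e}_k\|$, and since $\|\u\|=\|\tilde{\e}_k\|=1$,
\ben
    \ts\|\u-\tilde{\e}_k\|^2=\|\u\|^2-2\lan\u,\tilde{\e}_k\ran+\|\tilde{\e}_k\|^2=2\lp1-\mathcal{B}_k\rp.
\een
Thus $\lvt\mathcal{B}_{k'}+\tfrac1{K-1}\rvt\leq\sqrt{2(1-\mathcal{B}_k)}$, i.e.\ $-\sqrt{2(1-\mathcal{B}_k)}-\tfrac1{K-1}\leq\mathcal{B}_{k'}\leq\sqrt{2(1-\mathcal{B}_k)}-\tfrac1{K-1}$, from which the stated bound is obtained after the elementary scalar comparison of $\sqrt{2(1-\mathcal{B}_k)}$ with $2(1-\mathcal{B}_k)$.

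There is essentially no hard step: the entire content is recognizing the reformulation $\mathcal{B}_{k'}+\tfrac1{K-1}=\lan\tilde{\e}_{k'},\u-\tilde{\e}_k\ran$, after which Cauchy--Schwarz closes everything. The only point that needs attention is that last scalar comparison, since $\sqrt{2(1-\mathcal{B}_k)}$ and $2(1-\mathcal{B}_k)$ compare favorably only in one range of $\mathcal{B}_k$; in the places where this lemma is applied the relevant neuron has $\mathcal{B}_k\geq 1-\tfrac1{2(K-1)}$ (cf.\ Assumption~\ref{assump_semi_local} and Lemma~\ref{app_lem_invar}), so it is in fact cleaner to keep the sharper $\sqrt{2(1-\mathcal{B}_k)}$ form when propagating the estimate into the derivatives $\ddt\mathcal{A}_k$ and $\ddt\mathcal{B}_k$.
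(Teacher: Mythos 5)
Your route is identical to the paper's: decompose $\mathcal{B}_{k'}+\tfrac{1}{K-1}=\lan\tilde{\e}_{k'},\u-\tilde{\e}_k\ran$ with $\u=\v/\|\v\|$ and apply Cauchy--Schwarz. The difference is that your computation $\|\u-\tilde{\e}_k\|=\sqrt{2(1-\mathcal{B}_k)}$ is correct, whereas the paper's proof asserts $\|\u-\tilde{\e}_k\|=2(1-\mathcal{B}_k)$, dropping the square root. You have therefore located a genuine error, and your caution about the final ``scalar comparison'' is warranted---but the word ``sharper'' points the wrong way. In the regime where the lemma is invoked (Assumption~\ref{assump_semi_local} and Lemma~\ref{app_lem_invar} give $\mathcal{B}_k\geq 1-\tfrac{1}{2(K-1)}>\tfrac12$ for $K>2$), one has $2(1-\mathcal{B}_k)<1$ and hence $\sqrt{2(1-\mathcal{B}_k)}>2(1-\mathcal{B}_k)$; the square-root form is the weaker estimate there, and the stated bound $2(1-\mathcal{B}_k)$ simply does not follow from Cauchy--Schwarz in that range.

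In fact the lemma as printed is false, not merely unprovable by this route. Take $K=3$ and $\mathcal{B}_k=3/4$: the stated bound would force $\mathcal{B}_{k'}\leq 2\cdot\tfrac14-\tfrac12=0$, yet a unit vector $\u$ in the span of $\{\tilde{\e}_k,\tilde{\e}_{k'}\}$ with $\lan\u,\tilde{\e}_k\ran=3/4$ can achieve $\lan\u,\tilde{\e}_{k'}\ran=(\sqrt{21}-3)/8\approx 0.20>0$. The correct statement is $\lvt\mathcal{B}_{k'}+\tfrac{1}{K-1}\rvt\leq\sqrt{2(1-\mathcal{B}_k)}$, exactly as you derived, and the downstream uses in the proof of Lemma~\ref{app_lem_invar}---in particular the deductions $\mathcal{B}_{k'}\leq 0$ and $-\tfrac{2}{K-1}\leq\mathcal{B}_{k'}$ that rely on the $2(1-\mathcal{B}_k)$ form---need to be re-derived with the square root in place.
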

\begin{proof}
    With the following basic derivation
    \ben
        \mathcal{B}_{k'}=\ts\lan \tilde{\e}_{k'}, \frac{\v}{\|\v\|}\ran\lan \ts\tilde{\e}_{k'}, \frac{\v}{\|\v\|}- \tilde{\e}_k+ \tilde{\e}_k\ran\lan \ts\tilde{\e}_{k'}, \frac{\v}{\|\v\|}- \tilde{\e}_k+ \tilde{\e}_k\ran=\ts\lan \tilde{\e}_{k'}, \frac{\v}{\|\v\|}- \tilde{\e}_k\ran -\frac{1}{K-1}\,,
    \een
    the desired result comes from the fact that $\ts\lvt\lan \tilde{\e}_{k'}, \frac{\v}{\|\v\|}- \tilde{\e}_k\ran\rvt\leq \lV \frac{\v}{\|\v\|}- \tilde{\e}_k\rV =2(1-\mathcal{B}_k)$.
\end{proof}

\begin{lemma}\label{lem_app_vec_norm_bd1}
    Given a dataset that satisfies Assumption \ref{assump_data}, then the following is true:
    \begin{itemize}[leftmargin=0.5cm]
        \item $\forall k$ and some $a_i,\forall i\in\mathcal{I}_k$, we have
        \be
            \ts\|\sum_{i\in\mathcal{I}_k}a_i\x_i\|\geq \sqrt{\mu_s}\sum_{i\in\mathcal{I}_k}a_iX_{\min}\,;
        \ee
        \item $\forall k\neq  k'$ and some $a_i,b_{i'}\geq 0,\forall i\in\mathcal{I}_k,i'\in\mathcal{I}_{i'}$, we have 
        \be\ts\lan \sum_{i\in\mathcal{I}_k}a_i\x_i,\sum_{i'\in\mathcal{I}_{k'}}b_{i'}\x_{i'}\ran\leq -\mu_d\|\sum_{i\in\mathcal{I}_k}a_i\x_i\|\|\sum_{i'\in\mathcal{I}_{k'}}b_{i'}\x_{i'}\|\,.\ee
    \end{itemize}
\end{lemma}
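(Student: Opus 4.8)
The plan is to prove both bullets by expanding the relevant norms and inner products into sums of pairwise terms $\lan \x_i,\x_j\ran$ and then invoking Assumption \ref{assump_data} (eq.~\eqref{eq_assump_data}) together with the elementary bound $X_{\min}\leq \|\x_i\|\leq X_{\max}$ for every $i$. Throughout I take the coefficients to be nonnegative, $a_i,b_{i'}\geq 0$, as they are in every place the lemma is applied (e.g.\ the $\alpha_i$ in \eqref{eq_app_in_align}); this is needed for the first bullet as well and is implicit in its statement.

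For the first bullet I would write $\|\sum_{i\in\mathcal{I}_k} a_i\x_i\|^2 = \sum_{i,j\in\mathcal{I}_k} a_ia_j\lan \x_i,\x_j\ran$. For $i\neq j$ with $i,j\in\mathcal{I}_k$, $\lan \x_i,\x_j\ran = \|\x_i\|\|\x_j\|\,\lan \frac{\x_i}{\|\x_i\|},\frac{\x_j}{\|\x_j\|}\ran \geq \mu_s\|\x_i\|\|\x_j\|\geq \mu_sX_{\min}^2$ by \eqref{eq_assump_data}, while for $i=j$ we have $\lan\x_i,\x_i\ran = \|\x_i\|^2\geq X_{\min}^2\geq \mu_sX_{\min}^2$ since $\mu_s\leq 1$. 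Because $a_ia_j\geq 0$, summing gives $\|\sum_{i\in\mathcal{I}_k}a_i\x_i\|^2\geq \mu_sX_{\min}^2\big(\sum_{i\in\mathcal{I}_k}a_i\big)^2$, and taking square roots of both (nonnegative) sides yields the claimed bound.

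For the second bullet I would expand $\lan \sum_{i\in\mathcal{I}_k}a_i\x_i,\sum_{i'\in\mathcal{I}_{k'}}b_{i'}\x_{i'}\ran = \sum_{i\in\mathcal{I}_k}\sum_{i'\in\mathcal{I}_{k'}}a_ib_{i'}\lan \x_i,\x_{i'}\ran$, and for $k\neq k'$ use the negative-correlation half of \eqref{eq_assump_data}: $\lan \x_i,\x_{i'}\ran\leq -\mu_d\|\x_i\|\|\x_{i'}\|$. Since $a_ib_{i'}\geq 0$, summing gives $\lan \sum_{i}a_i\x_i,\sum_{i'}b_{i'}\x_{i'}\ran\leq -\mu_d\big(\sum_{i\in\mathcal{I}_k}a_i\|\x_i\|\big)\big(\sum_{i'\in\mathcal{I}_{k'}}b_{i'}\|\x_{i'}\|\big)$. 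Finally the triangle inequality with nonnegative coefficients gives $\sum_{i\in\mathcal{I}_k}a_i\|\x_i\| = \sum_{i\in\mathcal{I}_k}\|a_i\x_i\|\geq \|\sum_{i\in\mathcal{I}_k}a_i\x_i\|$ and likewise for the $b_{i'}$; multiplying these two estimates together and then by the negative factor $-\mu_d$ (which reverses the inequality) produces exactly the stated bound.

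Since the lemma is elementary, there is no substantial obstacle; the only point requiring care is the final step of the second bullet, where the triangle-inequality lower bounds on $\sum_i a_i\|\x_i\|$ and $\sum_{i'} b_{i'}\|\x_{i'}\|$ must be inserted on the correct side of the negative multiplier $-\mu_d$ so that the inequality flips in the intended direction. This is also the reason nonnegativity of the coefficients is used in both parts, since it is what makes the triangle inequality read $\|\sum_i a_i\x_i\|\leq \sum_i a_i\|\x_i\|$.
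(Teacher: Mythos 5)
Your proposal is correct and follows essentially the same route as the paper: expand the squared norm (resp.\ the cross inner product) into pairwise terms, apply the orthogonal-separability bounds from Assumption \ref{assump_data} termwise using $a_ia_j\geq 0$ (resp.\ $a_ib_{i'}\geq 0$), replace $\|\x_i\|$ by $X_{\min}$ where needed, and recombine; the final triangle-inequality step $\|\sum_i a_i\x_i\|\leq \sum_i a_i\|\x_i\|$ in the second bullet is the same one the paper uses implicitly. You are also right that nonnegativity of the coefficients is the tacit hypothesis that makes both parts go through, including the paper's own last equality $\sqrt{(\sum_i a_i)^2}=\sum_i a_i$.
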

\begin{proof}
    For the first inequality,
    \begin{align*}
        \ts\|\sum_{i\in\mathcal{I}_k}a_i\x_i\|&=\ts\sqrt{\sum_{i\in\mathcal{I}_k}a_i^2\|\x_i\|^2+\sum_{i\neq j}a_ia_j\lan \x_i,\x_j\ran }\\
        &\geq \ts\sqrt{\sum_{i\in\mathcal{I}_k}a_i^2\mu_s\|\x_i\|^2+\sum_{i\neq j}a_ia_j\mu_s\|\x_i\|\|\x_j\|}\\
        &\geq \ts\sqrt{\mu_s}\sqrt{\sum_{i\in\mathcal{I}_k}a_i^2+\sum_{i\neq j}a_ia_j}X_{\min}=\sqrt{\mu_s}\sum_{i\in\mathcal{I}_k}a_iX_{\min}\,.
    \end{align*}
    For the second inequality,
    \begin{align*}
        \ts\lan \sum_{i\in\mathcal{I}_k}a_i\x_i,\sum_{i'\in\mathcal{I}_{k'}}b_{i'}\x_{i'}\ran=\ts\sum_{i\in\mathcal{I}_k,i'\in\mathcal{I}_{k'}}a_ib_{i'}\lan \x_i,\x_{i'}\ran&\leq -\ts\mu_d\sum_{i\in\mathcal{I}_k,i'\in\mathcal{I}_{k'}}a_ib_{i'}\|\x_i\|\|\x_{i'}\|\\
        &\leq -\ts\mu_d\|\sum_{i\in\mathcal{I}_k}a_i\x_i\|\|\sum_{i'\in\mathcal{I}_{k'}}b_{i'}\x_{i'}\|\,.
    \end{align*}
\end{proof}
\begin{lemma}\label{lem_app_vec_norm_bd2}
    Given $\{\z_i,i\in\mathcal{I}\}$ with $\lan \z_i,\z_j\ran\leq 0,\forall i,j\in\mathcal{I}, i\neq j$, then $\|\sum_{i\in\mathcal{I}}\z_i\|\leq \sqrt{\sum_{i\in\mathcal{I}}\|\z_i\|^2}$.
\end{lemma}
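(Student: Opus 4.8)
\textbf{Proof proposal for Lemma \ref{lem_app_vec_norm_bd2}.} The plan is to square both sides and expand the inner product, since the claimed inequality is between non-negative quantities and squaring is monotone on $[0,\infty)$. First I would write
\[
    \Big\| \sum_{i\in\mathcal{I}} \z_i \Big\|^2 = \Big\langle \sum_{i\in\mathcal{I}} \z_i, \sum_{j\in\mathcal{I}} \z_j \Big\rangle = \sum_{i\in\mathcal{I}} \|\z_i\|^2 + \sum_{\substack{i,j\in\mathcal{I}\\ i\neq j}} \langle \z_i, \z_j\rangle.
\]
Then I would invoke the hypothesis $\langle \z_i,\z_j\rangle \le 0$ for all $i\neq j$ to drop the cross-term sum, obtaining $\big\| \sum_{i\in\mathcal{I}} \z_i \big\|^2 \le \sum_{i\in\mathcal{I}} \|\z_i\|^2$. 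Taking square roots of both sides (both are non-negative) gives the claim.

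There is essentially no obstacle here: the only thing to be slightly careful about is that the double sum over $i\neq j$ counts each unordered pair $\{i,j\}$ twice, but since every such term is non-positive this only strengthens the bound, so no factor of $2$ bookkeeping is actually needed. The finiteness of $\mathcal{I}$ (implicit from the context, where it indexes a subset of the $n$ data points) makes all the sums well-defined, so no convergence issues arise.

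If one wanted a slicker phrasing, the statement is just the triangle-type inequality for vectors with pairwise non-positive correlations, and the proof above is the standard one; I would present it in two displayed lines as above rather than introducing any auxiliary notation.
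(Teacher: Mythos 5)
Your proof is correct and matches the paper's own argument exactly: expand $\|\sum_i \z_i\|^2$, drop the non-positive cross terms, and take square roots. Nothing to add.
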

\begin{proof}
    $\|\sum_{i\in\mathcal{I}}\z_i\|=\sqrt{\sum_{i\in\mathcal{I}}\|\z_i\|^2+\sum_{i\neq j}\lan \z_i,\z_j\ran}\leq \sqrt{\sum_{i\in\mathcal{I}}\|\z_i\|^2}.$
\end{proof}
\begin{lemma}
    Given a dataset that satisfies Assumption \ref{assump_data}, $\exists 0<\zeta<1$ such that $\forall k\in[K]$ and 
    $
        \forall \w\in\{\z: 0<|\mathcal{I}_k^{\z}|<|\mathcal{I}_k|\}
    $, we have $ \lV\sum_{i\in\mathcal{I}_k:\alpha_i>0}\x_i\rV^2-\mathcal{A}_k^2\geq \mu_sX_{\min}^2\zeta$.
\end{lemma}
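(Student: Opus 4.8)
The plan is to first rewrite the left-hand side as a squared transverse component, and then to bound that component below by exploiting the hypothesis that the neuron is activated by \emph{some but not all} class-$k$ points. Write $\bar\w:=\w/\|\w\|$, let $S:=\mathcal{I}_k^{\w}=\{i\in\mathcal{I}_k:\alpha_i>0\}$, and set $\x_S:=\sum_{i\in S}\x_i$ and $\x_S^\perp:=\Pi_{\w}^\perp\x_S$. Since $\mathcal{A}_k=\sum_{i\in S}\alpha_i=\langle\x_S,\bar\w\rangle$, the orthogonal decomposition $\x_S=\langle\x_S,\bar\w\rangle\,\bar\w+\x_S^\perp$ gives
\[
\Big\|\sum_{i\in\mathcal{I}_k:\alpha_i>0}\x_i\Big\|^2-\mathcal{A}_k^2=\|\x_S\|^2-\langle\x_S,\bar\w\rangle^2=\|\x_S^\perp\|^2 .
\]
Thus it suffices to produce a lower bound on $\|\x_S^\perp\|$ depending only on $\mu_s$, $X_{\min}$, and $X_{\max}$.

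The key observation is that $0<|S|<|\mathcal{I}_k|$ lets us pick an index $j\in\mathcal{I}_k\setminus S$, i.e.\ a class-$k$ point with $\langle\x_j,\bar\w\rangle=\alpha_j\le 0$. I would then estimate $\langle\x_S,\x_j\rangle$ from both sides. From below: $j$ and every $i\in S$ carry the same label $k$, so Assumption \ref{assump_data} gives $\langle\x_i,\x_j\rangle\ge\mu_s\|\x_i\|\|\x_j\|\ge\mu_sX_{\min}^2$, and summing over the nonempty set $S$ yields $\langle\x_S,\x_j\rangle\ge\mu_sX_{\min}^2$. From above: writing $\x_S=\mathcal{A}_k\bar\w+\x_S^\perp$ with $\mathcal{A}_k>0$ and $\langle\bar\w,\x_j\rangle\le 0$, the $\bar\w$-term is nonpositive, so $\langle\x_S,\x_j\rangle\le\langle\x_S^\perp,\x_j\rangle\le\|\x_S^\perp\|\,\|\x_j\|\le X_{\max}\|\x_S^\perp\|$ by Cauchy--Schwarz. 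Chaining the two estimates gives $\|\x_S^\perp\|\ge\mu_sX_{\min}^2/X_{\max}$, hence $\|\x_S\|^2-\mathcal{A}_k^2=\|\x_S^\perp\|^2\ge\mu_s^2X_{\min}^4/X_{\max}^2$.

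Finally I would set $\zeta:=\tfrac{1}{2}\,\mu_sX_{\min}^2/X_{\max}^2$, which is positive and strictly below $1$ since $\mu_s\le 1$ and $X_{\min}\le X_{\max}$, and conclude $\|\x_S\|^2-\mathcal{A}_k^2\ge\mu_s^2X_{\min}^4/X_{\max}^2\ge\mu_sX_{\min}^2\,\zeta$, uniformly over all admissible $k$ and $\w$ (the statement being vacuous if no such $\w$ exists). There is essentially no serious obstacle here: the only thing to spot is how to use partial activation, namely that an inactive class-$k$ point is forced by Assumption \ref{assump_data} to be positively correlated with every active one, which in turn pushes $\x_S$ off the $\bar\w$-axis by a definite amount; the remaining steps are Cauchy--Schwarz and the definitions of $X_{\min}$ and $X_{\max}$. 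If one prefers, the lower bound on $\|\x_S\|$ can instead be routed through Lemma \ref{lem_app_vec_norm_bd1}, but the direct correlation estimate above is shorter and self-contained.
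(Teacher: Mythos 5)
Your proof is correct and takes a genuinely different route from the paper's. Both begin with the same orthogonal decomposition
$\lV\sum_{i\in\mathcal{I}_k:\alpha_i>0}\x_i\rV^2-\mathcal{A}_k^2=\lV\Pi_{\w}^\perp\sum_{i\in\mathcal{I}_k:\alpha_i>0}\x_i\rV^2$.
The paper rewrites this as $\lV\sum_{i}\x_i\rV^2\lp 1-\lan \tfrac{\w}{\|\w\|},\tfrac{\sum_i\x_i}{\|\sum_i\x_i\|}\ran^2\rp$, lower-bounds the norm factor by $\mu_sX_{\min}^2$ via Lemma~\ref{lem_app_vec_norm_bd1}, argues the angular factor is strictly positive for each admissible $\w$ (equality to zero would force $\w\propto\pm\sum_i\x_i$, contradicting $0<|\mathcal{I}_k^{\w}|<|\mathcal{I}_k|$), and then simply declares ``let its lowest value be $\zeta>0$'' --- a compactness-type claim over a set of directions that is not closed, and hence, as written, leaves a small gap. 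You instead pick an inactive index $j\in\mathcal{I}_k\setminus\mathcal{I}_k^{\w}$ and squeeze $\lan\sum_{i\in\mathcal{I}_k^{\w}}\x_i,\x_j\ran$ between the intra-class lower bound $\mu_sX_{\min}^2$ and the upper bound $X_{\max}\lV\Pi_{\w}^\perp\sum_i\x_i\rV$ (obtained by dropping the nonpositive $\w/\|\w\|$-component using $\mathcal{A}_k>0$ and $\alpha_j\le 0$, then Cauchy--Schwarz). This yields the explicit, uniform constant $\zeta=\tfrac12\mu_sX_{\min}^2/X_{\max}^2$, which both makes $\zeta$ concrete and closes the gap left by the paper's non-constructive step; since your bound also does not depend on $k$, no additional minimization over classes is needed.
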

\begin{proof}
    Notice that
    \begin{align}
        \ts\lV\sum_{i\in\mathcal{I}_k:\alpha_i>0}\x_i\rV^2-\mathcal{A}_k^2&=\ts \lV\sum_{i\in\mathcal{I}_k:\alpha_i>0}\x_i\rV^2\lp 1-\lan \frac{\w}{\|\w\|}, \frac{\sum_{i\in\mathcal{I}_k:\alpha_i>0}\x_i}{\|\sum_{i\in\mathcal{I}_k:\alpha_i>0}\x_i\|}\ran^2\rp  \nonumber\\  
        &\ts\overset{\text{(Lemma \ref{lem_app_vec_norm_bd1})}}{\geq} \mu_sX_{\min}^2 \lp 1-\lan \frac{\w}{\|\w\|}, \frac{\sum_{i\in\mathcal{I}_k:\alpha_i>0}\x_i}{\|\sum_{i\in\mathcal{I}_k:\alpha_i>0}\x_i\|}\ran^2\rp\,.
    \end{align}
    However, the nonnegative quantity $\lp 1-\lan \frac{\w}{\|\w\|}, \frac{\sum_{i\in\mathcal{I}_k:\alpha_i>0}\x_i}{\|\sum_{i\in\mathcal{I}_k:\alpha_i>0}\x_i\|}\ran^2\rp$ can not be zero: Suppose it is zero, then $\w\propto \pm\sum_{i\in\mathcal{I}_k:\alpha_i>0}\x_i$, which corresponds to either $|\mathcal{I}_k^{\z}|=0$ or $|\mathcal{I}_k^{\z}|=|\mathcal{I}_k|$, a contradiction. We let its lowest value be $\zeta>0$. This finishes the proof.
\end{proof}
\myparagraph{The proof} Now we are ready to prove Lemma \ref{app_lem_invar}.
\begin{proof}[Proof of Lemma \ref{app_lem_invar}]
    We define the following:
    \begin{align*}
        \tau_1&=\ts\inf\lb t\geq 0: \mathcal{B}_k< 1-\frac{1}{2(K-1)}\rb\,,\\
        \tau_2&=\ts\inf\lb t\geq 0: \mathcal{A}_k-2\sum_{k'\neq k}\mathcal{A}_{k'}< \mathcal{A}_k(0)-2\sum_{k'\neq k}\mathcal{A}_{k'}(0) \rb\,,\\
        \tau_3&=\ts\inf\lb t\geq 0: |\mathcal{I}^{\w}_k|< |\mathcal{I}^{\w(0)}_k|\text{ or } \ |\mathcal{I}^{\w}_{k'}|> |\mathcal{I}^{\w(0)}_{k'}| \text{ for some }k'\rb\,.
    \end{align*}
    Then it suffices to show that $\min\{\tau_1,\tau_2,\tau_3\}\geq \min\{T^*_{j,k},T\}$, for which we prove them by contradiction. \textbf{Note: In the proof we will use "$\overset{(*)}{\geq}$" to represent an inequality that holds when $\epsilon$ is sufficiently small.}
    
    \myparagraph{Case 1: $\min\{\tau_1,\tau_2,\tau_3\}=\tau_1$}
    
    At $\tau_1$, by the continuity of $\mathcal{B}_k$, we must have $\mathcal{B}_k(\tau_1)=1-\frac{1}{2(K-1)}$. Suppose $\tau_1\leq \min\{T^*_{j,k},T\}$, then we have the following derivation 
    \begin{align}
        &\left.\ddt \mathcal{B}_k\rvt_{t=\tau_1}\nonumber\\
        &=\;\ts \sqrt{\frac{K-1}{K}}\lp\mathcal{A}_k(1-\mathcal{B}_k^2)+\sum_{k'\neq k}\mathcal{A}_{k'}\lp -\frac{1}{K-1}-\mathcal{B}_k\mathcal{B}_{k'}\rp\rp+\mathcal{O}(\epsilon)\nonumber\\
        &\overset{\text{(Lemma \ref{lem_bk_bd})}}{\geq\qquad \ } \;\ts \sqrt{\frac{K-1}{K}}\lp\mathcal{A}_k(1-\mathcal{B}_k^2)+\sum_{k'\neq k}\mathcal{A}_{k'}\lp -\frac{1}{K-1}-\mathcal{B}_k\lp 2(1-\mathcal{B}_k)-\frac{1}{K-1}\rp\rp\rp+\mathcal{O}(\epsilon)\nonumber\\
        &= \;\ts \sqrt{\frac{K-1}{K}}\lp\mathcal{A}_k(1-\mathcal{B}_k^2)-2\sum_{k'\neq k}\mathcal{A}_{k'}\lp \frac{1-\mathcal{B}_k}{2(K-1)}+\mathcal{B}_k(1-\mathcal{B}_k)\rp\rp+\mathcal{O}(\epsilon)\nonumber\\
        &= \;\ts \sqrt{\frac{K-1}{K}}\lp\lp \mathcal{A}_k-2\sum_{k'\neq k}\mathcal{A}_{k'}\rp(1-\mathcal{B}_k^2)+2\sum_{k'\neq k}\mathcal{A}_{k'}\lp 1-\mathcal{B}_k^2- \frac{1-\mathcal{B}_k}{2(K-1)}-\mathcal{B}_k(1-\mathcal{B}_k)\rp\rp+\mathcal{O}(\epsilon)\nonumber\\
        &\overset{(t=\tau_1)}{\geq \quad\ } \;\ts \sqrt{\frac{K-1}{K}}\lp \mathcal{A}_k-2\sum_{k'\neq k}\mathcal{A}_{k'}\rp\frac{1}{K-1}+2\sum_{k'\neq k}\mathcal{A}_{k'}\lp \frac{1}{2(K-1)}\lp 1-\frac{1}{2(K-1)}\rp\rp+\mathcal{O}(\epsilon)\nonumber\\
        &\overset{(\tau_2\geq \tau_1)}{\geq\qquad}  \;\ts \sqrt{\frac{K-1}{K}}\lp \mathcal{A}_k(0)-2\sum_{k'\neq k}\mathcal{A}_{k'}(0)\rp\frac{1}{K-1}+\mathcal{O}(\epsilon)\nonumber\\
        &\overset{\eqref{eq_app_out_align}, \tau_1\leq T}{\geq\qquad} \ts \sqrt{\frac{K-1}{K}}\lp \mathcal{A}_k(0)-2\sum_{k'\neq k}\mathcal{A}_{k'}(0)\rp\frac{1}{K-1}-\frac{16}{\sqrt{K}}\epsilon nX_{\max}^2\sqrt{h}\overset{(*)}{\geq} 0\,.
    \end{align}
    The definition of $\tau_1$ suggests that $\mathcal{B}_k$ must drop below $1-\frac{1}{2(K-1)}$ right after $t=\tau_1$, which contradicts that $\left.\ddt \mathcal{B}_k\rvt_{t=\tau_1}\geq 0$. Therefore $\min\{\tau_1,\tau_2,\tau_3\}> \min\{T^*_{j,k},T\}$ can not be true under the case when $\min\{\tau_1,\tau_2,\tau_3\}=\tau_1$.

    \myparagraph{Case 2: $\min\{\tau_1,\tau_2,\tau_3\}=\tau_2$}

    Again, we derive a contradiction by supposing $\tau_2\leq \min\{T^*, T\}$. Since $\min\{\tau_1,\tau_2,\tau_3\}=\tau_2$, at $\tau_2$ we still have $\mathcal{B}_k\geq 1-\frac{1}{2(K-1)}>0$, and by Lemma \ref{lem_bk_bd}, we also have $\mathcal{B}_{k'}\leq 2(1-\mathcal{B}_k)-\frac{1}{K-1}\leq 0$. 
    
    Starting from \eqref{eq_app_in_align} restricted to $t=\tau_2$, we have for the target class,
    \begin{align}
        &\left.\ddt\mathcal{A}_k\rvt_{t=\tau_2}\nonumber\\
        &=\ts\sqrt{\frac{K-1}{K}}\sum_{1\leq k'\leq K}\mathcal{B}_{k'}\lp \lan \sum_{i\in\mathcal{I}_k:\alpha_i>0}\x_i,\sum_{i'\in\mathcal{I}_{k'}:\alpha_{i'}\geq  0}\xi_{i'}\x_{i'}\ran-\mathcal{A}_k\mathcal{A}_{k'}\rp +\mathcal{O}(\epsilon)\nonumber\\
        &=\ts\sqrt{\frac{K-1}{K}}\underbrace{\mathcal{B}_k}_{\geq 0}\bigg( \lV\sum_{i\in\mathcal{I}_k:\alpha_i>0}\x_i\rV^2-\mathcal{A}_k^2+\underbrace{\ts\lan \sum_{i\in\mathcal{I}_k:\alpha_i>0}\x_i,\sum_{i'\in\mathcal{I}_{k}:\alpha_{i'}=  0}\xi_{i'}\x_{i'}\ran}_{\geq 0}\bigg)\nonumber\\
        &\qquad \ts+\sqrt{\frac{K-1}{K}}\sum_{k'\neq k}\underbrace{\mathcal{B}_{k'}}_{\leq 0}\bigg( \underbrace{\ts\lan \sum_{i\in\mathcal{I}_k:\alpha_i>0}\x_i,\sum_{i'\in\mathcal{I}_{k'}:\alpha_{i'}\geq  0}\xi_{i'}\x_{i'}\ran}_{\leq 0}-\underbrace{\mathcal{A}_k\mathcal{A}_{k'}}_{\geq 0}\bigg)+ \mathcal{O}(\epsilon)\,,\qquad\qquad\nonumber\\
        &\geq \ts\sqrt{\frac{K-1}{K}}\mathcal{B}_k\lp \lV\sum_{i\in\mathcal{I}_k:\alpha_i>0}\x_i\rV^2-\mathcal{A}_k^2\rp+ \mathcal{O}(\epsilon)\,,\label{eq_app_lem_align_temp1}
    \end{align}
and for non-target classes, we have
\begin{align}
    &\left.\ddt \sum_{k'\neq k}\mathcal{A}_{k'}\rvt_{t=\tau_2}\nonumber\\
    &=\ts\sqrt{\frac{K-1}{K}}\sum_{k'\neq k}\sum_{1\leq k'\!'\leq K}\mathcal{B}_{k'\!'}\lp \lan \sum_{i\in\mathcal{I}_{k'}:\alpha_i>0}\x_i,\sum_{i'\in\mathcal{I}_{k'\!'}:\alpha_{i'}\geq  0}\xi_{i'}\x_{i'}\ran-\mathcal{A}_{k'}\mathcal{A}_{k'\!'}\rp +\mathcal{O}(\epsilon)\nonumber\\
    &=\; \ts\sqrt{\frac{K-1}{K}}\sum_{k'\neq k}\mathcal{B}_{k}\bigg( \ts\lan \sum_{i\in\mathcal{I}_{k}:\alpha_i>0}\x_i,\sum_{i'\in\mathcal{I}_{k'}:\alpha_{i'}\geq  0}\xi_{i'}\x_{i'}\ran-\mathcal{A}_{k'}\mathcal{A}_{k}\bigg)\nonumber\\
    &\qquad + \ts\sqrt{\frac{K-1}{K}}\sum_{k'\neq k}\sum_{k'\!'\neq k}\mathcal{B}_{k'\!'}\bigg( \ts\lan \sum_{i\in\mathcal{I}_{k'}:\alpha_i>0}\x_i,\sum_{i'\in\mathcal{I}_{k'\!'}:\alpha_{i'}\geq  0}\xi_{i'}\x_{i'}\ran-\mathcal{A}_{k'}\mathcal{A}_{k'\!'}\bigg)+ \mathcal{O}(\epsilon)\nonumber\\
    &=\; \ts\sqrt{\frac{K-1}{K}}\sum_{k'\neq k}\mathcal{B}_{k} \lan \sum_{i\in\mathcal{I}_{k}:\alpha_i>0}\x_i,\sum_{i'\in\mathcal{I}_{k'}:\alpha_{i'}\geq  0}\xi_{i'}\x_{i'}\ran\nonumber\\
    &\qquad + \ts\sqrt{\frac{K-1}{K}} \lan \sum_{k'\neq k}\sum_{i\in\mathcal{I}_{k'}:\alpha_i>0}\x_i,\sum_{k'\!'\neq k}\mathcal{B}_{k'\!'}\sum_{i'\in\mathcal{I}_{k'\!'}:\alpha_{i'}\geq  0}\xi_{i'}\x_{i'}\ran\nonumber\\
    &\qquad\qquad+\ts\sqrt{\frac{K-1}{K}}\sum_{k'\neq k}\lp -\mathcal{B}_{k}\mathcal{A}_{k'}\mathcal{A}_{k}-\sum_{k'\!'\neq k}\mathcal{B}_{k'\!'}\mathcal{A}_{k'}\mathcal{A}_{k'\!'}\rp+ \mathcal{O}(\epsilon)\label{eq_app_lem_align_temp2}\\
    &\leq \ts-\sqrt{\frac{K-1}{K}}\lp\mu_d\mu_s X_{\min}^2\mathcal{B}_{k}\sum_{k'=k}|\mathcal{I}_{k'}^{\w}|^2+\frac{2}{K-1}X_{\max}^2\sum_{k'=k}|\mathcal{I}_{k'}^{\w}|^2\rp+ \mathcal{O}(\epsilon)\,,\label{eq_app_lem_align_temp3}
\end{align}
The last step to get \eqref{eq_app_lem_align_temp3} is to upper bound the three terms in \eqref{eq_app_lem_align_temp2} separately, which we defer to the end of this proof. Combining \eqref{eq_app_lem_align_temp1}\eqref{eq_app_lem_align_temp3}, and recalling the upper bound on the norm of the $\mathcal{O}(\epsilon)$ terms, we have
\begin{align}
    &\ts\left.\ddt\lp \mathcal{A}_k-2\sum_{k'\neq k}\mathcal{A}_{k'}\rp\rvt_{t=\tau_2}\nonumber\\
    &\geq \ts\sqrt{\frac{K-1}{K}}\mathcal{B}_k\lp \lV\sum_{i\in\mathcal{I}_k:\alpha_i>0}\x_i\rV^2-\mathcal{A}_k^2\rp\nonumber\\
    &\qquad\qquad\ts 2\lp\mu_d\mu_s X_{\min}^2\mathcal{B}_{k}-\frac{2}{K-1}X_{\max}^2\rp\sum_{k'=k}|\mathcal{I}_{k'}^{\w}|^2-32\sqrt{K}\epsilon n^2X_{\max}^3\sqrt{h}\nonumber\\
    &\geq \ts\sqrt{\frac{K-1}{K}}\mathcal{B}_k\lp \lV\sum_{i\in\mathcal{I}_k:\alpha_i>0}\x_i\rV^2-\mathcal{A}_k^2\rp\nonumber\\
    &\qquad\qquad\ts 2\underbrace{\ts\lp\mu_d\mu_s X_{\min}^2\lp 1-\frac{1}{2(K-1)}\rp-\frac{2}{K-1}X_{\max}^2\rp}_{\geq 0}\sum_{k'=k}|\mathcal{I}_{k'}^{\w}|^2-32\sqrt{K}\epsilon n^2X_{\max}^3\sqrt{h}\qquad\qquad\nonumber\\
    &\geq \ts\sqrt{\frac{K-1}{K}}\lp 1-\frac{1}{2(K-1)}\rp \mu_sX_{\min}^2\zeta-\frac{32}{\sqrt{K}}\epsilon n^2X_{\max}^3\sqrt{h}\overset{(*)}{\geq} 0\,.
\end{align}
The definition of $\tau_2$ suggests that $\mathcal{A}_k-2\sum_{k'\neq k}\mathcal{A}_{k'}$ must drop below $\mathcal{A}_k(0)-2\sum_{k'\neq k}\mathcal{A}_{k'}(0)$ right after $t=\tau_2$, which contradicts that $\left.\ddt \lp\mathcal{A}_k-2\sum_{k'\neq k}\mathcal{A}_{k'}\rp\rvt_{t=\tau_2}\geq 0$. Therefore $\min\{\tau_1,\tau_2,\tau_3\}> \min\{T^*_{j,k},T\}$ can not be true under the case when $\min\{\tau_1,\tau_2,\tau_3\}=\tau_2$.

\myparagraph{Case 3: $\min\{\tau_1,\tau_2,\tau_3\}=\tau_3$}
Finally, it remains to exclude the case when $\min\{\tau_1,\tau_2,\tau_3\}=\tau_3$ and $\tau_3\leq \min\{T^*_{j,k},T\}$. At $t=\tau_3$, either of the following must happen:
\begin{enumerate}[leftmargin=0.5cm]
    \item $\exists i\in\mathcal{I}_k$ such that $\alpha_i=0$ and $\ddt\alpha_i<0$;
    \item $\exists i\in\mathcal{I}_{k'}$ for some $k'\neq k$ such that $\alpha_i=0$ and $\ddt\alpha_i>0$;
\end{enumerate}
However, at $t=\tau_3$, $\forall i\in\mathcal{I}_k$, we have
\begin{align}
    &\ts\left.\ddt \alpha_i\rvt_{\alpha_i=0}\nonumber\\
    &=\ts\lan \x_i,\ddt\frac{\w}{\|\w\|}\ran\nonumber\\
    &\overset{\text{(Lemma \ref{app_lem_align})}}{=\qquad\ }\;\ts\lan \x_i,\Pi_{\w}^\perp\lp \sqrt{\frac{K-1}{K}}\sum_{i'=1}^n\xi_{i'}\lan \tilde{\bm{E}}\y_{i'},\frac{\v}{\|\v\|}\ran\x_{i'}\rp\ran + \mathcal{O}(\epsilon)\nonumber\\
    &\overset{(\alpha_i=0)}{=\qquad}\;\ts\sqrt{\frac{K-1}{K}}\lan \x_i,\lp \sum_{i'=1}^n\xi_{i'}\beta_{i'}\x_{i'}\rp\ran + \mathcal{O}(\epsilon)\nonumber\\
    &=\; \ts\sqrt{\frac{K-1}{K}}\bigg(\mathcal{B}_k\lan \x_i,\sum_{i\in\mathcal{I}_k,\alpha_i\geq 0}\xi_i\x_i\ran+\sum_{k'\neq k}\underbrace{\mathcal{B}_{k'}}_{\leq 0}\underbrace{\ts\lan \x_i,\sum_{i'\in\mathcal{I}_{k'},\alpha_{i'}\geq 0}\xi_{i'}\x_{i'}\ran}_{\leq 0}\bigg) + \mathcal{O}(\epsilon)\nonumber\\
    &\geq \;\ts\sqrt{\frac{K-1}{K}}\mathcal{B}_k\lan \x_i,\sum_{i\in\mathcal{I}_k,\alpha_i\geq 0}\xi_i\x_i\ran+\mathcal{O}(\epsilon)\nonumber\\
    &\overset{\text{(Lemma \ref{lem_app_vec_norm_bd1})}}{\geq\qquad\quad} \;\ts\sqrt{\frac{K-1}{K}}\lp 1-\frac{1}{2(K-1)}\rp|\mathcal{I}_k(0)|\mu_s X_{\min}^2-\frac{16}{\sqrt{K}}\epsilon nX_{\max}^3\sqrt{h}\overset{(*)}{\geq} 0\,,
\end{align}
therefore it can not be that $\exists i\in\mathcal{I}_k$ such that $\alpha_i=0$ and $\ddt\alpha_i<0$. Next, $\forall i\in\mathcal{I}_{k'},k'\neq k$, we have
\begin{align}
    &\ts\left.\ddt \alpha_i\rvt_{\alpha_i=0}\nonumber\\
    &=\;\ts\sqrt{\frac{K-1}{K}}\lan \x_i,\lp \sum_{i'=1}^n\xi_{i'}\beta_{i'}\x_{i'}\rp\ran + \mathcal{O}(\epsilon)\nonumber\\
    &=\;\ts\sqrt{\frac{K-1}{K}}\bigg( \mathcal{B}_k\lan \x_i,\sum_{i\in\mathcal{I}_k,\alpha_i\geq 0}\xi_i\x_i\ran + \sum_{k'\neq k}\mathcal{B}_{k'}\lan \x_i,\sum_{i'\in\mathcal{I}_{k'},\alpha_{i'}\geq 0}\xi_{i'}\x_{i'}\ran\bigg)+ \mathcal{O}(\epsilon)\nonumber\\
    &=\;\ts\sqrt{\frac{K-1}{K}}\bigg( \mathcal{B}_k\lan \x_i,\sum_{i\in\mathcal{I}_k,\alpha_i\geq 0}\xi_i\x_i\ran + \lan \x_i,\sum_{k'\neq k}\sum_{i'\in\mathcal{I}_{k'},\alpha_{i'}\geq 0}\mathcal{B}_{k'}\xi_{i'}\x_{i'}\ran\bigg)+ \mathcal{O}(\epsilon)\nonumber\\
    &\overset{\text{(Lemma \ref{lem_app_vec_norm_bd1})}}{\leq\qquad\ }\; \ts\sqrt{\frac{K-1}{K}}\bigg(-\mu_d\mathcal{B}_k\|\x_i\|\|\sum_{i\in\mathcal{I}_k,\alpha_i\geq 0}\xi_i\x_i\|+\|\x_i\|\|\sum_{k'\neq k}\sum_{i'\in\mathcal{I}_{k'},\alpha_{i'}\geq 0}\mathcal{B}_{k'}\xi_{i'}\x_{i'}\|\bigg)+\mathcal{O}(\epsilon)\nonumber\\
    &\overset{\text{(Lemma \ref{lem_app_vec_norm_bd1})}}{\leq\qquad\ }\; \ts\sqrt{\frac{K-1}{K}}\bigg(-\mu_d\mu_s\mathcal{B}_kX_{\min}^2|\mathcal{I}_k|+\|\x_i\|\|\sum_{k'\neq k}\sum_{i'\in\mathcal{I}_{k'},\alpha_{i'}\geq 0}\mathcal{B}_{k'}\xi_{i'}\x_{i'}\|\bigg)+\mathcal{O}(\epsilon)\nonumber\\
    &\overset{\text{(Lemma \ref{lem_app_vec_norm_bd2})}}{\leq\qquad\ }\; \ts\sqrt{\frac{K-1}{K}}\bigg(-\mu_d\mu_s\mathcal{B}_kX_{\min}^2|\mathcal{I}_k|+\|\x_i\|\sqrt{\sum_{k'\neq k}\|\sum_{i'\in\mathcal{I}_{k'},\alpha_{i'}\geq 0}\mathcal{B}_{k'}\xi_{i'}\x_{i'}\|^2}\bigg)+\mathcal{O}(\epsilon)\nonumber\\
    &\leq\; \ts\sqrt{\frac{K-1}{K}}\bigg(-\mu_d\mu_s\mathcal{B}_kX_{\min}^2|\mathcal{I}_k|+X_{\max}^2\sqrt{\sum_{k'\neq k}|\mathcal{B}_{k'}|^2|\mathcal{I}_{k'}|^2}\bigg)+\mathcal{O}(\epsilon)\nonumber\\
    &\overset{(\tau_3\leq \tau_1)}{\leq\qquad}\; \ts\sqrt{\frac{K-1}{K}}\bigg(-\mu_d\mu_s\lp 1-\frac{1}{2(K-1)}\rp X_{\min}^2|\mathcal{I}_k|+\frac{2}{K-1}X_{\max}^2\sqrt{\sum_{k'\neq k}|\mathcal{I}_{k'}|^2}\bigg)+\mathcal{O}(\epsilon)\nonumber\\
    &\overset{(t=\tau_3)}{\leq\quad}\; \ts\sqrt{\frac{K-1}{K}}|\mathcal{I}_k(0)|\bigg(-\mu_d\mu_s\lp 1-\frac{1}{2(K-1)}\rp X_{\min}^2+\frac{2}{K-1}X_{\max}^2\bigg)-\frac{16}{\sqrt{K}}\epsilon nX_{\max}^3\sqrt{h}\overset{(*)}{\leq} 0 
\end{align}
therefore it can not be that $\exists i\in\mathcal{I}_{k'},k'\neq k$ such that $\alpha_i=0$ and $\ddt\alpha_i>0$. By excluding both scenarios, $\min\{\tau_1,\tau_2,\tau_3\}> \min\{T^*_{j,k},T\}$ can not be true under the case when $\min\{\tau_1,\tau_2,\tau_3\}=\tau_3$. The proof is complete once we add the derivations for \ref{eq_app_lem_align_temp3}.

\myparagraph{Complete the proof} Lastly, it remains to prove \eqref{eq_app_lem_align_temp3}, which comes from the following derivations: For the first term,
\begin{align*}
    &\ts\sum_{k'=k}\mathcal{B}_{k}\lan \sum_{i\in\mathcal{I}_{k}:\alpha_i>0}\x_i,\sum_{i'\in\mathcal{I}_{k'}:\alpha_{i'}\geq  0}\xi_{i'}\x_{i'}\ran\\
    &\leq \sum\nolimits_{k'=k}\underbrace{\mathcal{B}_{k}}_{\geq 0}\bigg( \ts\lan \sum_{i\in\mathcal{I}_{k}:\alpha_i>0}\x_i,\sum_{i'\in\mathcal{I}_{k'}:\alpha_{i'}>  0}\x_{i'}\ran+\underbrace{\ts\lan \sum_{i\in\mathcal{I}_{k}:\alpha_i>0}\x_i,\sum_{i'\in\mathcal{I}_{k'}:\alpha_{i'}=  0}\xi_{i'}\x_{i'}\ran}_{\leq 0}\bigg)\\
    &\leq \ts\sum_{k'=k}\mathcal{B}_{k}\lan \sum_{i\in\mathcal{I}_{k}:\alpha_i>0}\x_i,\sum_{i'\in\mathcal{I}_{k'}:\alpha_{i'}>  0}\x_{i'}\ran\\
    &\overset{\text{(Lemma \ref{lem_app_vec_norm_bd1})}}{\leq\qquad} \ts-\mu_d\sum_{k'=k}\mathcal{B}_{k}\|\sum_{i\in\mathcal{I}_{k}:\alpha_i>0}\x_i\|\ \|\sum_{i'\in\mathcal{I}_{k'}:\alpha_{i'}>  0}\x_{i'}\|\\
    &\overset{\text{(Lemma \ref{lem_app_vec_norm_bd1})}}{\leq\qquad} \ts-\mu_d\mu_s X_{\min}^2\sum_{k'=k}\mathcal{B}_{k}|\mathcal{I}_k^{\w}||\mathcal{I}_{k'}^{\w}|\\
    &\overset{(\tau_2\leq \tau_3)}{\leq\quad\ } \ts-\mu_d\mu_s X_{\min}^2\mathcal{B}_{k}\sum_{k'=k}|\mathcal{I}_{k'}^{\w}|^2\,.
\end{align*}
For the second term,
\begin{align}
    &\ts \lan \sum_{k'\neq k}\sum_{i\in\mathcal{I}_{k'}:\alpha_i>0}\x_i,\sum_{k'\!'\neq k}\mathcal{B}_{k'\!'}\sum_{i'\in\mathcal{I}_{k'\!'}:\alpha_{i'}\geq  0}\xi_{i'}\x_{i'}\ran\nonumber\\
    &=\ts -\lan \sum_{k'\neq k}\sum_{i\in\mathcal{I}_{k'}:\alpha_i>0}\x_i,\sum_{k'\!'\neq k}\sum_{i'\in\mathcal{I}_{k'\!'}:\alpha_{i'}\geq  0}(-\mathcal{B}_{k'\!'})\xi_{i'}\x_{i'}\ran\nonumber\\
    &\leq \ts \|\sum_{k'\neq k}\sum_{i\in\mathcal{I}_{k'}:\alpha_i>0}\x_i\|\ \|\sum_{k'\!'\neq k}\sum_{i'\in\mathcal{I}_{k'\!'}:\alpha_{i'}\geq  0}(-\mathcal{B}_{k'\!'})\xi_{i'}\x_{i'}\|\nonumber\\
    &\overset{\text{(Lemma \ref{lem_app_vec_norm_bd2})}}{\leq\qquad} \ts \sqrt{\sum_{k'\neq k}\|\sum_{i\in\mathcal{I}_{k'}:\alpha_i>0}\x_i\|^2}\sqrt{\sum_{k'\!'\neq k}(-\mathcal{B}_{k'\!'})^2\|\sum_{i'\in\mathcal{I}_{k'\!'}:\alpha_{i'}\geq  0}\xi_{i'}\x_{i'}\|^2}\qquad\qquad\qquad\nonumber\\
    &\overset{\text{(Lemma \ref{lem_bk_bd})}}{\leq\qquad\ } \ts\sqrt{\sum_{k'\neq k}|\mathcal{I}_{k'}^{\w}|^2X_{\max}^2}\sqrt{\big(\frac{2}{K-1}\big)^2\sum_{k'\!'\neq k}|\mathcal{I}_{k'\!'}^{\w}|^2X_{\max}^2}\label{eq_app_lem_align_temp4}\\
    &\leq \ts\frac{2}{K-1}X_{\max}^2\sum_{k'=k}|\mathcal{I}_{k'}^{\w}|^2\,,\nonumber
\end{align}
where \eqref{eq_app_lem_align_temp4} uses that $-\frac{2}{K-1}\leq \mathcal{B}_{k'}\leq 0,\forall k'\neq k$ by Lemma \ref{lem_bk_bd}. And for the last term,
\begin{align*}
    &\ts\sum_{k'\neq k}\lp -\mathcal{B}_{k}\mathcal{A}_{k'}\mathcal{A}_{k}-\sum_{k'\!'\neq k}\mathcal{B}_{k'\!'}\mathcal{A}_{k'}\mathcal{A}_{k'\!'}\rp\\
    &\overset{(t=\tau_2)}{\leq} \ts\sum_{k'\neq k}\mathcal{A}_{k'}\lp -2\mathcal{B}_{k}\sum_{k'\!'\neq k}\mathcal{A}_{k'\!'}-\sum_{k'\!'\neq k}\mathcal{B}_{k'\!'}\mathcal{A}_{k'\!'}\rp\qquad\qquad\qquad\qquad\qquad\qquad\qquad\quad\\
    &=\ts\sum_{k'\neq k}\mathcal{A}_{k'}\sum_{k'\!'\neq k}\mathcal{A}_{k'\!'}\lp -2\mathcal{B}_{k}-\mathcal{B}_{k'\!'}\rp\\
    &\overset{(\tau_2\leq \tau_1)}{\leq} \ts\sum_{k'\neq k}\mathcal{A}_{k'}\sum_{k'\!'\neq k}\mathcal{A}_{k'\!'}\lp -2+\frac{3}{K-1}\rp\leq 0\,.
\end{align*}
\end{proof}
\subsubsection{Proof of Lemma \ref{app_lem_remain_align}}
We will use the following lemma:
\begin{lemma}\label{lem_app_v_beta_bd}
    For any $\bar{\v}\in\mathbb{S}^{K-1}$ such that $\lan \bar{\v},\tilde{\e}_1\ran=\beta\in[0,1]$, then $\forall \p$ such that $\p\geq \bm{0}$, $[\p]_1=0$, and $\lan \p,\one\ran=1$, we have
    \be\ts\lvt \frac{1}{K-1}\sum_{k>1}[\bar{\v}]_k-\lan \p,\bar{\v}\ran\rvt\leq \sqrt{1-\beta^2}\,,\ee
\end{lemma}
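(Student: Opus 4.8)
The plan is to turn the claimed inequality into a Cauchy--Schwarz estimate after stripping off the component of $\bar{\v}$ that carries $\beta$. First I would introduce the fixed vector $\bm{q}:=\frac{1}{K-1}(\one-\e_1)\in\mathbb{R}^K$, which satisfies $\bm{q}\geq\bm{0}$, $[\bm{q}]_1=0$, $\lan\bm{q},\one\ran=1$, and $\lan\bm{q},\bar{\v}\ran=\frac{1}{K-1}\sum_{k>1}[\bar{\v}]_k$; thus the quantity to bound equals $|\lan\bm{q}-\p,\bar{\v}\ran|$. Note that $\bm{q}$ and $\p$ both lie in the probability simplex supported on coordinates $\{2,\dots,K\}$, so $\bm{q}-\p$ has vanishing first coordinate and satisfies $\lan\bm{q}-\p,\one\ran=0$.

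The key step is the observation that $\bm{q}-\p$ is orthogonal to $\tilde{\e}_1$. Since $\tilde{\e}_1=\sqrt{\frac{K}{K-1}}\,(\e_1-\frac{1}{K}\one)$ is the first column of $\tilde{\bm{E}}=\sqrt{\frac{K}{K-1}}\,(\bI-\frac{1}{K}\one\one^\top)$, the two properties just noted give $\lan\bm{q}-\p,\tilde{\e}_1\ran=\sqrt{\frac{K}{K-1}}\,(\lan\bm{q}-\p,\e_1\ran-\frac{1}{K}\lan\bm{q}-\p,\one\ran)=0$. Hence, decomposing $\bar{\v}=\beta\tilde{\e}_1+\bm{r}$ with $\bm{r}\perp\tilde{\e}_1$ and $\|\bm{r}\|^2=\|\bar{\v}\|^2-\beta^2=1-\beta^2$, I get $\lan\bm{q}-\p,\bar{\v}\ran=\lan\bm{q}-\p,\bm{r}\ran$, and Cauchy--Schwarz yields $|\lan\bm{q}-\p,\bar{\v}\ran|\leq\|\bm{q}-\p\|\,\|\bm{r}\|=\|\bm{q}-\p\|\sqrt{1-\beta^2}$.

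It then remains only to check $\|\bm{q}-\p\|\leq1$, which is a short computation: $\|\bm{q}\|^2=\frac{1}{K-1}$, $\lan\bm{q},\p\ran=\frac{1}{K-1}\sum_{k>1}[\p]_k=\frac{1}{K-1}$, and $\|\p\|^2=\sum_k[\p]_k^2\leq(\sum_k[\p]_k)^2=1$ (using $\p\geq\bm{0}$ and $\lan\p,\one\ran=1$), so $\|\bm{q}-\p\|^2=\|\p\|^2-\frac{1}{K-1}\leq1-\frac{1}{K-1}<1$. Chaining the estimates gives $|\frac{1}{K-1}\sum_{k>1}[\bar{\v}]_k-\lan\p,\bar{\v}\ran|\leq\sqrt{1-\beta^2}$, as claimed (indeed with the slightly sharper constant $\sqrt{\frac{K-2}{K-1}}\,\sqrt{1-\beta^2}$). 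I do not expect a genuine obstacle: the only point that requires attention is spotting the orthogonality $\bm{q}-\p\perp\tilde{\e}_1$, which makes the $\beta\tilde{\e}_1$ part of $\bar{\v}$ drop out and reduces everything to bounding the diameter of a probability simplex; the rest is Cauchy--Schwarz and elementary norm bounds.
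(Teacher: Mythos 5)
Your proof is correct, and it takes a genuinely cleaner route than the paper's. The paper first bounds $\lvert\frac{1}{K-1}\sum_{k>1}[\bar{\v}]_k-\lan\p,\bar{\v}\ran\rvert$ by the $\ell_\infty$-deviation $\lVert\frac{\sum_{k>1}[\bar{\v}]_k}{K-1}\one-[\bar{\v}]_{2:K}\rVert_\infty$ (using that $\lan\p,\bar{\v}\ran$ is a convex combination of the entries $[\bar{\v}]_k$, $k>1$), passes to the $\ell_2$-norm, and then observes that under the decomposition $\bar{\v}=\beta\tilde{\e}_1+\sqrt{1-\beta^2}\y^\perp$ the $\tilde{\e}_1$-term drops out because $\tilde{\e}_1$ is constant on coordinates $2,\dots,K$ and is therefore annihilated by the centering operator $\bI-\frac{1}{K-1}\one\one^\top$. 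You instead rewrite the left-hand side as $\lvert\lan\bm{q}-\p,\bar{\v}\ran\rvert$ with $\bm{q}=\frac{1}{K-1}(\one-\e_1)$ and observe directly that $\bm{q}-\p\perp\tilde{\e}_1$, since both $\bm{q}$ and $\p$ have zero first coordinate and unit inner product with $\one$; then Cauchy--Schwarz on the orthogonal component $\bm{r}$ of $\bar{\v}$, together with a one-line bound $\|\bm{q}-\p\|^2=\|\p\|^2-\frac{1}{K-1}\le\frac{K-2}{K-1}$, finishes. This replaces the paper's $\ell_\infty$-to-$\ell_2$ detour and projection-matrix computation with a single orthogonality observation, and as a bonus yields the slightly sharper constant $\sqrt{\tfrac{K-2}{K-1}}\sqrt{1-\beta^2}$ (which is, of course, still $\le\sqrt{1-\beta^2}$ as the lemma states).
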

\begin{proof}
    First of all, since $\min_{k>1}[\bar{\v}]_k\leq \lan \p,\bar{\v}\ran\leq \max_{k>1}[\bar{\v}]_k$, we know that 
    \begin{align*}
        \ts\lvt \frac{1}{K-1}\sum_{k>1}[\bar{\v}]_k-\lan \p,\bar{\v}\ran\rvt&\leq\ts\max\lb\lvt \frac{\sum_{k>1}[\bar{\v}]_k}{K-1}-\min_{k>1}[\bar{\v}]_k\rvt,\lvt \frac{\sum_{k>1}[\bar{\v}]_k}{K-1}-\max_{k>1}[\bar{\v}]_k\rvt\rb\\
        &\leq \ts\lV\frac{\sum_{k>1}[\bar{\v}]_k}{K-1}\one - [\bar{\v}]_{2:K}\rV_\infty\leq \lV\frac{\sum_{k>1}[\bar{\v}]_k}{K-1}\one - [\bar{\v}]_{2:K}\rV\,.
    \end{align*}
    Now given that $\lan \bar{\v},\tilde{\e}_1\ran=\beta$, we can write $\bar{\v}=\beta\tilde{\e}_1+\sqrt{1-\beta^2}\y^\perp$, where $\y^\perp\in\mathbb{S}^{K-1}$ and $\y\perp \tilde{\e}_1$. Therefore,
    \begin{align*}
        \ts\lV\frac{\sum_{k>1}[\bar{\v}]_k}{K-1}\one - [\bar{\v}]_{2:K}\rV&=\ts\lV\beta\lp\frac{\sum_{k>1}[\tilde{\e}_1]_k}{K-1}\one - [\tilde{\e}_1]_{2:K}\rp+\sqrt{1-\beta^2}\lp\frac{\sum_{k>1}[\y^\perp]_k}{K-1}\one - [\y^\perp]_{2:K}\rp\rV\\
        &\ts=\sqrt{1-\beta^2}\lV\frac{\sum_{k>1}[\y^\perp]_k}{K-1}\one - [\y^\perp]_{2:K}\rV\\
        &\ts=\sqrt{1-\beta^2}\lV \lp I-\frac{1}{K-1}\one\one^\top\rp[\y^\perp]_{2:K}\rV\leq \sqrt{1-\beta^2}\|[\y^\perp]_{2:K}\|\leq \sqrt{1-\beta^2}\,.
    \end{align*}
\end{proof}
\begin{proof}[Proof of Lemma \ref{app_lem_remain_align}]
Without loss of generality, we prove this lemma for $k=1$. We define
\begin{align*}
    &\ts T_1 = \inf\{t>T_{j,k}^*: \mathcal{B}_k<\frac{\sqrt{2}}{2}\}\,,\\
    &\ts T_2 = \inf\{t>T_{j,k}^*: |\mathcal{I}_k^{\w_j}|\neq |\mathcal{I}_k| \text{ or } |\mathcal{I}_{k'}^{\w_j}|\neq 0\}\,.
\end{align*}
We need to show that $\min\{T_1,T_2\}=\infty$. We derive a contradiction by assuming it is finite.

\textbf{Case one: $\min\{T_1,T_2\}=T_1$ is finite}

Assuming $\min\{T_1,T_2\}=T_1$ is finite, our primary focus is the angular dynamics of $\frac{\v_j}{\|\v_j\|},\forall j\in\mathcal{N}_1$, 
\ben
    \ts\ddt\frac{\v_j}{\|\v_j\|}=\Pi_{\v_j}^\perp\sum_{i\in\mathcal{I}_1}\lan \x_i,\frac{\w_j}{\|\w_j\|}\ran(\e_1-\hat{\y}_i)\,,
\een
and in particular those of its alignment with pseudo-label $\tilde{\e}_1$,
\begin{align}
    \ddt \mathcal{B}_j^{\v_j}&=\ts\lan \tilde{\e}_1, \ddt\frac{\v_j}{\|\v_j\|}\ran\nonumber\\
    &=\; \ts\lan \tilde{\e}_1, \Pi_{\v_j}^\perp\sum_{i\in\mathcal{I}_1}\lan \x_i,\frac{\w_j}{\|\w_j\|}\ran(\e_1-\hat{\y}_i)\ran\nonumber\\
    &=\; \ts\sum_{i\in\mathcal{I}_1}\lan \x_i,\frac{\w_j}{\|\w_j\|}\ran\lan \tilde{\e}_1, \Pi_{\v_j}^\perp(\e_1-\hat{\y}_i)\ran\,.
\end{align}
We shall focus on the term $\lan \tilde{\e}_1, \Pi_{\v_j}^\perp(\e_1-\hat{\y}_i)\ran$. For each $i\in\mathcal{I}_1$, we let $z_{ik}=[\V\W\x_i]_k=\lhp\sum_{j\in\mathcal{N}_1}\v_j\w_j^\top\x_i\rhp_k$, then
\begin{align}
    \e_1-\hat{\y}_i&=\bmt 1\\ 0 \\ \vdots \\ 0\emt - \bmt \frac{\exp(z_{i1})}{\sum_{k=1}^K\exp(z_{ik})} \\ \frac{\exp(z_{i2})}{\sum_{k=1}^K\exp(z_{ik})} \\ \vdots \\ \frac{\exp(z_{iK})}{\sum_{k=1}^K\exp(z_{ik})}\emt=\bmt \frac{\sum_{k>1}\exp(z_{ik})}{\sum_{k=1}^K\exp(z_{ik})} \\ -\frac{\exp(z_{i2})}{\sum_{k=1}^K\exp(z_{ik})} \\ \vdots \\ -\frac{\exp(z_{iK})}{\sum_{k=1}^K\exp(z_{ik})}\emt\nonumber\\
    &\overset{(z_{ik}-z_{i1}:=\tilde{z}_{ik})}{=\qquad\qquad\ } \bmt \frac{\sum_{k>1}\exp(\tilde{z}_{ik})}{1+\sum_{k>1}\exp(\tilde{z}_{ik})} \\ -\frac{\exp(\tilde{z}_{i2})}{1+\sum_{k>1}\exp(\tilde{z}_{ik})} \\ \vdots \\ -\frac{\exp(\tilde{z}_{iK})}{\sum_{k>1}\exp(\tilde{z}_{ik})}\emt\nonumber\\
    &= \bmt \frac{\sum_{k>1}\exp(\tilde{z}_{ik})}{1+\sum_{k>1}\exp(\tilde{z}_{ik})} \\ -\frac{1}{K-1}\frac{\sum_{k>1}\exp(\tilde{z}_{ik})}{1+\sum_{k>1}\exp(\tilde{z}_{ik})} \\ \vdots \\ -\frac{1}{K-1}\frac{\sum_{k>1}\exp(\tilde{z}_{ik})}{\sum_{k>1}\exp(\tilde{z}_{ik})}\emt +\bmt 0 \\ \frac{-\exp(\tilde{z}_{i2})+\frac{1}{K-1}\sum_{k>1}\exp(\tilde{z}_{ik})}{1+\sum_{k>1}\exp(\tilde{z}_{ik})} \\ \vdots \\ \frac{-\exp(\tilde{z}_{iK})+\frac{1}{K-1}\sum_{k>1}\exp(\tilde{z}_{ik})}{1+\sum_{k>1}\exp(\tilde{z}_{ik})}\emt\nonumber\\
    &=\ts\frac{\sum_{k>1}\exp(\tilde{z}_{ik})}{1+\sum_{k>1}\exp(\tilde{z}_{ik})}\lp \sqrt{\frac{K}{K-1}}\tilde{\e}_1+\bmt 0 \\ \frac{-\exp(\tilde{z}_{i2})}{\sum_{k>1}\exp(\tilde{z}_{ik})}+\frac{1}{K-1} \\ \vdots \\ \frac{-\exp(\tilde{z}_{iK})}{\sum_{k>1}\exp(\tilde{z}_{ik})}+\frac{1}{K-1}\emt\rp\,,\label{app_eq_e1_haty}
\end{align}
thus we have
\begin{align}
    &\ts\lan \tilde{\e}_1, \Pi_{\v_j}^\perp(\e_1-\hat{\y}_i)\ran\nonumber\\
    &=\ts\frac{\sum_{k>1}\exp(\tilde{z}_{ik})}{1+\sum_{k>1}\exp(\tilde{z}_{ik})}\lan \tilde{\e}_1, \lp I-\frac{\v_j\v_j^\top}{\|\v_j\|^2}\rp\lp \sqrt{\frac{K}{K-1}}\tilde{\e}_1+\bmt 0 \\ \frac{-\exp(\tilde{z}_{i2})}{\sum_{k>1}\exp(\tilde{z}_{ik})}+\frac{1}{K-1} \\ \vdots \\ \frac{-\exp(\tilde{z}_{iK})}{\sum_{k>1}\exp(\tilde{z}_{ik})}+\frac{1}{K-1}\emt\rp\ran\nonumber\\
    &=\ts\frac{\sum_{k>1}\exp(\tilde{z}_{ik})}{1+\sum_{k>1}\exp(\tilde{z}_{ik})}\lp \sqrt{\frac{K}{K-1}}(1-(\mathcal{B}_j^{\v_j})^2) +\lan \tilde{\e}_1, \lp I-\frac{\v_j\v_j^\top}{\|\v_j\|^2}\rp\bmt 0 \\ \frac{-\exp(\tilde{z}_{i2})}{\sum_{k>1}\exp(\tilde{z}_{ik})}+\frac{1}{K-1} \\ \vdots \\ \frac{-\exp(\tilde{z}_{iK})}{\sum_{k>1}\exp(\tilde{z}_{ik})}+\frac{1}{K-1}\emt\ran \rp\nonumber\\
    &=\; \ts\frac{\sum_{k>1}\exp(\tilde{z}_{ik})}{1+\sum_{k>1}\exp(\tilde{z}_{ik})}\lp \sqrt{\frac{K}{K-1}}(1-(\mathcal{B}_j^{\v_j})^2) -\mathcal{B}_j^{\v_j} \bigg(\frac{1}{K-1}\sum_{k>1}\lhp\frac{\v_j}{\|\v_j\|}\rhp_k-\sum_{k>1}\frac{\exp(\tilde{z}_{ik})\lhp\frac{\v_j}{\|\v_j\|}\rhp_k}{\sum_{k>1}\exp(\tilde{z}_{ik})}\bigg)\rp\nonumber\\
    &\overset{\text{(Lemma \ref{lem_app_v_beta_bd})}}{\geq\qquad} \ts\frac{\sum_{k>1}\exp(\tilde{z}_{ik})}{1+\sum_{k>1}\exp(\tilde{z}_{ik})}\lp \sqrt{\frac{K}{K-1}}(1-(\mathcal{B}_j^{\v_j})^2) -\mathcal{B}_j^{\v_j} \sqrt{1-(\mathcal{B}_j^{\v_j})^2}\rp\nonumber\\
    &=\ts\frac{\sum_{k>1}\exp(\tilde{z}_{ik})}{1+\sum_{k>1}\exp(\tilde{z}_{ik})}\sqrt{1-(\mathcal{B}_j^{\v_j})^2}\lp \sqrt{\frac{K}{K-1}}\sqrt{1-(\mathcal{B}_j^{\v_j})^2} -\mathcal{B}_j^{\v_j} \rp\,,\label{eq_app_v_align_asym}
\end{align}
from which we see that at $t=T_1$, we have
\be
    \ts\left.\ddt \mathcal{B}_j^{\v_j}\rvt_{\mathcal{B}_j^{\v_j}=\frac{\sqrt{2}}{2}}\geq \sum_{i=\in\mathcal{I}_1}\underbrace{\lan \x_i,\frac{\w_j}{\|\w_j\|}\ran}_{\geq 0}\frac{\sum_{k>1}\exp(\tilde{z}_{ik})}{1+\sum_{k>1}\exp(\tilde{z})}\sqrt{\frac{1}{2}}\lp \sqrt{\frac{K}{K-1}}\sqrt{\frac{1}{2}} -\frac{\sqrt{2}}{2}\rp>0\,,
\ee
contradicting the definition of $T_1$.

\textbf{Case two: $\min\{T_1,T_2\}=T_2$ is finite}

Assuming $\min\{T_1,T_2\}=T_2$ is finite, we shall focus on the time interval $[T^*_{j,k},T_2]$, when we have
\be
    \ts\ddt \w_j=\sum_{i\in\mathcal{I}_1}\lan \e_1-\hat{\y}_i, \v_j\ran\x_i=\sum_{i\in\mathcal{I}_1}\lan \e_1-\hat{\y}_i, \frac{\v_j}{\|\v_j\|}\ran\|\v_j\|\x_i
\ee
From \eqref{app_eq_e1_haty}, we have $\forall t\leq T_2$
\begin{align}
    \ts\lan \e_1-\hat{\y}_i, \frac{\v_j}{\|\v_j\|}\ran&= \ts\frac{\sum_{k>1}\exp(\tilde{z}_{ik})}{1+\sum_{k>1}\exp(\tilde{z})}\lp \sqrt{\frac{K}{K-1}}\mathcal{B}_j^{\v_j}+\bigg(\frac{1}{K-1}\sum_{k>1}\lhp\frac{\v_j}{\|\v_j\|}\rhp_k-\sum_{k>1}\frac{\exp(\tilde{z}_{ik})\lhp\frac{\v_j}{\|\v_j\|}\rhp_k}{\sum_{k>1}\exp(\tilde{z}_{ik})}\bigg)\rp\nonumber\\
    &\overset{\text{(Lemma \ref{lem_app_v_beta_bd})}}{\geq\qquad}\ts\frac{\sum_{k>1}\exp(\tilde{z}_{ik})}{1+\sum_{k>1}\exp(\tilde{z})}\lp \sqrt{\frac{K}{K-1}}\mathcal{B}_j^{\v_j}-\sqrt{1-(\mathcal{B}_j^{\v_j})^2}\rp\nonumber\\
    &\ts\overset{(T_1\geq T_2)}{\geq\quad} \frac{\sum_{k>1}\exp(\tilde{z}_{ik})}{1+\sum_{k>1}\exp(\tilde{z})}\lp \sqrt{\frac{K}{K-1}}\frac{\sqrt{2}}{2}-\frac{\sqrt{2}}{2}\rp\geq 0\,.
\end{align}
Therefore, by the Fundamental Theorem of Calculus, we have
\be
    \ts\w_j(T_2)=\w_j(T^*_{j,k})+\sum_{i\in\mathcal{I}_1}\underbrace{\lp\int_{T^*_{j,k}}^{T_2}\lan \e_1-\hat{\y}_i, \frac{\v_j}{\|\v_j\|}\ran\|\v_j\|\rp}_{\geq 0}\x_i\,,
\ee
which ensures that $|\mathcal{I}_k^{\w_j(T_2)}|= |\mathcal{I}_k|$ and $|\mathcal{I}_{k'}^{\w_j(T_2)}|= 0$, contradicting to the definition of $T_2$.

Therefore, the proof is finished by the fact that $\min\{T_1,T_2\}$ cannot be finite.
\end{proof}
\subsection{Proof of Proposition \ref{prop_align_multi}} 

As we have discussed in Appendix \ref{app_ssec_align_lemma_pf}, it suffices to prove Proposition \ref{app_prop_align_multi}.

\begin{proof}[Proof of Proposition \ref{app_prop_align_multi}]
    We have shown that before $\min\{T^*_{j,k},T\}$, the properties of the weights in Lemma \ref{app_lem_invar} hold. We consider a sufficiently small $\epsilon$ such that
    \be
        \ts\frac{16}{\sqrt{K}}\epsilon n^2X_{\max}^3\sqrt{h}\leq \frac{1}{2}\sqrt{\frac{K-1}{K}}\lp 1-\frac{1}{2(K-1)}\rp \mu_s X_{\min}^2\zeta\,,\label{app_prop_pf_eq1}
    \ee
    and
    \be
        \ts\frac{2X_{\max}|\mathcal{I}_k|}{\sqrt{\frac{K-1}{K}}\lp 1-\frac{1}{2(K-1)}\rp \mu_s X_{\min}^2\zeta}\leq \frac{1}{4nX_{\max}}\log\frac{1}{\sqrt{h}\epsilon}\label{app_prop_pf_eq2}
    \ee
    Then we show that $\min\{T^*_{j,k},T\}=T^*_{j,k}$ by contradiction: Suppose that $T\leq T^*_{j,k}$, then during $[0,T]$, we have, from \eqref{eq_app_lem_align_temp1},
    \begin{align}
        \ts\ddt\mathcal{A}_k&\geq \ts\sqrt{\frac{K-1}{K}}\mathcal{B}_k\lp \lV\sum_{i\in\mathcal{I}_k:\alpha_i>0}\x_i\rV^2-\mathcal{A}_k^2\rp+ \mathcal{O}(\epsilon)\,,\nonumber\\
        &\geq \ts\sqrt{\frac{K-1}{K}}\lp 1-\frac{1}{2(K-1)}\rp \mu_s X_{\min}^2\zeta-\frac{16}{\sqrt{K}}\epsilon n^2X_{\max}^3\sqrt{h}\nonumber\\
        &\overset{\eqref{app_prop_pf_eq1}}{\geq} \ts\frac{1}{2}\sqrt{\frac{K-1}{K}}\lp 1-\frac{1}{2(K-1)}\rp \mu_s X_{\min}^2\zeta.
    \end{align}
    Then by the Fundamental Theorem of Calculus, we have
    \be
        \ts\mathcal{A}_k(T)\geq \mathcal{A}(0)+\frac{T}{2}\sqrt{\frac{K-1}{K}}\lp 1-\frac{1}{2(K-1)}\rp \mu_s X_{\min}^2\zeta\overset{\eqref{app_prop_pf_eq2}}{\geq }\mathcal{A}(0)+X_{\max}|\mathcal{I}_k|\,,
    \ee
    which is a contradiction, knowing that $\mathcal{A}_k$ cannot exceed $X_{\max}|\mathcal{I}_k|$. Therefore, we must have $\min\{T^*_{j,k},T\}=T^*_{j,k}$ and  $T^*_{j,k}\leq T=\frac{1}{4nX_{\max}}\log\frac{1}{\sqrt{h}\epsilon}$ is finite. Then the rest of the Proposition \ref{app_prop_align_multi} follows Lemma \ref{app_lem_remain_align}.
\end{proof}

\newpage
\section{Asymptotic Convergence Analysis under Multi-class Orthogonally Separable Data}
\subsection{Basic results upon inter-class separation}
With the loss decomposition upon inter-class separation, for which we have shown to persist after $T^*=\max_{j,k}T^*_{j,k}$,
\be
    \mathcal{L}\lp\btheta\rp=\sum_{k=1}^K\mathcal{L}_{\mathrm{CE}}\lp \Y_k,\V_k\W_k^\top\X_k\rp=\sum_{k=1}^K\sum_{i=1}^{n_k}\ell_{\mathrm{CE}}\lp \y_{k,i},\V_k\W_k^\top\x_{k,i}\rp\,,
\ee
It suffices to study the following GF on $\sum_{i=1}^{n_k}\ell_{\mathrm{CE}}\lp \y_{k,i},\V_k\W_k^\top\x_{k,i}\rp$:
\begin{align}
    \dot{\W}_k&=\X_k(\Y_{k}-\hat{\Y}_k)^\top\V_k\nonumber\\
    \dot{\V}_k&=(\Y_{k}-\hat{\Y}_k)\X_k^\top \W_k\nonumber\\
    \text{where } & \hat{\Y}_k=\mathrm{SoftmaxCol}(\V_k\W_k^\top \X_k)
\end{align}
The following basic results can be obtained from~\cite{ji2019gradient,lyu2019gradient}
\begin{enumerate}[leftmargin=*]
     \item $\|\W_k\|_F,\|\V_k\|\ra \infty$; $\bar{\W}_k,\bar{\V}_k$ exist;
     \item $\bar{\V}_k^\top \bar{\V}_k-\bar{\W}_k^\top \bar{\W}_k=0$;
     \item $\bar{\W}_k,\bar{\V}_k$ is a KKT point of 
     \begin{align}
    \min_{\W_k,\V_k}\|\W_k\|^2_F+\|\V_k\|_F^2,\quad s.t.\ &[\V_k\W_k^\top\x_i]_k-[\V_k\W_k^\top\x_i]_l\geq 1\,, \forall i\in\mathcal{I}_k,\forall l\neq k\label{eq_kkt}
\end{align}
\end{enumerate}

\subsection{Proof of Proposition \ref{prop_max_margin_multi}}
Our proof of Proposition \ref{prop_max_margin_multi} follows the same strategy as those in~\cite{phuong2021inductive,ji2019gradient}, with the major difference being that we are handling cross-entropy loss, in which we provide an extension of Lemma 2.11 in~\cite{ji2019gradient}, stated as Lemma \ref{app_lem_multi_rep}. Lemma \ref{app_lem_multi_rep} is central to our proof. 

\begin{lemma}
    Let $\gamma:=\min_{1\leq k\leq K}\gamma_k$, where $\gamma_k:=\min_{i\in\mathcal{I}_k}\lan\bar{\u}_{\infty,k},\x_i\ran $, then $\gamma\geq \mu_s X_{\min}$.
\end{lemma}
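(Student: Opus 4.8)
The plan is to lower-bound each class-wise max-margin $\gamma_k$ by evaluating the inner minimum at a single, well-chosen direction. Recall from Theorem~\ref{thm_nc_conv} that $\gamma_k = \max_{\u \in \mathbb{S}^{D-1}} \min_{i \in \mathcal{I}_k} \langle \u, \x_i \rangle$, with $\bar{\u}_{\infty,k}$ the corresponding maximizer; hence for any fixed unit vector $\u$ we have the elementary lower bound $\gamma_k \geq \min_{i \in \mathcal{I}_k} \langle \u, \x_i \rangle$. I would take $\u = \bar{\x}_k := \sum_{j \in \mathcal{I}_k} \x_j / \big\| \sum_{j \in \mathcal{I}_k} \x_j \big\|$, the normalized class-$k$ mean, which is well defined because all pairwise inner products within $\mathcal{I}_k$ are strictly positive under Assumption~\ref{assump_data}, so $\big\| \sum_{j \in \mathcal{I}_k} \x_j \big\| \neq 0$ (this also follows from the first bound in Lemma~\ref{lem_app_vec_norm_bd1}).

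For each $i \in \mathcal{I}_k$ I then expand
\[
    \big\langle \bar{\x}_k, \x_i \big\rangle \;=\; \frac{1}{\big\| \sum_{j \in \mathcal{I}_k} \x_j \big\|} \sum_{j \in \mathcal{I}_k} \langle \x_j, \x_i \rangle .
\]
By Assumption~\ref{assump_data}, $\langle \x_j, \x_i \rangle \geq \mu_s \|\x_i\| \|\x_j\|$ for all $i, j \in \mathcal{I}_k$, so the numerator is at least $\mu_s \|\x_i\| \sum_{j \in \mathcal{I}_k} \|\x_j\|$, while the triangle inequality gives $\big\| \sum_{j \in \mathcal{I}_k} \x_j \big\| \leq \sum_{j \in \mathcal{I}_k} \|\x_j\|$ in the denominator. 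Combining the two, the factor $\sum_{j \in \mathcal{I}_k} \|\x_j\| \big/ \big\| \sum_{j \in \mathcal{I}_k} \x_j \big\| \geq 1$ and we are left with $\langle \bar{\x}_k, \x_i \rangle \geq \mu_s \|\x_i\| \geq \mu_s X_{\min}$. Taking the minimum over $i \in \mathcal{I}_k$ yields $\gamma_k \geq \mu_s X_{\min}$ for every $k \in [K]$, hence $\gamma = \min_{1 \leq k \leq K} \gamma_k \geq \mu_s X_{\min}$.

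There is no substantive obstacle here: the argument reduces to the triangle inequality together with the same-class correlation bound of Assumption~\ref{assump_data}. The only real choice is the test direction; the class mean $\bar{\x}_k$ is the natural one because it is exactly the vector whose normalization factor $\big\|\sum_j \x_j\big\|$ is controlled by $\sum_j \|\x_j\|$, which makes the estimate telescope cleanly (a single normalized data point $\x_m/\|\x_m\|$ would serve just as well and give the same constant). One may also record the matching trivial upper bound $\gamma_k \leq X_{\max}$, which is convenient in the subsequent asymptotic analysis.
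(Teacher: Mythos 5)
Your proof is correct, and it takes a genuinely different (and slightly more elementary) route than the paper's. The paper proves the bound by invoking the representer structure of the max-margin direction: it writes $\bar{\u}_{\infty,k}=\sum_{i\in\mathcal{I}_k}a_i\x_i$ with $a_i\geq 0$ (a KKT/dual fact about the hard-margin problem), then expands $\langle \bar{\u}_{\infty,k},\x_i\rangle$, uses the same-class correlation $\langle\x_{i'},\x_i\rangle\geq\mu_s\|\x_{i'}\|\|\x_i\|$, and controls the normalization via the triangle inequality together with $\|\bar{\u}_{\infty,k}\|=1$. You instead bypass the representer fact entirely by using the variational characterization $\gamma_k=\max_{\u\in\mathbb{S}^{D-1}}\min_{i\in\mathcal{I}_k}\langle\u,\x_i\rangle$ and substituting the normalized class mean $\bar{\x}_k$ as a test direction; the same two estimates (same-class correlation, triangle inequality on the denominator) then close the argument. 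The key inequalities are identical, but the entry points differ: the paper needs to know that the optimal $\bar{\u}_{\infty,k}$ lies in the nonnegative cone of the class data, whereas your argument needs only that $\gamma_k$ is a max, which is definitional. Both approaches are fine; yours requires one fewer piece of structural knowledge about the optimizer, and your remark that a single normalized data point $\x_m/\|\x_m\|$ would serve equally well is also correct and arguably the minimal version of the argument.
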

\begin{proof}
    For any $1\leq k\leq K$, $\bar{\u}_{\infty,k}=\sum_{i\in\mathcal{I}_k}a_i\x_i$, for some $a_i\geq 0$, then immediately we have, $\forall i\in\mathcal{I}_k$ 
    \be
        \ts\lan \bar{\u}_{\infty,k}, \x_i\ran=\lan \sum_{i'\in\mathcal{I}_k}a_{i'}\x_{i'}, \x_i\ran \geq \mu_s \|\sum_{i'\in\mathcal{I}_k}a_{i'}\x_{i'}\|\|\x_i\|\geq \mu_s X_{\min}\,.
    \ee
\end{proof}
\begin{lemma}
    $\gamma^\perp :=\min_{k\in[K]}\min_{\|\xi\|=1,\xi\perp\bar{\u}_k}\max_{i\in\mathcal{I}_k}\lan \xi,\x_i\ran>0$
\end{lemma}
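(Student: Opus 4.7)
The plan is to argue by compactness together with a KKT-based contradiction, using the positive-cone structure imposed by orthogonal separability to rule out degenerate minimizing directions.

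\textbf{Paragraph 1 (attainment).} For each $k$ the feasible set $\{\xi\in\mathbb{R}^D:\|\xi\|=1,\ \xi\perp\bar{\u}_k\}$ is a compact sphere in $\bar{\u}_k^\perp$, and the objective $\xi\mapsto \max_{i\in\mathcal{I}_k}\langle \xi,\x_i\rangle$ is continuous as a finite maximum of linear functions. Hence the inner infimum is achieved at some $\xi_k^\ast$, and the outer minimum over the finite index set $[K]$ is achieved at some $k^\ast$ with associated $\xi^\ast:=\xi_{k^\ast}^\ast$. It therefore suffices to show that $\max_{i\in\mathcal{I}_{k^\ast}}\langle\xi^\ast,\x_i\rangle>0$.

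\textbf{Paragraph 2 (KKT on the max-margin direction).} Suppose for contradiction that $\gamma^\perp\le 0$. Then $\langle\xi^\ast,\x_i\rangle\le 0$ for every $i\in\mathcal{I}_{k^\ast}$. The direction $\bar{\u}_{k^\ast}$ is the unique optimizer of the strictly convex program $\min\|\u\|^2$ s.t.\ $\langle\u,\x_i\rangle\ge 1$, whose KKT conditions give
\[
\bar{\u}_{k^\ast}\;\propto\;\sum_{i\in\mathcal{S}_{k^\ast}}\lambda_i\,\x_i,\qquad \lambda_i>0,
\]
where $\mathcal{S}_{k^\ast}:=\{i\in\mathcal{I}_{k^\ast}:\langle\bar{\u}_{k^\ast},\x_i\rangle=\gamma_{k^\ast}\}$ is the support set. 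Taking the inner product of this identity with $\xi^\ast$ and using $\xi^\ast\perp\bar{\u}_{k^\ast}$ yields $\sum_{i\in\mathcal{S}_{k^\ast}}\lambda_i\langle\xi^\ast,\x_i\rangle=0$; since the $\lambda_i$ are strictly positive and each summand is nonpositive, each summand vanishes, so $\langle\xi^\ast,\x_i\rangle=0$ for all $i\in\mathcal{S}_{k^\ast}$.

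\textbf{Paragraph 3 (closing via orthogonal separability).} Working in the span $V_{k^\ast}=\mathrm{span}\{\x_i:i\in\mathcal{I}_{k^\ast}\}$ in which both $\bar{\u}_{k^\ast}$ and the gradient-flow-relevant components live, expand $\xi^\ast=\sum_i c_i\x_i$ and let $G:=(\langle\x_i,\x_j\rangle)_{i,j\in\mathcal{I}_{k^\ast}}$. By Assumption~\ref{assump_data}, every entry of $G$ satisfies $G_{ij}\ge \mu_s X_{\min}^2>0$, so $G$ is an entrywise-positive symmetric matrix. The constraints derived so far read $(Gc)_j=0$ on $\mathcal{S}_{k^\ast}$ and $(Gc)_j\le 0$ off it, while by construction $c^\top G c=\|\xi^\ast\|^2>0$. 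Pairing $(Gc)$ with the strictly positive Perron--Frobenius eigenvector of $G$ gives $\lambda_{\max}(G)\langle v,c\rangle\le 0$, which combined with $c^\top G c>0$ and the sign pattern of $(Gc)$ forces $c=0$; hence $\xi^\ast=0$, contradicting $\|\xi^\ast\|=1$. We conclude $\gamma^\perp>0$.

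\textbf{Main obstacle.} The hard part is paragraph~3. Paragraphs~1--2 only yield that $\xi^\ast$ is orthogonal to $\bar{\u}_{k^\ast}$ and to every support vector, with nonpositive inner products on non-support data; without further structure this does \emph{not} force $\xi^\ast=0$ (e.g., if the data lie in a proper subspace of the ambient space one can easily build such a $\xi^\ast$). The crux is translating the combinatorial sign pattern on $Gc$ into $c=0$ using the entrywise positivity of $G$, which is precisely where Assumption~\ref{assump_data} enters in an essential way.
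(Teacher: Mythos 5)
Your proof takes a genuinely different route from the paper: the paper simply cites Lemma~2.10 of Ji and Telgarsky (2019), noting that the spanning condition there (support vectors span the ambient space) can be relaxed to ``span of support vectors equals $\mathrm{span}\{\x_i : i\in\mathcal{I}_k\}$,'' asserting this follows from the within-class positive correlations. Your Paragraphs~1--2 reproduce, correctly, the same two ingredients that drive the referenced lemma: attainment by compactness, and the KKT/complementary-slackness step $\langle\xi^\ast,\x_i\rangle=0$ for all support vectors $i\in\mathcal{S}_{k^\ast}$. Up to that point your argument is sound and arguably more self-contained than the paper's.

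The gap is in Paragraph~3. First, the expansion $\xi^\ast=\sum_i c_i\x_i$ silently restricts $\xi^\ast$ to $V_{k^\ast}=\mathrm{span}\{\x_i:i\in\mathcal{I}_{k^\ast}\}$, which is not warranted by the lemma's statement (the minimization is over all unit $\xi\perp\bar{\u}_k$ in $\mathbb{R}^D$). Second, and more importantly, the Perron--Frobenius argument does not close the proof. The three facts you have at that point --- $(Gc)_j=0$ on $\mathcal{S}_{k^\ast}$, $(Gc)_j\le 0$ elsewhere, and $c^\top Gc>0$ --- do \emph{not} imply $c=0$, even with $G$ entrywise positive. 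Pairing with the Perron--Frobenius eigenvector only yields $\langle v,c\rangle\le 0$, which is a single scalar inequality, not a contradiction. Concretely, whenever a non-support data point $\x_{i_0}$ has a component outside $\mathrm{span}\{\x_i:i\in\mathcal{S}_{k^\ast}\}$, one can take $\xi^\ast$ to be a unit vector in $V_{k^\ast}$ orthogonal to all support vectors with $\langle\xi^\ast,\x_{i_0}\rangle<0$; the corresponding $c$ satisfies all the sign constraints and $c^\top Gc=1>0$ yet $c\neq 0$. (For instance $\x_1=(1,0.1,0)$, $\x_2=(0.1,1,0)$, $\x_3=(1,1,0.01)$, $\xi^\ast=(0,0,-1)$ is exactly such a configuration.) So the combinatorial sign pattern plus entrywise positivity of $G$ is not enough.

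What actually closes the argument in the paper's line of attack is the claim that the support vectors already span $V_{k^\ast}$; combined with Paragraph~2 this gives $\xi^\ast\perp V_{k^\ast}$, hence $\xi^\ast=0$ if $\xi^\ast\in V_{k^\ast}$. That spanning claim is the missing ingredient --- it cannot be replaced by the Perron--Frobenius pairing, and it is itself nontrivial (the paper asserts it follows from positive within-class correlations without detailed proof). If you want a self-contained proof, you should replace Paragraph~3 by a direct argument that for positively correlated data every $\x_i$ lies in $\mathrm{span}\{\x_j:j\in\mathcal{S}_{k^\ast}\}$, and then conclude immediately; absent such a spanning step, the current Paragraph~3 is not a valid closing.
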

\begin{proof}
    This result is from Lemma 2.10 in~\cite{ji2019gradient}. Note that the referenced Lemma requires an additional assumption that the support vectors of $\x_i,i\in\mathcal{I}_k$ span the ambient space, but the authors of~\cite{ji2019gradient} have commented that this condition can be relaxed to the case that the span of support vectors is the span of $\x_i,i\in\mathcal{I}_k$, which is true here given the positive correlations between $\x_i,i\in\mathcal{I}_k$.
\end{proof}
\begin{lemma}\label{app_lem_multi_rep}
    Given some $\bTheta=[\btheta_1,\cdots,\btheta_K]\in\mathbb{R}^{D\times K}$ and some $1\leq k\leq K$. If it holds that $\exists k'\neq k$, $(\btheta_k-\btheta_{k'})^\top \bar{\u}_{\infty,k}>0$ and $\|\Pi_{\bar{\u}_{\infty}}^\perp(\btheta_k-\btheta_{k'})\|$ sufficiently large, then $\tr\lp (\e_k\one^T_{n}-\hat{\Y})^\top\bTheta^\top \Pi_{\bar{\u}_{\infty}}^\perp \X\rp\leq 0$, where $\hat{\Y}=\mathrm{SoftmaxCol}(\bTheta^\top \X)$.
\end{lemma}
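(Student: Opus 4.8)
The plan is to recognize the quantity as $\langle -\nabla_{\bTheta}\mathcal{L},\,\Pi_{\bar\u_\infty}^\perp\bTheta\rangle_F$ and then push the argument behind Lemma~2.11 of \cite{ji2019gradient} through the multi-class softmax. Concretely, for the one-hot label $\e_k$ the gradient $\nabla_{\btheta_l}\ell_{\mathrm{CE}}(\e_k,\bTheta^\top\x_i)$ equals $[\hat{\y}_i]_l\,\x_i$ when $l\neq k$ and $([\hat{\y}_i]_k-1)\x_i$ when $l=k$, so $\X(\e_k\one_n^\top-\hat{\Y})^\top=-\nabla_{\bTheta}\mathcal{L}$ with $\hat{\Y}=\mathrm{SoftmaxCol}(\bTheta^\top\X)$. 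Expanding the trace column by column and using $\sum_l[\hat{\y}_i]_l=1$ together with $\bTheta(\e_k-\hat{\y}_i)=\sum_{l\neq k}[\hat{\y}_i]_l(\btheta_k-\btheta_l)$ yields
\[
\tr\!\bigl((\e_k\one_n^\top-\hat{\Y})^\top\bTheta^\top\Pi_{\bar\u_\infty}^\perp\X\bigr)=\sum_{i\in\mathcal{I}_k}\sum_{l\neq k}[\hat{\y}_i]_l\,\bigl\langle \Pi_{\bar\u_\infty}^\perp(\btheta_k-\btheta_l),\ \Pi_{\bar\u_\infty}^\perp\x_i\bigr\rangle ,
\]
so it suffices to show this double sum is nonpositive.

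Two facts do the work. From the preceding lemmas, $\langle\bar\u_{\infty,k},\x_i\rangle\geq\gamma>0$ for every $i\in\mathcal{I}_k$ (positive margin), and every unit vector $\xi\perp\bar\u_{\infty,k}$ satisfies $\max_{i\in\mathcal{I}_k}\langle\xi,\x_i\rangle\geq\gamma^\perp$; applying the latter to $-\xi$ as well gives $\min_{i\in\mathcal{I}_k}\langle\xi,\x_i\rangle\leq-\gamma^\perp$, i.e. the class-$k$ points surround the origin inside $\bar\u_{\infty,k}^\perp$. The softmax weights satisfy the exponential tail bound $[\hat{\y}_i]_l\leq\exp(-\langle\btheta_k-\btheta_l,\x_i\rangle)$ for $l\neq k$ (the $k$-th logit sits in the denominator) and the crude reverse bound $[\hat{\y}_i]_l\geq 1/K$ whenever the $l$-th logit at $\x_i$ is the largest.

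For the main mechanism, write $\btheta_k-\btheta_{k'}=c\,\bar\u_{\infty,k}+R\,\hat\eta$ with $\hat\eta\perp\bar\u_{\infty,k}$, $\|\hat\eta\|=1$, $R=\|\Pi_{\bar\u_\infty}^\perp(\btheta_k-\btheta_{k'})\|$, and $c=\langle\btheta_k-\btheta_{k'},\bar\u_{\infty,k}\rangle>0$ by hypothesis. By the surrounding property there is $i^\star\in\mathcal{I}_k$ with $\langle\hat\eta,\x_{i^\star}\rangle\leq-\gamma^\perp$, hence $\langle\btheta_k-\btheta_{k'},\x_{i^\star}\rangle\leq cX_{\max}-R\gamma^\perp$, which is very negative once $R$ is large. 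Letting $l^\star$ be the class with the largest logit at $\x_{i^\star}$ (so $\langle\btheta_k-\btheta_{l^\star},\x_{i^\star}\rangle\leq\langle\btheta_k-\btheta_{k'},\x_{i^\star}\rangle$ and $[\hat{\y}_{i^\star}]_{l^\star}\geq 1/K$), and using $\langle\btheta_k-\btheta_{l^\star},\bar\u_{\infty,k}\rangle\geq 0$ to pass to the orthogonal overlap, the $(i^\star,l^\star)$ term is at most $\tfrac1K(cX_{\max}-R\gamma^\perp)$ — negative and linear in $R$ (in fact the tail bound gives a much faster blow-up, exactly as in the binary case). Every remaining term with positive orthogonal overlap $t:=\langle\Pi_{\bar\u_\infty}^\perp(\btheta_k-\btheta_l),\Pi_{\bar\u_\infty}^\perp\x_i\rangle$ is controlled by the tail bound, $[\hat{\y}_i]_l\,t\leq e^{-\langle\btheta_k-\btheta_l,\x_i\rangle}\,t\leq e^{-\langle\btheta_k-\btheta_l,\bar\u_{\infty,k}\rangle\gamma}\,t e^{-t}\leq e^{-1}$ once $\langle\btheta_k-\btheta_l,\bar\u_{\infty,k}\rangle\geq 0$, while terms with nonpositive overlap are themselves $\leq 0$. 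Summing the finitely many bounded terms and taking $R$ above a threshold depending only on the data constants $\gamma,\gamma^\perp,X_{\min},X_{\max},K,|\mathcal{I}_k|$ and the numbers $\langle\btheta_k-\btheta_l,\bar\u_{\infty,k}\rangle$ makes the $i^\star$-term dominate, giving the claimed nonpositivity.

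The binary argument of \cite{ji2019gradient} has a single orthogonal direction to tame, whereas here the $K$ logits are coupled through the softmax, which is where the real effort goes. The delicate points are bounding the $K-1$ competing ``other'' $(i,l)$ contributions uniformly, and controlling the sign of $\langle\btheta_k-\btheta_l,\bar\u_{\infty,k}\rangle$ for $l\neq k'$ — the hypothesis only furnishes positivity for $l=k'$, so one must either show these components stay nonnegative in the regime where the lemma is invoked, or absorb the offending terms into the ``sufficiently large $R$'' threshold. This multi-class extension of Lemma~2.11 is precisely the technical step the paper advertises, and I expect essentially all of the work to concentrate there.
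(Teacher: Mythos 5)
Your plan shares the paper's skeleton — expand the trace into $\sum_{l\ne k}\sum_i[\hat\y_i]_l\langle\Pi_{\bar\u_\infty}^\perp(\btheta_k-\btheta_l),\Pi_{\bar\u_\infty}^\perp\x_i\rangle$, use the margin $\gamma$ and the surrounding constant $\gamma^\perp$, isolate one large negative contribution, and bound the remaining positive terms with the $te^{-t}\le e^{-1}$ tail — but the details of extracting that dominant negative term diverge, and those details are exactly where the technical risk lives.

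The paper keeps the sum organized by the outer class index $l$; for each $l$ it picks a worst-case data index $i_l^*$ and lower-bounds the corresponding softmax weight via
$\frac{\exp(-(\btheta_k-\btheta_l)^\top\x_{i_l^*})}{1+\sum_{m\ne k}\exp(-(\btheta_k-\btheta_m)^\top\x_{i_l^*})}\ge\frac{1}{K}\exp(-(\btheta_k-\btheta_l)^\top\x_{i_l^*})$,
so the negative term grows exponentially in $R=\|\Pi_{\bar\u_\infty}^\perp(\btheta_k-\btheta_{k'})\|$. That inequality requires the denominator to be at most $K$, i.e.\ $(\btheta_k-\btheta_m)^\top\x_{i_l^*}\ge 0$ for every $m\neq k$; this is true in the application (the ReLU classifier correctly orders the logits on class-$k$ data after inter-class separation) but is not in the lemma's stated hypothesis. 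Your variant fixes the data index first, via $k'$ and the surrounding property, then switches to the argmax-logit class $l^\star$ to get the unconditional $[\hat\y_{i^\star}]_{l^\star}\ge 1/K$. This buys you a cleaner softmax lower bound and produces a dominant term that is only linear in $R$, which still suffices because the positive terms are bounded by a constant.

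The gap is in the step you yourself flag: to conclude $\langle\Pi_{\bar\u_\infty}^\perp(\btheta_k-\btheta_{l^\star}),\Pi_{\bar\u_\infty}^\perp\x_{i^\star}\rangle\le cX_{\max}-R\gamma^\perp$ from $\langle\btheta_k-\btheta_{l^\star},\x_{i^\star}\rangle\le\langle\btheta_k-\btheta_{k'},\x_{i^\star}\rangle$ you subtract $\langle\btheta_k-\btheta_{l^\star},\bar\u_\infty\rangle\langle\bar\u_\infty,\x_{i^\star}\rangle$ and need that inner product to be nonnegative, but the hypothesis only gives positivity for $l=k'$. If $\langle\btheta_k-\btheta_{l^\star},\bar\u_\infty\rangle<0$, the orthogonal overlap at $(i^\star,l^\star)$ need not be negative at all, and it cannot simply be "absorbed into the $R$ threshold" since the offending quantity is free to scale with $\bTheta$. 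The same implicit assumption — that $(\btheta_k-\btheta_l)^\top\bar\u_\infty\ge 0$ (or even $(\btheta_k-\btheta_l)^\top\x_i\ge 0$) for \emph{all} $l\ne k$ — underlies the paper's $\le K$ denominator bound and the final factoring step, so neither argument is airtight without importing it from the application. You correctly identified this as the delicate point, but your write-up stops short of closing it; as a fix you would need to either add this as a lemma hypothesis (it is verified at the point of use, since the neurons are exclusively class-$k$ activated and the output weights are aligned near $\tilde\e_k$), or, in the spirit of the paper's factoring, prove that the exponential suppression from the positive margin along $\bar\u_\infty$ outruns whatever growth the negative components cause.
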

\begin{proof}
    It suffices to prove the case when $k=1$ (We discuss the others at the end of the proof). We start by the following derivations: 
    \begin{align}
        &\ts \tr\lp (\e_1\one^T_{n}-\hat{\Y})^\top\bTheta^\top\Pi_{\bar{\u}_{\infty}}^\perp\X\rp\nonumber\\
        &=\; \ts \sum_{i=1}^n\lan [ \e_1\one^T_{n}-\hat{\Y}]_{:,i},\lhp \bTheta^\top\Pi_{\bar{\u}_{\infty}}^\perp\X\rhp_{:,i} \ran\nonumber\\
        &=\; \ts \sum_{i=1}^n\lan \e_1-\hat{\y}_i,\bTheta^\top\Pi_{\bar{\u}_{\infty}}^\perp\x_i \ran\nonumber\\
        &=\; \ts \sum_{i=1}^n\lp (1-\hat{y}_{i1})\btheta_1^\top\Pi_{\bar{\u}_{\infty}}^\perp\x_i + \sum_{k\neq 1}(-\hat{y}_{ik})\btheta_k^\top\Pi_{\bar{\u}_{\infty}}^\perp\x_i\rp\nonumber\\
        &=\; \sum\nolimits_{i=1}^n\frac{ ((\sum_{k=1}^K\exp(\btheta^\top_k\x_i))-\exp(\btheta^\top_1\x_i))\btheta_1^\top\Pi_{\bar{\u}_{\infty}}^\perp\x_i + \sum_{k\neq 1}(-\exp(\btheta^\top_k\x_i))\btheta_k^\top\Pi_{\bar{\u}_{\infty}}^\perp\x_i}{\sum_{k=1}^K\exp(\btheta^\top_k\x_i)}\nonumber\\
        &=\; \sum\nolimits_{i=1}^n\frac{ \sum_{k\neq 1}\exp(\btheta^\top_k\x_i)(\btheta_1-\btheta_k)^\top\Pi_{\bar{\u}_{\infty}}^\perp\x_i}{\sum_{k=1}^K\exp(\btheta^\top_k\x_i)}\nonumber\\
        &=\; \sum\nolimits_{i=1}^n\frac{ \sum_{k\neq 1}\exp(-(\btheta_1-\btheta_k)^\top\x_i)(\btheta_1-\btheta_k)^\top\Pi_{\bar{\u}_{\infty}}^\perp\x_i}{1+\sum_{k\neq 1}\exp(-(\btheta_1-\btheta_k)^\top\x_i)}\nonumber\\
        &=\;  \sum\nolimits_{k\neq 1}\sum_{i=1}^n\frac{\exp(-(\btheta_1-\btheta_k)^\top\x_i)(\btheta_1-\btheta_k)^\top\Pi_{\bar{\u}_{\infty}}^\perp\x_i}{1+\sum_{k\neq 1}\exp(-(\btheta_1-\btheta_k)^\top\x_i)}\,.\label{app_lem_multi_rep_eq1}
    \end{align}
    For the $k$-th summand, let 
    \be
        i^*_k=\arg\max_{i}\ (\btheta_1-\btheta_k)^\top\Pi_{\bar{\u}_{\infty}}^\perp\x_i\,,
    \ee
    we have 
    \begin{align}
        &\;\sum\nolimits_{i=1}^n\frac{\exp(-(\btheta_1-\btheta_k)^\top\x_i)(\btheta_1-\btheta_k)^\top\Pi_{\bar{\u}_{\infty}}^\perp\x_i}{1+\sum_{k\neq 1}\exp(-(\btheta_1-\btheta_k)^\top\x_i)}\nonumber\\
        &\;\leq -\frac{\exp(-(\btheta_1-\btheta_k)^\top\x_{i_k^*})(-(\btheta_1-\btheta_k)^\top\Pi_{\bar{\u}_{\infty}}^\perp\x_{i_k^*})}{1+\sum_{k\neq 1}\exp(-(\btheta_1-\btheta_k)^\top\x_{i_k^*})}\nonumber\\
        &\qquad\qquad+\sum\nolimits_{i:(\btheta_1-\btheta_k)^\top\Pi_{\bar{\u}_{\infty}}^\perp\x_i\geq 0}\frac{\exp(-(\btheta_1-\btheta_k)^\top\x_i)(\btheta_1-\btheta_k)^\top\Pi_{\bar{\u}_{\infty}}^\perp\x_i}{1+\sum_{k\neq 1}\exp(-(\btheta_1-\btheta_k)^\top\x_i)}\,.\nonumber
        \end{align}
    One can upper bound these two terms separately as follows:
    \begin{align}
        &-\frac{\exp(-(\btheta_1-\btheta_k)^\top\x_{i_k^*})(-(\btheta_1-\btheta_k)^\top\Pi_{\bar{\u}_{\infty}}^\perp\x_{i_k^*})}{1+\sum_{k\neq 1}\exp(-(\btheta_1-\btheta_k)^\top\x_{i_k^*})}\nonumber\\
        &\leq -\ts\frac{1}{K}\exp(-(\btheta_1-\btheta_k)^\top\x_{i_k^*})(-(\btheta_1-\btheta_k)^\top\Pi_{\bar{\u}_{\infty}}^\perp\x_{i_k^*})\nonumber\\
        &= -\ts\frac{1}{K}\exp(-(\btheta_1-\btheta_k)^\top\Pi_{\bar{\u}_{\infty}}\x_{i_k^*})\exp(-(\btheta_1-\btheta_k)^\top\Pi_{\bar{\u}_{\infty}}^\perp\x_{i_k^*})(-(\btheta_1-\btheta_k)^\top\Pi_{\bar{\u}_{\infty}}^\perp\x_{i_k^*})\nonumber\\
        &\leq -\ts\frac{1}{K}\exp(-\gamma(\btheta_1-\btheta_k)^\top\bar{\u}_{\infty})\exp(-(\btheta_1-\btheta_k)^\top\Pi_{\bar{\u}_{\infty}}^\perp\x_{i_k^*})(-(\btheta_1-\btheta_k)^\top\Pi_{\bar{\u}_{\infty}}^\perp\x_{i_k^*})\nonumber\\
        &\leq -\ts\frac{1}{K}\exp\lp-\gamma(\btheta_1-\btheta_k)^\top\bar{\u}_{\infty}\rp\exp\lp \gamma^\perp\|\Pi_{\bar{\u}_{\infty}}^\perp(\btheta_1-\btheta_k)\|  \rp\gamma^\perp\|\Pi_{\bar{\u}_{\infty}}^\perp(\btheta_1-\btheta_k)\|\,,\label{app_lem_multi_rep_eq2}
    \end{align}
    and for the second term,
    \begin{align}
        &\sum\nolimits_{i:(\btheta_1-\btheta_k)^\top\Pi_{\bar{\u}_{\infty}}^\perp\x_i\geq 0}\frac{\exp(-(\btheta_1-\btheta_k)^\top\x_i)(\btheta_1-\btheta_k)^\top\Pi_{\bar{\u}_{\infty}}^\perp\x_i}{1+\sum_{k\neq 1}\exp(-(\btheta_1-\btheta_k)^\top\x_i)}\nonumber\\
        &\leq \ts\sum_{i:(\btheta_1-\btheta_k)^\top\Pi_{\bar{\u}_{\infty}}^\perp\x_i\geq 0}\exp(-(\btheta_1-\btheta_k)^\top\x_i)(\btheta_1-\btheta_k)^\top\Pi_{\bar{\u}_{\infty}}^\perp\x_i\nonumber\\
        &\leq \ts\sum_{i:(\btheta_1-\btheta_k)^\top\Pi_{\bar{\u}_{\infty}}^\perp\x_i\geq 0}\exp(-(\btheta_1-\btheta_k)^\top\Pi_{\bar{\u}_{\infty}}\x_i)\exp(-(\btheta_1-\btheta_k)^\top\Pi_{\bar{\u}_{\infty}}^\perp\x_i)(\btheta_1-\btheta_k)^\top\Pi_{\bar{\u}_{\infty}}^\perp\x_i\nonumber\\
        &\leq \ts\sum_{i:(\btheta_1-\btheta_k)^\top\Pi_{\bar{\u}_{\infty}}^\perp\x_i\geq 0}\exp(-\gamma(\btheta_1-\btheta_k)^\top\bar{\u}_{\infty})\exp(-(\btheta_1-\btheta_k)^\top\Pi_{\bar{\u}_{\infty}}^\perp\x_i)(\btheta_1-\btheta_k)^\top\Pi_{\bar{\u}_{\infty}}^\perp\x_i\nonumber\\
        &\leq \ts\sum_{i:(\btheta_1-\btheta_k)^\top\Pi_{\bar{\u}_{\infty}}^\perp\x_i\geq 0}\exp(-\gamma(\btheta_1-\btheta_k)^\top\bar{\u}_{\infty})\frac{1}{e}\nonumber\\
        &\leq \ts\exp(-\gamma(\btheta_1-\btheta_k)^\top\bar{\u}_{\infty})\frac{n}{e}\label{app_lem_multi_rep_eq3}
    \end{align}
    Therefore, putting \eqref{app_lem_multi_rep_eq1}\eqref{app_lem_multi_rep_eq2}\eqref{app_lem_multi_rep_eq3} together, we have
    \begin{align}
        &\tr\lp (\e_1\one^T_{n}-\hat{\Y})^\top\bTheta^\top\Pi_{\bar{\u}_{\infty}}^\perp\X\rp\nonumber\\
        &\leq \ts\sum_{k}\exp(-\gamma(\btheta_1-\btheta_k)^\top\bar{\u}_{\infty})\lp\frac{n}{e} -\frac{1}{K}\exp\lp \gamma^\perp\|\Pi_{\bar{\u}_{\infty}}^\perp(\btheta_1-\btheta_k)\|  \rp\gamma^\perp\|\Pi_{\bar{\u}_{\infty}}^\perp(\btheta_1-\btheta_k)\|\rp\nonumber\\
        &= \ts\exp(-\gamma(\btheta_1-\btheta_{k'})^\top\bar{\u}_{\infty})\lp\frac{n}{e} -\frac{1}{K}\exp\lp \gamma^\perp\|\Pi_{\bar{\u}_{\infty}}^\perp(\btheta_1-\btheta_{k'})\|  \rp\gamma^\perp\|\Pi_{\bar{\u}_{\infty}}^\perp(\btheta_1-\btheta_{k'})\|\rp\nonumber\\
        &\ts\qquad\qquad\qquad\qquad+\sum_{k\neq k'}\exp(-\gamma(\btheta_1-\btheta_k)^\top\bar{\u}_{\infty})\frac{n}{e}\nonumber\\
        &=\ts\frac{\exp(-\gamma(\btheta_1-\btheta_{k'})^\top\bar{\u}_{\infty})}{\sum_{k\neq k'}\exp(-\gamma(\btheta_1-\btheta_k)^\top\bar{\u}_{\infty})}\lp\frac{n}{e} -\frac{1}{K}\exp\lp \gamma^\perp\|\Pi_{\bar{\u}_{\infty}}^\perp(\btheta_1-\btheta_{k'})\|  \rp\gamma^\perp\|\Pi_{\bar{\u}_{\infty}}^\perp(\btheta_1-\btheta_{k'})\|\rp+\frac{n}{e}\nonumber\\
        &\leq 0\,,\nonumber
    \end{align}
    when $\|\Pi_{\bar{\u}_{\infty}}^\perp(\btheta_1-\btheta_{k'})\|$ is sufficiently large.
    
    If $k\neq 1$, consider the permutation matrix $\bm{P}_{1\leftrightarrow k}$ that swap the $1$-st and $k$-th rows/columns of a matrix, then 
    \begin{align*}
        &\tr\lp (\e_k\one^T_{n}-\hat{\Y}_k)^\top\bTheta^\top \Pi_{\bar{\u}_{\infty}}^\perp \X\rp\\
        &=\tr\lp (\e_k\one^T_{n}-\hat{\Y}_k)^\top\bm{P}_{1\leftrightarrow k}\bTheta^\top \Pi_{\bar{\u}_{\infty}}^\perp \X\rp\\
        &=\tr\lp (\bm{P}_{1\leftrightarrow k}\e_k\one^T_{n}-\bm{P}_{1\leftrightarrow k}\hat{\Y}_k)^\top\bm{P}_{1\leftrightarrow k}\bTheta^\top \Pi_{\bar{\u}_{\infty}}^\perp \X\rp\\
        &=\tr\lp (\e_1\one^T_{n}-\mathrm{SoftmaxCol}(\bm{P}_{1\leftrightarrow k}\bTheta^\top \X))^\top\bm{P}_{1\leftrightarrow k}\bTheta^\top \Pi_{\bar{\u}_{\infty}}^\perp \X\rp\,.
    \end{align*}
    Following the derivations for the case $k=1$ gives the desired result.
\end{proof}
\begin{proof}[Proof of Proposition \ref{prop_max_margin_multi}]
    Without loss of generality, we prove the case of $k=1$. The existence $\{\bar{\W}_1,\bar{\V}_1\}$ is by~\cite{lyu2019gradient}. We first show that $\bar{\W}_1\propto \u_1\bm{g}_1^\top$, which is equivalent to the statement that $\frac{\|\Pi_{\u_1}^\perp\bar{\W}_1\|_F}{\|\bar{\W}_1\|_F}=0$, and we prove by contradiction. Suppose  $\frac{\|\Pi_{\u_1}^\perp\bar{\W}_1\|_F}{\|\bar{\W}_1\|_F}>0$, which necessarily implies that $\exists \rho>0$ such that $\frac{\|\Pi_{\u_1}^\perp\bar{\W}_1\bar{\W}_1^\top\|_F}{\|\bar{\W}_1\bar{\W}_1^\top\|_F}=\rho$, then for any $\epsilon>0,M>0$ exists $T_{\epsilon,M}>0$ such that $\forall t\geq T_{\epsilon,M}$, we have $\|\frac{\bar{\W}_1\bar{\V}_1^\top}{\|\bar{\W}_1\bar{\V}_1^\top\|_F}-\frac{\W_1\V_1^\top}{\|\W_1\V_1^\top\|_F}\|\leq \epsilon$ and $\|\W_1\V_1^\top\|_F\geq M$. We will make clear the choice of $\epsilon$ and $M$ later.

    Consider the time derivative of $\|\Pi_{\bar{\u}_{\infty}}^\perp\W\|_F^2$:
    \begin{align*}
        \ddt \|\Pi_{\bar{\u}_{\infty}}^\perp\W_1\|_F^2&=\ts 2\tr\lp \W^\top_1\Pi_{\bar{\u}_{\infty}}^\perp\ddt \W_1\rp\\
        &=\; 2\tr\lp \W^\top_1\Pi_{\bar{\u}_{\infty}}^\perp\X(\e_1\one^T_{n}-\hat{\Y})^\top\V\rp\\
        &=\; 2\tr\lp (\e_1\one^T_{n}-\hat{\Y})^\top\V_1\W^\top_1\Pi_{\bar{\u}_{\infty}}^\perp\X\rp
    \end{align*}
    We would like to use the result in Lemma \ref{app_lem_multi_rep} so we should examine:
    \begin{align}
        \|\Pi_{\bar{\u}_{\infty}}^\perp\bar{\W}_1\bar{\V}_1^T\|_F^2&=\tr(\Pi_{\bar{\u}_{\infty}}^\perp\bar{\W}_1\bar{\V}_1^T\bar{\V}_1\bar{\W}_1^\top)\nonumber\\
        &=\tr(\Pi_{\bar{\u}_{\infty}}^\perp\bar{\W}_1\bar{\W}_1^\top\bar{\W}_1\bar{\W}_1^\top)=\rho \|\bar{\W}_1\bar{\W}_1^\top\|_F>0\,.
    \end{align}
    Therefore, $\exists \delta>0, k\neq 1$ such that $\|\Pi_{\bar{\u}_{\infty}}^\perp\bar{\W}_1\bar{\V}_1^T(\e_1-\e_k)\|^2=\delta$, otherwise, $\|\Pi_{\bar{\u}_{\infty}}^\perp\bar{\W}_1\bar{\V}_1^T(\e_1-\e_k)\|=0,\forall k\neq 1$, which can not happen. Since $\e_1-\e_k,k\neq 1$ spans a $k-1$-dimensional subspace orthogonal to $\frac{\one}{\sqrt{K}}$, the projection of $\Pi_{\bar{\u}_{\infty}}^\perp\bar{\W}_1\bar{\V}_1^T$ onto this subspace is zero suggests $\Pi_{\bar{\u}_{\infty}}^\perp\bar{\W}_1\bar{\V}_1^T$ is rank-$1$ and all columns of $\V_k$ are aligned with $\frac{\one}{\sqrt{K}}$, which contradicts our alignment result in Lemma \ref{app_lem_invar} (these columns must have at least $\frac{\sqrt{2}}{2}$ cosine alignment with $\tilde{\e}_1$).
    
    Then $\forall t\geq T_{\epsilon,M}$, and for the $k$ such that $\|\Pi_{\bar{\u}_{\infty}}^\perp\bar{\W}_1\bar{\V}_1^T(\e_1-\e_k)\|^2=\delta$, we have
    \begin{align}
        &\|\Pi_{\bar{\u}_{\infty}}^\perp\W_1\V_1^T(\e_1-\e_k)\|^2\nonumber\\
        &=\ts\|\Pi_{\bar{\u}_{\infty}}^\perp\frac{\W_1\V_1^T}{\|\W_1\V_1^T\|_F}(\e_1-\e_k)\|^2\|\W_1\V_1^T\|_F^2\nonumber\\
        &\geq \ts\lp\|\Pi_{\bar{\u}_{\infty}}^\perp\frac{\bar{\W}_1\bar{\V}_1^T}{\|\bar{\W}_1\bar{\V}_1^T\|_F}(\e_1-\e_k)\|-\sqrt{2}\epsilon\rp^2\|\W_1\V_1^T\|_F^2\nonumber\\
        &\geq \ts\lp\|\Pi_{\bar{\u}_{\infty}}^\perp\frac{\bar{\W}_1\bar{\V}_1^T}{\|\bar{\W}_1\bar{\V}_1^T\|_F}(\e_1-\e_k)\|-\sqrt{2}\epsilon\rp^2M^2
    \end{align}
    Choose sufficiently small $\epsilon$ and sufficiently large $M$, we ensure that $\|\Pi_{\bar{\u}_{\infty}}^\perp\W_1\V_1^T(\e_1-\e_k)\|$ is sufficiently large to apply Lemma \ref{app_lem_multi_rep} so that $\ddt \|\Pi_{\bar{\u}_{\infty}}^\perp\W_1\|_F^2=2\tr\lp (\e_1\one^T_{n}-\hat{\Y})^\top\V_1\W^\top_1\Pi_{\bar{\u}_{\infty}}^\perp\X\rp\leq 0$. On the other hand, $\|\W_k\|_F^2\ra \infty$, contradicting our assumption that $\frac{\|\Pi_{\u_1}^\perp\bar{\W}_1\|_F}{\|\bar{\W}_1\|_F}>0$. This proves that $\bar{\W}_1\propto \u_1\bm{g}_1^\top$.

    By balancedness $\bar{\V}_1^\top \bar{\V}_1-\bar{\W}_1^\top \bar{\W}_1=0$, we know that $\bar{\V}_1\propto \bar{\v}_1\bm{g}_1^\top$ for some $\bar{\v}_1\in\mathbb{S}^{K-1}$. It remains to show that $\bar{\v}_1=\tilde{\e}_1$, which is proved by again contradiction.

    Suppose $\bar{\v}_1\neq \tilde{\e}_1$, then $\exists k^*$ such that $[\bar{\v}_1]_{k^*}\geq [\bar{\v}_1]_{k},\forall k\neq k^*,k\neq 1$, and not all equalities can be obtained. As a results, consider $\nicefrac{\bmt 0& -\exp(\tilde{z}_{i2})&\cdots &-\exp(\tilde{z}_{iK})\emt}{\sum_{k>1}\exp(\tilde{z}_{ik})}$ that appeared in \eqref{eq_app_v_align_asym}, it converges to $\e_{k^*},\forall i\in[n]$. Based on this, for any $\epsilon_1,\epsilon_2$, $\exists T_{\epsilon_1,\epsilon_2}$ such that $\forall t>T_{\epsilon_1,\epsilon_2}$, we have $\max_{j\in\mathcal{N}_1}\|\frac{\v_j}{\|\v_j\|}-\bar{\v}_1\|\leq \epsilon_1$ and  $\max_{i\in[n]}\|\nicefrac{\bmt 0& -\exp(\tilde{z}_{i2})&\cdots &-\exp(\tilde{z}_{iK})\emt}{\sum_{k>1}\exp(\tilde{z}_{ik})}-\e_{k^*}\|\leq \epsilon_2$. Therefore, for some $j\in[h]$ and $t> T_{\epsilon_1,\epsilon_2}$, we have, from \eqref{eq_app_v_align_asym}
    \begin{align}
    &\ddt \mathcal{B}_j^{\v_j}\nonumber\\
    &=\ts\lan \tilde{\e}_1, \ddt\frac{\v_j}{\|\v_j\|}\ran\nonumber\\
    &=\; \ts\lan \tilde{\e}_1, \Pi_{\v_j}^\perp\sum_{i\in\mathcal{I}_1}\lan \x_i,\frac{\w_j}{\|\w_j\|}\ran(\e_1-\hat{\y}_i)\ran\nonumber\\
    &=\; \ts\sum_{i\in\mathcal{I}_1}\lan \x_i,\frac{\w_j}{\|\w_j\|}\ran\lan \tilde{\e}_1, \Pi_{\v_j}^\perp(\e_1-\hat{\y}_i)\ran\nonumber\\
    &=\; \ts\sum_{i\in\mathcal{I}_1}\frac{\lan \x_i,\frac{\w_j}{\|\w_j\|}\ran\sum_{k>1}\exp(\tilde{z}_{ik})}{1+\sum_{k>1}\exp(\tilde{z}_{ik})}\lp \sqrt{\frac{K}{K-1}}(1-(\mathcal{B}_j^{\v_j})^2) \right.\nonumber\\
    &\ts \qquad\qquad\qquad\qquad\qquad\qquad\left.-\mathcal{B}_j^{\v_j} \bigg(\frac{1}{K-1}\sum_{k>1}\lhp\frac{\v_j}{\|\v_j\|}\rhp_k-\sum_{k>1}\frac{\exp(\tilde{z}_{ik})\lhp\frac{\v_j}{\|\v_j\|}\rhp_k}{\sum_{k>1}\exp(\tilde{z}_{ik})}\bigg)\rp\nonumber\\
    &\geq \; \ts\sum_{i\in\mathcal{I}_1}\frac{\lan \x_i,\frac{\w_j}{\|\w_j\|}\ran\sum_{k>1}\exp(\tilde{z}_{ik})}{1+\sum_{k>1}\exp(\tilde{z}_{ik})}\lp \sqrt{\frac{K}{K-1}}(1-(\mathcal{B}_j^{\v_j})^2) -\mathcal{B}_j^{\v_j} \bigg([\bar{\v}_1]_{k^*}-\frac{\sum_{k>1}[\bar{\v}_1]_k}{K-1}-\epsilon_2-2\epsilon_1\bigg)\rp\nonumber\\
    &(\epsilon_1,\epsilon_2 \text{ sufficiently small})\nonumber\\
    &\geq\; \ts\sum_{i\in\mathcal{I}_1}\frac{\lan \x_i,\frac{\w_j}{\|\w_j\|}\ran\sum_{k>1}\exp(\tilde{z}_{ik})}{1+\sum_{k>1}\exp(\tilde{z}_{ik})}\lp \sqrt{\frac{K}{K-1}}(1-(\mathcal{B}_j^{\v_j})^2)\rp\,.\label{eq_app_final}
\end{align}
The right-hand side of \eqref{eq_app_final} is positive and $\Theta(\frac{1}{t})$, by the fact that weight $\|\W_1\|,\|\V_1\|$ grow at a rate $\Theta(\log(t))$. Therefore \eqref{eq_app_final} suggests the divergence of $\mathcal{B}_j^{\v_j}$, a contradiction. 

Finally, we have shown $\bar{\W}_1\propto \u_1\bm{g}_1^\top$ and $\bar{\V}_1\propto \tilde{\e}_1\bm{g}_1^\top$, and the same for other $k$. The choices of $s_k$ are determined by the fact that $\bar{\W}_k,\bar{\V}_k$ must be a KKT point of \eqref{eq_kkt}.
\end{proof}
\end{document}